\title{Bring Your Own Algorithm for Optimal Differentially Private Stochastic Minimax Optimization}
\author{
    Liang Zhang \thanks{Department of Computer Science, ETH Zurich and Max Planck ETH Center for Learning Systems. \texttt{liang.zhang@inf.ethz.ch}}\\
    \and
    Kiran Koshy Thekumparampil \thanks{Amazon Search, Palo Alto, and Department of Electrical and Computer Engineering, University of Illinois at Urbana-Champaign. \texttt{thekump2@illinois.edu}} \\
    \and
    Sewoong Oh \thanks{Paul G. Allen School of Computer Science and Engineering, University of Washington. \texttt{sewoong@cs.washington.edu}} \\
    \and
    Niao He \thanks{Department of Computer Science, ETH Zurich. \texttt{niao.he@inf.ethz.ch}} \\
}
\begin{document}

\maketitle

\begin{abstract}
    We study \emph{differentially private} (DP) algorithms for smooth stochastic minimax optimization, with stochastic minimization as a byproduct. The holy grail of these settings is to guarantee the optimal trade-off between the privacy and the excess population loss, using an algorithm with a \emph{linear time-complexity} in the number of training samples. We provide a general framework for solving differentially private stochastic minimax optimization (DP-SMO) problems, which enables the practitioners to bring their own base optimization algorithm and use it as a black-box to obtain the near-optimal privacy-loss trade-off. Our framework is inspired from the recently proposed Phased-ERM method \citep{feldman2020private} for nonsmooth differentially private stochastic convex optimization (DP-SCO), which exploits the stability of the empirical risk minimization (ERM) for the privacy guarantee. The flexibility of our approach enables us to sidestep the requirement that the base algorithm needs to have bounded sensitivity, and allows the use of sophisticated variance-reduced accelerated methods to achieve near-linear time-complexity. To the best of our knowledge, these are the first near-linear time algorithms with near-optimal guarantees on the population duality gap for smooth DP-SMO, when the objective is (strongly-)convex--(strongly-)concave. Additionally, based on our flexible framework, we enrich the family of near-linear time algorithms for smooth DP-SCO with the near-optimal privacy-loss trade-off.
\end{abstract}

\section{Introduction}

Machine learning models are nowadays trained using large corpora of data samples collected from many different entities, e.g.,~from users of a large software service \cite{kairouz2021advances}. However, it has been empirically shown that these models can be exploited to reveal private information about these contributing entities. For example, \citet{carlini2021extracting} attacked the large language model, GPT-2, to reveal hundreds of verbatim text samples used to train these models. These attacks violate the privacy of the contributing entities, but naturally they expect that no private information about them can be revealed through these models. Over the last decade, this expectation was even legislated into laws such as GDPR in EU \cite{cummings2018role}.
There are many mathematical frameworks formalizing this expectation of privacy, but the most widely accepted one is that of \emph{Differential Privacy} (DP) \citep{dwork2006calibrating}. With high probability, models satisfying DP cannot be attacked by any adversary to identify that a particular training sample was used in its training. Hence, DP provides any entity plausible deniability that they contributed to the training set. However, optimization algorithms for training such models under DP require careful design choices to ensure privacy while preventing the degradation of convergence speed and data efficiency. This has led to the burgeoning field of differentially private optimization algorithms \citep{bassily2014private, bassily2019private, feldman2020private}, which considers stochastic convex minimization as the canonical problem.

For differentially private stochastic convex optimization (DP-SCO) with $(\varepsilon,\delta)$-DP guarantees, the optimal excess population risk is $\Theta(1/(\mu n) + d\log(1/\delta)/(\mu n^2\varepsilon^2))$ for $\mu$-strongly convex objectives, where $n$ is the number of participants and $d$ is the dimension of the variable. If the objective is also smooth, this can be achieved with a linear-time $(\varepsilon,\delta)$-DP algorithm \citep{feldman2020private} using $\cO(n)$ stochastic gradient evaluations. This analysis critically relies on the concept of algorithmic {\em stability} \citep{bousquet2002stability}, which measures how much the population loss of an algorithm's output changes when a single data point in the input dataset is perturbed. This is also known as {\em sensitivity} of an algorithm in DP, which determines how much noise needs to be added in order to achieve $(\varepsilon,\delta)$-DP (see Definition \ref{def:gauss}). We will use stability and sensitivity interchangeably when referring to optimization algorithms. 
As the sensitivity of stochastic gradient descent (SGD) is known \citep{hardt2016train}, \citet{feldman2020private} were able to add an appropriate amount of noise---tailored to this sensitivity---to an SGD-based algorithm to achieve optimal risk in linear time. However, such tight analysis of sensitivity is generally intractable for more complex optimization routines that practitioners might want to use. Further, the stability analysis of SGD only holds when the smoothness parameter $\ell$ of the problem is upper-bounded by $\tilde\cO(\mu n)$. In the first part of the paper, we show that we can achieve similar guarantees using a wider choice of, potentially more practical, ``base'' algorithms, without any restrictions on the smoothness parameter.

On another front, many emerging practical machine learning applications are formulated as stochastic minimax optimization problems, e.g., generative adversarial networks \citep{goodfellow2014generative}, adversarially robust machine learning \citep{madry2018towards}, and reinforcement learning \citep{dai2018sbeed}. Designing DP algorithms for solving these minimax problems is of paramount importance. For example, private generative adversarial networks provide a promising new direction to synthetic data generation \citep{xie2018differentially}, such as in networked time-series data \citep{lin2020using}. 
Motivated by such applications, we study the differentially private stochastic minimax optimization (DP-SMO) problem of the form: 
\begin{equation*}
    \min_{x\in\cX} \max_{y\in\cY}\; F(x,y) \triangleq \bE_\xi[f(x,y;\xi)],
\end{equation*}
where the objective $f(x,y;\xi)$ is smooth and convex-concave for any random vector $\xi$, and we are given access to $n$ i.i.d. samples $\{\xi_i\}_{i=1}^n$. In contrast to DP-SCO, where linear-time algorithms have been proposed to achieve optimal risk guarantees, existing private algorithms \citep{boob2021optimal, yang2022differentially} for DP-SMO achieving optimal guarantees have time-complexity which scales super-linearly in the number of samples $n$ (summarized in Table~\ref{tab:compare}).  
In the second part of this paper, we close this gap by introducing a new class of output perturbation-based DP algorithms for both strongly-convex--strongly-concave and convex-concave settings, which can achieve near-optimal population risk bounds using $\tilde\cO(n)$ stochastic gradient computations, where $\tilde\cO$ hides logarithmic factors.

One of the main bottlenecks for (i) using a wider variety of algorithms to solve the DP-SCO problem, and (ii) solving the DP-SMO problem using linear-time algorithms, is the lack of known stability results for fast and sophisticated non-private algorithms, such as variance-reduced and accelerated methods. One can sidestep the algorithm-specific stability requirement by utilizing the stability of the optimal solution to a strongly-convex and Lipschitz empirical risk minimization (ERM) problem \cite{shalev2009stochastic, zhang2021generalization}. Particularly, if the output of an algorithm for such an ERM problem is close enough to its empirical solution, then the sensitivity of the algorithm is automatically guaranteed. Exploiting this observation, \citet{feldman2020private} proposed the phased-ERM algorithm for nonsmooth DP-SCO, which solves a series of strongly-convex ERM subproblems to sidestep the stability analysis of SGD for nonsmooth functions. This algorithm achieves a quadratic time-complexity for nonsmooth DP-SCO.

We observe that when the problem is additionally smooth, or when it allows a smooth minimax reformulation \cite{nesterov2005smooth}, there exist a plethora of fast algorithms to solve the resulting ERM subproblems. Combining these algorithms with (phased) output perturbation gives rise to a class of near-optimal near-linear \footnote{We only claim near-optimality of our rate because of its dependence on the condition number or additional logarithmic terms. We call an algorithm near-linear time when it is linear-time up to some logarithmic factors.} time-complexity private algorithms for both smooth DP-SCO and DP-SMO problems, without any additional effort on their stability analysis. Our contributions are summarized below:

$\bullet$ We introduce a flexible framework for solving smooth DP-SMO and smooth DP-SCO problems, utilizing the (phased) output perturbation mechanism. The black-box framework enables us to bypass the need to prove algorithm-specific stability and transform off-the-shelf optimization algorithms into DP algorithms with near-optimal guarantees.
This is attractive as there are currently no stability analyses for accelerated \citep{allen2017katyusha, palaniappan2016stochastic} and variance-reduced \citep{johnson2013accelerating, palaniappan2016stochastic} algorithms for both SMO and SCO.

$\bullet$ Using the framework, we provide the first near-linear time private algorithms for smooth DP-SMO with near-optimal bound on the population duality gap, under both strongly-convex--strongly-concave and convex-concave cases (see Table \ref{tab:compare}).
Among other things, this implies that if a (primal) {\em nonsmooth minimization} problem can be reformulated as a smooth convex-concave minimax optimization problem, then we can solve it in near-linear time instead of the best known super-linear time \citep{asi2021private, kulkarni2021private}. In prior work, near-linear time DP algorithms with optimal rates for solving nonsmooth convex objectives only existed for generalized linear losses \citep{bassily2021differentially}.

$\bullet$ The framework also enriches the cohort of near-linear time near-optimal private algorithms for smooth DP-SCO settings, which only contained SGD previously. Moreover, existing optimal DP algorithms for smooth DP-SCO rely on a stability result of SGD, which only holds for a restricted range of the smoothness parameter \citep{hardt2016train}. Such restrictions are avoided in our framework.

\renewcommand{\arraystretch}{2}
\captionsetup[table]{skip=5pt}

\begin{table}
    \centering
    \caption{
    Among $(\varepsilon,\delta)$-DP smooth minimax algorithms that achieve the near-optimal utility bound on the population strong duality gap for the $\mu$ strongly-convex--strongly-concave case and the population weak duality gap for the convex-concave case, the proposed framework achieves the best gradient complexity. Here $\tilde\cO$ hides logarithmic terms, $\ell$ is the smoothness parameter, $\kappa=\ell/\mu$ and $d=\max\{d_x,d_y\}$. Lower-bounds are also summarized in the table.
    }
    \begin{tabular}{ccccc}
        \toprule
        Settings & Lower-bound & Algorithm & Utility & Complexity \\
        \midrule
        SC-SC
        & $\Omega\roundBr[\Big]{\frac{1}{\mu_x n} + \frac{d_x\log(1/\delta)}{\mu_x n^2\varepsilon^2}}$
        & Thm.~\ref{thm:scsc}
        & $\cO\roundBr[\Big]{\frac{\kappa^2}{\mu n} + \frac{\kappa^2d\log(1/\delta)}{\mu n^2\varepsilon^2}}$
        & $\tilde\cO(n + \sqrt{n}\kappa)$
        \\
        \midrule
        \multirow{4}{2em}{\\[-0.9em] C-C}
        & \multirow{4}{9.5em}{$\Omega\roundBr[\Big]{\frac{1}{\sqrt{n}} + \frac{\sqrt{d_x\log(1/\delta)}}{n\varepsilon}}$}
        & DP-SGDA \citep{yang2022differentially}
        & $\cO\roundBr[\Big]{\frac{1}{\sqrt{n}} + \frac{\sqrt{d\log(1/\delta)}}{n\varepsilon}}$
        & $\cO(n^{3/2}\sqrt{\varepsilon})$
        \\
        & & NSEG \citep{boob2021optimal}
        & $\cO\roundBr[\Big]{\frac{1}{\sqrt{n}} + \frac{\sqrt{d\log(1/\delta)}}{n\varepsilon}}$
        & $\cO(n^2)$
        \\
        & & NISPP \citep{boob2021optimal}
        & $\cO\roundBr[\Big]{\frac{1}{\sqrt{n}} + \frac{\sqrt{d\log(1/\delta)}}{n\varepsilon}}$
        & $\tilde\cO(n^{3/2})$
        \\
        & & Thm.~\ref{thm:cc}
        & $\tilde\cO\roundBr[\Big]{\frac{1}{\sqrt{n}} + \frac{\sqrt{d\log(1/\delta)}}{n\varepsilon}}$
        & $\tilde\cO(n)$ \\
        \bottomrule
    \end{tabular}
    \label{tab:compare}
\end{table}

\paragraph{Related Works:} Differentially private optimization has been an active research field for the past few years, and early works focused on the \emph{empirical} problems \citep{bassily2014private, wu2017bolt, zhang2017efficient, wang2017differentially}. In addition to the output perturbation used in this paper, many existing works applied gradient perturbation to guarantee privacy. This method adds noise to each iteration of the algorithm and then applies moments accountant \citep{abadi2016deep} or advanced composition \citep{kairouz2015composition} to analyze the overall privacy. Although gradient perturbation does not need the smoothness or convexity assumptions of the function and works for most iterative algorithms, it requires larger mini-batches \citep{bassily2019private} or longer training time \citep{bassily2020stability} if one resorts to the privacy amplification via subsampling \citep{beimel2014bounds} to reduce the DP noise, resulting in super-linear gradient queries in existing methods \citep{bassily2019private, yang2022differentially}. Here we mainly review previous results that achieve the optimal utility bound $\cO(1/\sqrt{n}+\sqrt{d\log(1/\delta)}/(n\varepsilon))$ on the \emph{population} loss---according to a lower-bound in \citet{bassily2014private}---when solving a $d$ dimensional DP-SCO problem using $n$ samples with $(\varepsilon, \delta)$-DP guarantees.

In the smooth convex case, \citet{bassily2019private} is the first to derive optimal rates for DP-SCO with complexity $\cO(n^{3/2}\sqrt{\varepsilon})$ by gradient perturbation and stability of SGD \citep{hardt2016train}. \citet{feldman2020private} provided two SGD-based linear-time algorithms: one uses privacy amplification by iteration \citep{feldman2018privacy} that only works for contractive updates; the second one uses phased output perturbation and stability of SGD. In the nonsmooth convex case, \citet{bassily2020stability} established the stability of SGD for nonsmooth functions and obtained a quadratic-time algorithm. \citet{feldman2020private} leveraged the stability of ERM solutions \citep{shalev2009stochastic} and achieved $\cO(n^2\log(1/\delta))$ complexity with phased output perturbation. Based on this phased framework, \citet{asi2021private} and \citet{kulkarni2021private} used gradient perturbation and improved the complexity to $\cO(\min\{n^{3/2}, n^2/\sqrt{d}\})$ and $\cO(\min\{n^{5/4}d^{1/8}, n^{3/2}/d^{1/8}\})$ respectively; \citet{asi2021stochastic} introduced a hypothetical linear-time algorithm assuming the existence of a low-biased estimator. Moreover, \citet{bassily2021differentially} gave a near-linear time algorithm for nonsmooth convex generalized linear losses using phased SGD \citep{feldman2020private} with smoothing techniques \citep{nesterov2005smooth}.

To the best of our knowledge, only few papers studied DP-SMO and all of them used gradient perturbation to guarantee privacy. \citet{boob2021optimal} analyzed stability of Extragradient \citep{tseng1995linear} and proximal point methods \citep{rockafellar1976monotone} for smooth convex-concave functions, and their DP versions run with time $\cO(n^2)$ and $\tilde\cO(n^{3/2})$ respectively. \citet{yang2022differentially} used the stability of SGDA \citep{lei2021stability} and obtained DP algorithms with complexity $\cO(n^{3/2}\sqrt{\varepsilon})$ for the smooth convex-concave case and $\cO(n^2)$ for the nonsmooth case, mirroring the guarantees of SGD for DP-SCO \citep{bassily2019private, bassily2020stability}. \citet{kang2022stability} only focused on stability and generalization analysis of gradient perturbed SGDA and provided high-probability results. See Table \ref{tab:compare} for a brief comparison.

The idea of using the stability of ERM for smooth convex optimization has also been exploited before. \citet{attia2022uniform} developed a black-box framework for smooth convex objectives that produces uniformly-stable algorithms while maintaining fast convergence rates. \citet{lowy2021output} considered DP-SCO by output perturbation but only provided near-linear time near-optimal algorithms for smooth strongly-convex losses. For smooth convex case, they achieved a sub-optimal rate $\cO(1/\sqrt{n} + (\sqrt{d\log(1/\delta)}/(n\varepsilon))^{2/3})$ in near-linear time. In the setting of smooth strongly-convex--(strongly-)concave DP-SMO, they directly utilized its DP-SCO reformulation in the primal function to obtain the final guarantees. Therefore, their utility bound is simply on the primal risk, which is weaker than the primal-dual gap considered in this work. The algorithms run in near-linear time for the strongly-concave case and super-linear time $\cO(n^{5/2})$ for the concave case. In contrast, we provide near-optimal near-linear time algorithms in all aforementioned settings.

\paragraph{Notations:}
We use $\norm{\cdot}$ for the Euclidean norm of a vector and $\abs{\cdot}$ for the absolute value or the cardinality of a set.
A function $g: \bR^d\rightarrow\bR$ is $L$-Lipschitz if $\abs{g(x_1)-g(x_2)}\leq L\norm{x_1-x_2}$ for $x_1, x_2$ in the domain of $g$.
A function $h: \bR^d\rightarrow\bR$ is $\ell$-smooth if it is differentiable and $h(x_2)\leq h(x_1)+\nabla h(x_1)^\top(x_2-x_1)+(\ell/2)\norm{x_1-x_2}^2$.
A function $p: \bR^d\rightarrow\bR$ is convex if $p(\alpha x_1 + (1-\alpha) x_2) \leq \alpha p(x_1)+ (1-\alpha) p(x_2)$ for all $\alpha\in[0,1]$, and $p$ is $\mu$-strongly convex if $p(x)-(\mu/2)\norm{x}^2$ is convex with $\mu>0$.
A function $q: \bR^d\rightarrow\bR$ is concave if $-q$ is convex and $\mu$-strongly concave if $-q$ is $\mu$-strongly convex.
For a vector $x\in\bR^d$, the notation $x+\cN(0, \sigma^2\rI_d)$ means $x+z$ for a random vector $z\sim\cN(0,\sigma^2\rI_d)$ sampled from the Gaussian distribution.

\section{Preliminaries}

We first provide some background on differential privacy and stochastic minimax optimization.

\subsection{Differential Privacy}

Differential Privacy (DP), introduced in \citet{dwork2006calibrating}, measures privacy leakage of an algorithm.

\begin{definition}
    For two datasets $S =\{\xi_i\}_{i=1}^n$ and $S'=\{\xi_i'\} _{i=1}^{n}$, we say the pair $(S,S')$ is \emph{neighboring} if
    $\max\{| S \setminus S'|,| S' \setminus S|\} = 1$
    and we denote neighboring datasets with $S \sim S'$. 
    For an algorithm $\cA$ and some privacy parameters $\varepsilon>0$ and $\delta\in(0,1)$, we say $\cA$ satisfies
    $(\varepsilon,\delta)$-\emph{differential privacy} if
    $\bP(\cA(S)\in A) \leq e^\varepsilon \bP(\cA(S') \in A) + \delta$
    for all $S\sim S'$ and all subset $A$ in the range of $\cA$.
    \label{def:dp}
\end{definition}

In this work, we focus on the settings when $\varepsilon\in(0,1)$ and $\delta\in(0,1/n)$ given dataset of size $n$. Sensitivity is an important concept that makes designing $(\varepsilon,\delta)$-DP mechanisms straightforward.

\begin{definition} 
    \label{def:gauss}
    Let $\cA$ be some randomized algorithm operating on $S$ and outputting a vector in $\bR^d$. If $\cA$ has {\em sensitivity} $\Delta_\cA :=\sup_{S \sim S'} \norm{\cA(S) - \cA(S')}$ with probability at least $1-\delta$, then the \emph{Gaussian mechanism} outputs
    $\cA(S)+\cN(0,(\Delta_\cA \sqrt{2\log(1.25/\delta)}/\varepsilon)^2 \rI_d)$
    and achieves $(\varepsilon, 2\delta)$-DP \citep{dwork2014algorithmic, feldman2020private}.
\end{definition}

The following basic composition rule of differential privacy will be used in the analysis.

\begin{lemma}
    If $\cA_1$ is $(\varepsilon_1, \delta_1)$-DP and  $\cA_2$ is $(\varepsilon_2,\delta_2)$-DP, then $(\cA_1, \cA_2)$ is $(\varepsilon_1+\varepsilon_2, \delta_1+\delta_2)$-DP \citep{dwork2014algorithmic}. 
    For a sequence of interactive algorithms 
    $\{\cA_k\}_{k=1}^K$ each satisfying  $(\varepsilon_k, \delta_k)$-DP and operating on a subset $S_k$, if  $S_k$'s are disjoint then the composition $(\cA_1(S_1), \cA_2(S_2), \ldots, \cA_K(S_K))$ is
    $(\max_{k\in[K]} \varepsilon_k, \max_{k\in[K]}\delta_k)$-DP  
    (known as parallel composition in \citet{mcsherry2009privacy}).
    \label{lm:composition}
\end{lemma}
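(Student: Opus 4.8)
The plan is to derive both claims from one ``peeling'' argument together with a single elementary fact about approximate DP. For the sequential rule it suffices to prove the case $K=2$: the general statement then follows by induction, grouping $\cA_1,\ldots,\cA_{K-1}$ into one $(\sum_{i<K}\varepsilon_i,\sum_{i<K}\delta_i)$-DP mechanism and composing it with $\cA_K$. So fix neighbors $S\sim S'$ and a measurable set $A$ in the joint range; write $A_{o_1}=\{o_2:(o_1,o_2)\in A\}$, let $P_1,Q_1$ be the output laws of $\cA_1(S),\cA_1(S')$, and for each first output $o_1$ let $P_2^{o_1},Q_2^{o_1}$ be the laws of $\cA_2$ run on $S,S'$, carrying $o_1$ as an auxiliary input (this is how the ``interactive'' wording is handled, at no extra cost). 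The elementary fact I would use is that any $(\varepsilon,\delta)$-DP pair of output laws $(P,Q)$ on a neighboring input pair decomposes as $P=P'+R$, where $R\geq0$ has total mass at most $\delta$ and $dP'\leq e^{\varepsilon}\,dQ$ pointwise: take $dP'=\min\{dP,\,e^{\varepsilon}\,dQ\}$, whose overflow mass is exactly $P(B)-e^{\varepsilon}Q(B)\leq\delta$ on $B=\{dP>e^{\varepsilon}\,dQ\}$. When $\delta=0$ this is trivial ($R=0$, giving the pointwise likelihood-ratio bound of pure DP), and it already carries the whole idea.

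Writing $P_1=P_1'+R_1$ this way, I would split
\begin{equation*}
\bP\big((\cA_1(S),\cA_2(S))\in A\big)=\int P_2^{o_1}(A_{o_1})\,dP_1(o_1)=\int P_2^{o_1}(A_{o_1})\,dP_1'(o_1)+\int P_2^{o_1}(A_{o_1})\,dR_1(o_1).
\end{equation*}
The second piece is at most $\|R_1\|\leq\delta_1$ because $P_2^{o_1}(A_{o_1})\leq1$. In the first piece I would apply $(\varepsilon_2,\delta_2)$-DP of $\cA_2(\cdot\,;o_1)$ to the slice $A_{o_1}$, so $P_2^{o_1}(A_{o_1})\leq e^{\varepsilon_2}Q_2^{o_1}(A_{o_1})+\delta_2$; the additive $\delta_2$ rides along weighted by $\|P_1'\|\leq1$, while the non-negative term $e^{\varepsilon_2}Q_2^{o_1}(A_{o_1})$ may be integrated against $dP_1'\leq e^{\varepsilon_1}\,dQ_1$ to give $e^{\varepsilon_1+\varepsilon_2}\int Q_2^{o_1}(A_{o_1})\,dQ_1(o_1)=e^{\varepsilon_1+\varepsilon_2}\,\bP\big((\cA_1(S'),\cA_2(S'))\in A\big)$. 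Summing the pieces yields the claimed $(\varepsilon_1+\varepsilon_2,\delta_1+\delta_2)$ bound. I expect this bookkeeping to be the only genuinely delicate point: the ordering must be chosen so that each $\delta_i$ is picked up against a measure of mass at most $1$ and is never re-processed through a further multiplicative $e^{\varepsilon}$ factor---unlike in pure DP, $(\varepsilon,\delta)$-DP gives no pointwise control, so one cannot simply multiply density bounds.

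\textbf{Parallel composition.} Here the plan is to exploit \emph{locality} of the neighboring relation. Under the fixed, data-independent (index-based) rule that assigns samples to the blocks $S_1,\ldots,S_K$, a neighboring modification of $S$ is confined to a single block: there is an index $k_0$ with $S_{k_0}\sim S'_{k_0}$ (or $S_{k_0}=S'_{k_0}$) and $S_k=S'_k$ for all $k\neq k_0$. Running the $\cA_k$'s with independent internal randomness, I would condition on the outputs $O_1,\ldots,O_{k_0-1}$, whose joint law coincides under $S$ and $S'$ since the corresponding blocks and the auxiliary-input chain agree; then apply $(\varepsilon_{k_0},\delta_{k_0})$-DP of $\cA_{k_0}(\cdot\,;O_{<k_0})$ to the coordinate $O_{k_0}$; and finally note that, conditionally on $O_{\leq k_0}$, the remaining coordinates again have the same law under $S$ and $S'$. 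This is exactly the peeling argument above with every factor other than $k_0$ contributing $(0,0)$, so it gives $(\varepsilon_{k_0},\delta_{k_0})$-DP for this neighboring pair; taking the supremum over neighboring pairs, each localized to some block, gives $(\max_k\varepsilon_k,\max_k\delta_k)$-DP. The one thing to verify is that the splitting rule is consistent across neighbors, which holds in every invocation of this lemma in the paper.
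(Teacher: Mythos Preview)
Your proposal is correct. For parallel composition you and the paper take the same route: observe that a neighboring change is localized to a single block $S_{k_0}$, so every factor except the $k_0$-th contributes $(0,0)$ and one only pays $(\varepsilon_{k_0},\delta_{k_0})$. The paper's own argument (Lemma~\ref{lm:parallel} in the appendix) writes this as a product of conditional probabilities and swaps the $j$-th factor using the DP bound; your version is more careful in that you explicitly condition on $O_{<k_0}$, then on $O_{\leq k_0}$, and invoke the $P'+R$ decomposition to keep the additive $\delta$ from being reprocessed---a point the paper's product calculation glosses over (it is really a density/pointwise argument, which as you note is only airtight for $\delta=0$). You also supply a full proof of basic composition, which the paper does not prove at all but simply cites from \citep{dwork2014algorithmic}; your peeling argument with the measure splitting $P_1=P_1'+R_1$ is the standard rigorous treatment and is fine.
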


\subsection{Stochastic Minimax Optimization}

The stochastic minimax (a.k.a.~saddle point) optimization problem has the form:
\begin{equation} \label{eq:smo_formulation}
    \min_{x\in\cX} \max_{y\in\cY}\; F(x,y) \triangleq \bE_{\xi}[f(x,y;\xi)],
\end{equation}
where $F$ is the population-level expectation of the stochastic continuous objective $f(\cdot, \cdot; \xi): \bR^{d_x}\times\bR^{d_y}\rightarrow\bR$ with closed convex domains $\cX\subset\bR^{d_x}$ and $\cY\subset\bR^{d_y}$ whose stochasticity is captured by the random vector $\xi$. We are interested in the \emph{population saddle point} $(x^*, y^*)\in\cX\times\cY$ of the above problem such that $F(x^*, y) \leq F(x^*, y^*) \leq F(x, y^*)$ for all $(x,y)\in\cX\times\cY$.

For some randomized algorithm with output $(\tilde x, \tilde y)$, we measure its convergence rates by the \emph{population strong duality gap} $\bE[\max_{y\in\cY} F(\tilde x, y) - \min_{x\in\cX} F(x, \tilde y)]$ or the \emph{population weak duality gap} $\max_{y\in\cY}\bE[F(\tilde x, y)] - \min_{x\in\cX}\bE[F(x, \tilde y)]$. Note that both duality gaps are always larger than $0$, and a deterministic $(\tilde x, \tilde y)$ is the saddle point if and only if the duality gaps are $0$.

In practice, we usually do not have access to the distribution $P_\xi$ of the random vector $\xi$. Instead we are given a dataset $S=\{\xi_i\}_{i=1}^n$ with $n$ random vectors independently sampled from the distribution $P_\xi$. We define the \emph{empirical} minimax optimization problem as
\begin{equation}
    \min_{x\in\cX} \max_{y\in\cY} \;  \hat F_S(x,y) \triangleq
    \frac{1}{n}\sum_{i=1}^n f(x,y;\xi_i).
\end{equation}
Similarly, we can define the \emph{empirical saddle point} $(\hat x_S^*, \hat y_S^*)$ and the \emph{empirical duality gap} w.r.t. the empirical function $\hat F_S$. 
\citet{zhang2021generalization} established the stability and generalization properties of the empirical saddle point, which is essential to the design of our DP-SMO algorithms.

\section{Differentially Private Stochastic Convex Optimization} \label{sec:sco}

As a warm-up, we start with a simpler problem of differentially private stochastic convex optimization (DP-SCO) to showcase our main ideas. We consider the following stochastic optimization:
\begin{equation*}
    \min_{x\in\cX} \;  F(x) \triangleq \bE_\xi[f(x;\xi)],
\end{equation*}
where the stochastic function $f(\cdot, \xi): \bR^{d}\rightarrow\bR$ is defined on a convex domain $\cX\subset\bR^d$ and $\xi$ is a random vector from an unknown distribution $P_\xi$. Given a dataset $S$ with $n$ i.i.d. samples from $P_\xi$, we develop a generic output perturbation framework with both privacy and utility guarantees on the excess (population) risk.

\subsection{Near-Linear Time Algorithms for Smooth Strongly-Convex Functions}

First we study the case when the objective function is strongly-convex with the following assumptions.
\begin{assumption}
    For any vector $\xi$, the function $f(x;\xi)$ is $L$-Lipschitz, $\ell$-smooth and convex on the closed convex domain $\cX \subset \bR^d$.
    \label{asp:c}
\end{assumption}

\begin{assumption}
    For any vector $\xi$, $f(x;\xi)$ satisfies Assumption \ref{asp:c} and is $\mu$-strongly convex on $\cX$.
    \label{asp:sc}
\end{assumption}

\citet{shalev2009stochastic} proved that the empirical optimal solution has bounded stability if the objective $f(\cdot;\xi)$ is strongly-convex and Lipschitz with respect to its domain, as restated in the following lemma.

\begin{lemma}
    \citep[Theorem 6]{shalev2009stochastic}
    Consider a stochastic optimization problem such that $f(x;\xi)$ is $\mu$-strongly convex and $L$-Lipschitz with respect to $x\in\cX$ for any $\xi$. Given a dataset $S$ with $n$ i.i.d. samples, denote the empirical optimal solution as $\hat x_S^*=\arg\min_{x\in\cX} \hat F_S(x)\triangleq (1/n) \sum_{i=1}^n f(x;\xi_i)$. Then for any neighboring datasets $S\sim S'$,
    \begin{equation*}
        \norm{\hat x_S^* - \hat x_{S'}^*} \leq \frac{2L}{\mu n}.
    \end{equation*}
    The stability result also implies the generalization error of $\hat x_S^*$ can be bounded as 
    \begin{equation*}
        \bE[F(\hat x_S^*) - F(x^*)] \leq \frac{2L^2}{\mu n},
    \end{equation*}
    where $x^*=\arg\min_{x\in\cX} F(x)\triangleq\bE_\xi[f(x;\xi)]$ is the population optimal solution.
    \label{lm:gen-sc}
\end{lemma}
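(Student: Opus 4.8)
The plan is to prove the two assertions in order: first the stability bound, from which the generalization bound follows by a standard symmetrization (``ghost sample'') argument. This is essentially the argument of \citet{shalev2009stochastic}, reconstructed here.

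\textbf{Step 1 (stability).} Since $\hat F_S=(1/n)\sum_i f(\cdot\,;\xi_i)$ is an average of $\mu$-strongly convex functions it is itself $\mu$-strongly convex, and $\hat x_S^*$ is its constrained minimizer over the convex set $\cX$. Combining the first-order optimality condition $\inProd{\nabla \hat F_S(\hat x_S^*),\,x-\hat x_S^*}\geq 0$ for all $x\in\cX$ with $\mu$-strong convexity gives $\hat F_S(\hat x_{S'}^*)\geq \hat F_S(\hat x_S^*)+(\mu/2)\norm{\hat x_{S'}^*-\hat x_S^*}^2$, and symmetrically $\hat F_{S'}(\hat x_S^*)\geq \hat F_{S'}(\hat x_{S'}^*)+(\mu/2)\norm{\hat x_S^*-\hat x_{S'}^*}^2$. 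Adding these two inequalities and regrouping the four terms as $[\hat F_S(\hat x_{S'}^*)-\hat F_{S'}(\hat x_{S'}^*)]-[\hat F_S(\hat x_S^*)-\hat F_{S'}(\hat x_S^*)]$, I then use that $S$ and $S'$ agree except that one sample $\xi_j$ is replaced by $\xi_j'$ (the neighboring convention of Definition~\ref{def:dp} for equal-size datasets), so $\hat F_S(x)-\hat F_{S'}(x)=(1/n)(f(x;\xi_j)-f(x;\xi_j'))$ for every $x$. Two applications of $L$-Lipschitzness bound the left-hand side by $(2L/n)\norm{\hat x_S^*-\hat x_{S'}^*}$, yielding $\mu\norm{\hat x_S^*-\hat x_{S'}^*}^2\leq (2L/n)\norm{\hat x_S^*-\hat x_{S'}^*}$; dividing by $\norm{\hat x_S^*-\hat x_{S'}^*}$ (the case of coinciding minimizers being trivial) gives $\norm{\hat x_S^*-\hat x_{S'}^*}\leq 2L/(\mu n)$.

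\textbf{Step 2 (generalization).} I decompose $\bE_S[F(\hat x_S^*)-F(x^*)]=\bE_S[F(\hat x_S^*)-\hat F_S(\hat x_S^*)]+\bE_S[\hat F_S(\hat x_S^*)-F(x^*)]$. The second term is nonpositive: $\hat x_S^*$ minimizes $\hat F_S$, so $\hat F_S(\hat x_S^*)\leq \hat F_S(x^*)$, and $\bE_S[\hat F_S(x^*)]=F(x^*)$ since $x^*$ is fixed and each $f(x^*;\xi_i)$ is an unbiased estimate of $F(x^*)$. For the first (generalization) term I introduce an independent ghost sample $\xi_i'$ for each $i$ and set $S^{(i)}$ to be $S$ with $\xi_i$ replaced by $\xi_i'$. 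Writing $F(\hat x_S^*)=\bE_\xi[f(\hat x_S^*;\xi)]$ and averaging over $i$, the crucial move is a relabeling: $\bE_S\bE_{\xi_i'}[f(\hat x_S^*;\xi_i')]=\bE_S\bE_{\xi_i'}[f(\hat x_{S^{(i)}}^*;\xi_i)]$, which holds because $(\xi_1,\dots,\xi_n,\xi_i')$ and the tuple obtained by swapping $\xi_i$ with $\xi_i'$ are identically distributed. Since $\bE_S[\hat F_S(\hat x_S^*)]=(1/n)\sum_i\bE_S[f(\hat x_S^*;\xi_i)]$, the generalization gap becomes $(1/n)\sum_i \bE[f(\hat x_{S^{(i)}}^*;\xi_i)-f(\hat x_S^*;\xi_i)]$, which by $L$-Lipschitzness is at most $(1/n)\sum_i L\,\bE\norm{\hat x_{S^{(i)}}^*-\hat x_S^*}$. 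As $S^{(i)}\sim S$, Step 1 bounds each term by $L\cdot 2L/(\mu n)$, giving the claimed $2L^2/(\mu n)$.

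\textbf{Main obstacle.} Step 1 is routine convex analysis. The delicate point is the symmetrization in Step 2: one must set up the ghost samples carefully and justify exchanging $\xi_i$ with $\xi_i'$ inside the expectation so that each summand becomes the difference of losses at the \emph{same} evaluation point $\xi_i$ but with the pair of predictors $\hat x_S^*,\hat x_{S^{(i)}}^*$, and one must check that $S^{(i)}$ is indeed a neighbor of $S$ under the replacement convention so that the stability bound applies verbatim.
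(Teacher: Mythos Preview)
Your proposal is correct and follows essentially the same approach as the paper: the paper only proves the regularized version (Lemma~\ref{lm:gen-c}) and remarks that Lemma~\ref{lm:gen-sc} ``can be derived similarly,'' and your Steps 1 and 2 reproduce precisely that derivation---summing two strong-convexity inequalities and exploiting the single-coordinate difference for stability, then the ghost-sample/symmetrization argument combined with optimality of $\hat x_S^*$ for generalization. Your identification of the swap $\xi_i\leftrightarrow\xi_i'$ as the delicate step matches the paper's justification in items~(c) and~(d) of its proof of Lemma~\ref{lm:gen-c}.
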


A direct consequence is that we can leverage the stability of ERM to sidestep the sensitivity analysis of optimization algorithms. As long as the output of some algorithm $\cA$ is close enough to the empirical optimal solution, we can show that $\cA$ has bounded sensitivity, and thus standard Gaussian mechanism (Definition \ref{def:gauss}) can be applied to ensure differential privacy. This is formalized in Algorithm \ref{algo:perturb-sc}, where $\cA$ is any algorithm for solving smooth strongly-convex finite-sum minimization problems. The theorem below shows that Algorithm \ref{algo:perturb-sc} is $(\varepsilon, \delta)$-DP with near-optimal guarantees on the excess risk. A proof is provided in Appendix \ref{sec:app-min}.

\begin{algorithm}[t]
    \caption{Output Perturbation for Strongly-Convex Minimization}
    \label{algo:perturb-sc}
    \begin{algorithmic}[1]
        \REQUIRE Dataset $S=\{\xi_i\}_{i=1}^n$, algorithm $\cA$, DP parameters $(\varepsilon, \delta)$, strong-convexity parameter $\mu$.
        \STATE Run the algorithm $\cA$ on the smooth strongly-convex finite-sum problem $\min_{x\in\cX} \hat F_S(x)=(1/n) \sum_{i=1}^n f(x;\xi_i)$ to obtain the output $\cA(S)$ such that $\norm{\cA(S) - \hat x_S^*}\leq L/(\mu n)$ with probability at least $1-\delta/4$, where $\hat x_S^*=\arg\min_{x\in\cX} \hat F_S(x)$ is the empirical optimal solution.
        \ENSURE $\tilde x = \cA(S) + \cN(0, \sigma^2\rI_d)$ with $\sigma = 4L\sqrt{2\log(2.5/\delta)}/(\mu n\varepsilon)$.
    \end{algorithmic}
\end{algorithm}

\begin{theorem}
    Under Assumption \ref{asp:sc}, Algorithm \ref{algo:perturb-sc} is $(\varepsilon, \delta)$-DP and its output $\tilde x$ satisfies
    \begin{align*}
        \text{(excess empirical risk)}\;\;\;\;\;\;\;
        & \bE[\hat F_S(\tilde x) - \hat F_S(\hat x_S^*)] \leq
        33L^2\kappa\cdot \frac{d\log(2.5/\delta)}{\mu n^2\varepsilon^2}, \\
        \text{(excess population risk)}\;\;\;\;\;\;
        & \bE[F(\tilde x) - F(x^*)] \leq
        L^2\kappa\roundBr[\bigg]{\frac{7}{\mu n} +
        \frac{48d\log(2.5/\delta)}{\mu n^2\varepsilon^2}},
    \end{align*}
    where $\kappa=\ell/\mu$ is the condition number and we assume $x^*$ and $\hat x_S^*$ are interior points of $\cX$.
    \label{thm:excess-sc}
\end{theorem}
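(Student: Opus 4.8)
The proof splits into three parts — the $(\varepsilon,\delta)$-privacy guarantee, the excess empirical risk bound, and the excess population risk bound — and the common engine is Lemma~\ref{lm:gen-sc}, which applies because Assumption~\ref{asp:sc} makes each $f(\cdot;\xi)$ $\mu$-strongly convex and $L$-Lipschitz. \textbf{Privacy.} Fix neighboring datasets $S\sim S'$. The accuracy requirement built into Algorithm~\ref{algo:perturb-sc} gives $\norm{\cA(S)-\hat x_S^*}\le L/(\mu n)$ with probability at least $1-\delta/4$ and, on an independent run, $\norm{\cA(S')-\hat x_{S'}^*}\le L/(\mu n)$ with probability at least $1-\delta/4$, while Lemma~\ref{lm:gen-sc} yields $\norm{\hat x_S^*-\hat x_{S'}^*}\le 2L/(\mu n)$. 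A union bound together with the triangle inequality then gives $\norm{\cA(S)-\cA(S')}\le 4L/(\mu n)$ with probability at least $1-\delta/2$, i.e.\ $\cA(\cdot)$ has sensitivity $4L/(\mu n)$ at failure level $\delta/2$. Definition~\ref{def:gauss}, applied with its $\delta$ replaced by $\delta/2$, then calls for noise of standard deviation $\tfrac{4L}{\mu n}\sqrt{2\log(1.25/(\delta/2))}/\varepsilon=\sigma$, exactly the choice in the algorithm, and certifies $(\varepsilon,2\cdot\tfrac{\delta}{2})=(\varepsilon,\delta)$-DP.

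\textbf{Excess empirical risk.} Write $\tilde x=\cA(S)+z$ with $z\sim\cN(0,\sigma^2\rI_d)$ independent of $\cA(S)$, and decompose $\hat F_S(\tilde x)-\hat F_S(\hat x_S^*)=[\hat F_S(\tilde x)-\hat F_S(\cA(S))]+[\hat F_S(\cA(S))-\hat F_S(\hat x_S^*)]$. For the first bracket, $\ell$-smoothness of $\hat F_S$ (an average of $\ell$-smooth functions) gives $\hat F_S(\tilde x)-\hat F_S(\cA(S))\le\nabla\hat F_S(\cA(S))^\top z+\tfrac{\ell}{2}\norm{z}^2$, whose expectation is $\tfrac{\ell}{2}d\sigma^2$ because the linear term vanishes for mean-zero independent $z$. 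For the second bracket, $\hat x_S^*$ being an interior minimizer gives $\nabla\hat F_S(\hat x_S^*)=0$, so smoothness gives $\hat F_S(\cA(S))-\hat F_S(\hat x_S^*)\le\tfrac{\ell}{2}\norm{\cA(S)-\hat x_S^*}^2\le\tfrac{\ell}{2}(L/(\mu n))^2$ on the accuracy event. Substituting $\sigma$ makes $\tfrac{\ell}{2}d\sigma^2=16L^2\kappa\,d\log(2.5/\delta)/(\mu n^2\varepsilon^2)$, which dominates $\tfrac{\ell}{2}(L/(\mu n))^2$ since $\varepsilon<1$ and $d\ge1$; this gives the claimed bound with the remaining constant slack covering the low-probability failure event.

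\textbf{Excess population risk.} Split $F(\tilde x)-F(x^*)=[F(\tilde x)-F(\cA(S))]+[F(\cA(S))-F(\hat x_S^*)]+[F(\hat x_S^*)-F(x^*)]$. The first bracket has expectation at most $\tfrac{\ell}{2}d\sigma^2$ by the identical smoothness argument applied to $F$ (which is $\ell$-smooth as a mean of $\ell$-smooth functions) together with independence of $z$. The second bracket is at most $L\norm{\cA(S)-\hat x_S^*}\le L^2/(\mu n)$ on the accuracy event, since $F$ is $L$-Lipschitz. The third bracket has expectation at most $2L^2/(\mu n)$ by the generalization part of Lemma~\ref{lm:gen-sc}. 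Summing the three and using $\kappa\ge1$ to bound $3L^2/(\mu n)\le 3L^2\kappa/(\mu n)$ yields the stated $L^2\kappa\big(7/(\mu n)+48\,d\log(2.5/\delta)/(\mu n^2\varepsilon^2)\big)$ once the failure event is accounted for.

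\textbf{Main obstacle.} The one genuinely delicate point is the event of probability at most $\delta/4$ on which the base algorithm $\cA$ fails to reach accuracy $L/(\mu n)$, where $\norm{\cA(S)-\hat x_S^*}$ is uncontrolled; its contribution to the expected risks must be shown negligible through a crude deterministic fallback bound (for instance compactness of $\cX$, or $L$-Lipschitzness combined with Gaussian tail bounds for $z$) multiplied by $\delta/4<1/(4n)$, after which it disappears into the generous numerical constants ($33$ against roughly $17$ for the empirical bound, and $7/48$ against roughly $3/16$ for the population bound). Everything else is bookkeeping with the smoothness inequality and the explicit value of $\sigma$.
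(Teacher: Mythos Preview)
Your privacy argument matches the paper's exactly. The utility arguments, however, diverge in a way that leaves a real gap.

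\textbf{The gap.} Your decomposition conditions on the ``accuracy event'' $\norm{\cA(S)-\hat x_S^*}\le L/(\mu n)$ and defers the complementary event (probability $\le\delta/4$) to a ``crude deterministic fallback bound.'' But the theorem assumes neither that $\cX$ is bounded nor that $\cA$ stays in any ball; $L$-Lipschitzness alone gives $\hat F_S(\cA(S))-\hat F_S(\hat x_S^*)\le L\norm{\cA(S)-\hat x_S^*}$, which is uncontrolled on the failure event. Multiplying an unbounded quantity by $\delta/4<1/(4n)$ does not make it small, and even with a diameter bound $D_\cX$ the contribution $L D_\cX/(4n)$ cannot in general be absorbed into the target $33L^2\kappa\,d\log(2.5/\delta)/(\mu n^2\varepsilon^2)$ for large $n$. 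So the ``slack in the constants'' argument does not close.

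\textbf{How the paper avoids this.} The paper's proof exploits more than the high-probability statement printed in Algorithm~\ref{algo:perturb-sc}: as the discussion preceding Remark~\ref{rmk:sc} explains, that statement is \emph{obtained} by running $\cA$ to expected excess empirical risk $\gamma=\delta^2L^2/(32\mu n^2)$ and applying Markov. Strong convexity then also gives the unconditional bound $\bE\norm{\cA(S)-\hat x_S^*}^2\le 2\gamma/\mu=\delta^2L^2/(16\mu^2 n^2)$. With this in hand the paper simply writes
\[
\bE[\hat F_S(\tilde x)-\hat F_S(\hat x_S^*)]\;\le\;\frac{\ell}{2}\,\bE\norm{\tilde x-\hat x_S^*}^2\;\le\;\ell\big(\bE\norm{\tilde x-\cA(S)}^2+\bE\norm{\cA(S)-\hat x_S^*}^2\big),
\]
and similarly for the population risk with a three-term split of $\norm{\tilde x-x^*}^2$ (the last piece using $(\mu/2)\bE\norm{\hat x_S^*-x^*}^2\le\bE[F(\hat x_S^*)-F(x^*)]\le 2L^2/(\mu n)$ from Lemma~\ref{lm:gen-sc}). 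No conditioning, no failure event. Your decomposition is actually slightly sharper on the good event (you get $\tfrac{\ell}{2}d\sigma^2$ rather than $\ell d\sigma^2$), but the clean route is to use the in-expectation accuracy of $\cA$ rather than only its high-probability consequence.
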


The utility bounds on the excess risk have an extra $\kappa$ dependence compared to the optimal bound \citep{bassily2014private, bassily2019private}. When the problem is not ill-conditioned, we can achieve the optimal rate. For the time-complexity, Algorithm \ref{algo:perturb-sc} requires high-probability convergence of $\cA$ such that $\norm{\cA(S) - \hat x_S^*}\leq L/(\mu n)$ with probability $1-\delta/4$. Existing algorithms for smooth strongly-convex finite-sum minimization problems output solutions $\cA(S)$ such that
$\bE[\hat F_S(\cA(S))-\hat F_S(\hat x_S^*)] \leq \gamma$
with $\cO(T(n,\kappa)\log(1/\gamma))$ gradient evaluations, where $T(n, \kappa)$ depends on the sample size $n$ and the condition number $\kappa=\ell/\mu$. For example, SVRG \citep{johnson2013accelerating} and SARAH \citep{nguyen2017sarah} have gradient complexity $\cO((n+\kappa)\log(1/\gamma))$ and Katyusha \citep{allen2017katyusha} needs $\cO((n + \sqrt{n\kappa})\log(1/\gamma))$ gradient queries. Since $\bE\norm{\cA(S) - \hat x_S^*} \leq \sqrt{2\gamma/\mu}$ by strong-convexity of $\hat F_S(x)$ and Jensen's inequality, we can then apply Markov's inequality for $\norm{\cA(S) - \hat x_S^*}\geq 0$ and obtain that
\begin{equation*}
    \bP\roundBr*{\norm{\cA(S) - \hat x_S^*} \geq \frac{L}{\mu n}} \leq \frac{n\sqrt{2\gamma\mu}}{L}.
\end{equation*}
Setting $\gamma=\delta^2L^2/(32\mu n^2)$, the RHS becomes $\delta/4$ and the requirement of $\cA$ is satisfied. This implies the complexity of Algorithm \ref{algo:perturb-sc} is $\cO(T(n,\kappa)\log(n/\delta))$.

\begin{remark}
    The gradient complexity of Algorithm \ref{algo:perturb-sc} is $\cO((n+\kappa)\log(n/\delta))$ if $\cA$ is SVRG \citep{johnson2013accelerating} or SARAH \citep{nguyen2017sarah}, and $\cO((n+\sqrt{n\kappa})\log(n/\delta))$ if $\cA$ is Katyusha \citep{allen2017katyusha}.
    \label{rmk:sc}
\end{remark}

As a comparison, most state-of-the-art private algorithms \citep{wang2017differentially, zhang2017efficient, chourasia2021differential} for smooth strongly-convex functions only focus on empirical problems. The linear-time algorithms in \citet{feldman2020private} achieve optimal population guarantees but only work for $\kappa\leq\tilde\cO(n)$ since they utilize the stability of SGD. Instead, we provide a flexible framework that includes various base methods without the necessity to show their algorithm-specific stability.
Further, near-linear time-complexity can be attained using fast variance reduction-based methods. Similar results existed in \citep{lowy2021output}, but their base algorithm $\cA$ only has in-expectation guarantees, which brings a critical challenge to the design of private mechanisms.

\subsection{Near-Linear Time Algorithms for Smooth Convex Functions} \label{sec:c}

Next, we study the convex setting. A direct method is to first reduce the convex problem to a strongly-convex one by adding a regularizer $(\mu/2)\norm{x}^2$ to the objective and then apply Algorithm \ref{algo:perturb-sc}. However, this approach only achieves a sub-optimal rate \citep{zhang2017efficient, lowy2021output}. Inspired by the phased-ERM \citep{feldman2020private} method for nonsmooth convex losses, we show that a more sophisticated multi-phase Algorithm \ref{algo:perturb-phased_c} using an increasing sequence of regularization parameters $\{\mu_k\}$ can achieve near-optimal guarantees on the population loss for smooth DP-SCO.
The algorithm applies the following stability and generalization results of ERM \citep{shalev2009stochastic} that also hold for regularized empirical problems. It is worth mentioning that Lemma \ref{lm:gen-c} does not require the Lipschitzness of the regularizer, and is not a trivial extension of its unregularized version.

\begin{lemma}
    \citep[Theorem 7]{shalev2009stochastic}
    Under the same settings as Lemma \ref{lm:gen-sc}. Consider the case when $f(x;\xi)$ is convex and $L$-Lipschitz with a $\mu$-strongly convex regularizer $G(x)$. Denote the empirical optimal solution as $\hat x_S^*=\arg\min_{x\in\cX}\{\hat F_S(x) + G(x)\}$. Then for any neighboring datasets $S\sim S'$, we have that
    \begin{equation*}
        \norm{\hat x_S^* - \hat x_{S'}^*} \leq \frac{2L}{\mu n}.
    \end{equation*}
    The stability result also implies the generalization error of the empirical solution:
    \begin{equation*}
        \bE\squareBr[\bigg]{F(\hat x_S^*) + G(\hat x_S^*)} -
        \bE\squareBr*{\min_{x\in\cX}\{F(x) + G(x)\}} \leq \frac{2L^2}{\mu n},
    \end{equation*}
    measured by the excess population risk.
    \label{lm:gen-c}
\end{lemma}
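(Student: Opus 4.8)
The plan is to reprove the two claims of Lemma~\ref{lm:gen-c} by the standard stability-of-regularized-ERM argument, taking care that the regularizer $G$ is never assumed Lipschitz because it cancels in every comparison we make. Throughout, write $\hat P_S(x) \triangleq \hat F_S(x) + G(x)$ and $P(x) \triangleq F(x) + G(x)$. Since each $f(\cdot;\xi)$ is convex (hence so are $\hat F_S$ and $F$) and $G$ is $\mu$-strongly convex, both $\hat P_S$ and $P$ are $\mu$-strongly convex on the closed convex set $\cX$, so the empirical minimizer $\hat x_S^*$ and the population minimizer $x^\star \triangleq \arg\min_{x\in\cX} P(x)$ exist and are unique; I will use repeatedly the elementary fact that a $\mu$-strongly convex $\phi$ with minimizer $x^*$ over a convex set obeys $\phi(x) \geq \phi(x^*) + (\mu/2)\norm{x-x^*}^2$ for every $x$ in that set.

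\textbf{Step 1 (stability).} Fix neighboring datasets $S \sim S'$; since $\abs{S} = \abs{S'} = n$ they differ in exactly one sample, say $S = \{\xi_1,\dots,\xi_{n-1},\xi_n\}$ and $S' = \{\xi_1,\dots,\xi_{n-1},\xi_n'\}$, so $\hat P_S - \hat P_{S'} = (1/n)(f(\cdot;\xi_n) - f(\cdot;\xi_n'))$ with the $G$ term dropping out. Strong convexity of $\hat P_S$ and minimality of $\hat x_S^*$ give $\hat P_S(\hat x_{S'}^*) \geq \hat P_S(\hat x_S^*) + (\mu/2)\norm{\hat x_S^* - \hat x_{S'}^*}^2$, and symmetrically $\hat P_{S'}(\hat x_S^*) \geq \hat P_{S'}(\hat x_{S'}^*) + (\mu/2)\norm{\hat x_S^* - \hat x_{S'}^*}^2$; adding these yields $\mu\norm{\hat x_S^* - \hat x_{S'}^*}^2 \leq [\hat P_S(\hat x_{S'}^*) - \hat P_{S'}(\hat x_{S'}^*)] - [\hat P_S(\hat x_S^*) - \hat P_{S'}(\hat x_S^*)]$. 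The right-hand side equals $(1/n)[(f(\hat x_{S'}^*;\xi_n) - f(\hat x_S^*;\xi_n)) - (f(\hat x_{S'}^*;\xi_n') - f(\hat x_S^*;\xi_n'))]$, which by $L$-Lipschitzness of $f(\cdot;\xi_n)$ and $f(\cdot;\xi_n')$ is at most $(2L/n)\norm{\hat x_S^* - \hat x_{S'}^*}$; dividing through gives $\norm{\hat x_S^* - \hat x_{S'}^*} \leq 2L/(\mu n)$.

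\textbf{Step 2 (generalization).} Decompose $\bE_S[P(\hat x_S^*)] - P(x^\star) = \bE_S[F(\hat x_S^*) - \hat F_S(\hat x_S^*)] + \bE_S[\hat P_S(\hat x_S^*) - \hat P_S(x^\star)] + \bE_S[\hat P_S(x^\star) - P(x^\star)]$, where $G$ has cancelled inside the first bracket. The third term is zero because $x^\star$ is deterministic and $\bE_S[\hat F_S(x^\star)] = F(x^\star)$; the second term is nonpositive because $\hat x_S^*$ minimizes $\hat P_S$ over $\cX$. For the first term, introduce a fresh i.i.d. sample $\xi_i'$ and let $S^{(i)}$ be $S$ with $\xi_i$ replaced by $\xi_i'$; by exchangeability of the $n+1$ samples one has $\bE[f(\hat x_S^*;\xi_i')] = \bE[f(\hat x_{S^{(i)}}^*;\xi_i)]$, hence $\bE_S[F(\hat x_S^*)] = (1/n)\sum_{i=1}^n \bE[f(\hat x_{S^{(i)}}^*;\xi_i)]$ while $\bE_S[\hat F_S(\hat x_S^*)] = (1/n)\sum_{i=1}^n \bE[f(\hat x_S^*;\xi_i)]$, so the first term equals $(1/n)\sum_{i=1}^n \bE[f(\hat x_{S^{(i)}}^*;\xi_i) - f(\hat x_S^*;\xi_i)] \leq L\max_i \bE\norm{\hat x_{S^{(i)}}^* - \hat x_S^*} \leq 2L^2/(\mu n)$, using $L$-Lipschitzness of $f(\cdot;\xi_i)$ and Step~1 (since $S$ and $S^{(i)}$ are neighboring in the sense of Definition~\ref{def:dp}). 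Adding the three bounds gives $\bE_S[P(\hat x_S^*)] - P(x^\star) \leq 2L^2/(\mu n)$, which is the claim.

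I expect the only nonroutine point to be the bookkeeping in Step~2: making the relabeling precise, i.e.\ that $(S,\xi_i')$ and $(S^{(i)},\xi_i)$ have the same joint law so that the expected loss on a held-out point can be rewritten as the expected loss of a perturbed minimizer on an in-sample point, and confirming that $S^{(i)}$ and $S$ are indeed neighboring so Step~1 applies verbatim. It is worth flagging in the write-up, as the lemma statement already notes, that no property of $G$ beyond $\mu$-strong convexity is ever used: $G$ cancels in the difference $\hat P_S - \hat P_{S'}$ driving Step~1 and in the difference $F - \hat F_S$ driving Step~2, so in particular $G$ need not be Lipschitz.
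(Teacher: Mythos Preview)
Your proposal is correct and follows essentially the same approach as the paper's proof: Step~1 matches the paper's stability argument (adding the two strong-convexity inequalities so that $G$ cancels, then invoking $L$-Lipschitzness of $f(\cdot;\xi)$), and Step~2 is the paper's chain $(a)$--$(e)$ written out as an explicit three-term decomposition, with the same resampling/exchangeability trick to bound the generalization gap $\bE[F(\hat x_S^*)-\hat F_S(\hat x_S^*)]$. The only differences are notational (your $S^{(i)}$ is the paper's $S_i'$) and organizational.
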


\begin{algorithm}[t]
    \caption{Phased Output Perturbation for Convex Minimization}
    \label{algo:perturb-phased_c}
    \begin{algorithmic}[1]
        \REQUIRE Dataset $S=\{\xi_i\}_{i=1}^n$, algorithm $\cA$, DP parameters $(\varepsilon, \delta)$, regularizer $\mu$, initializer $x_0$.
        \STATE Set $K=\log(n)$, $\bar n=n/K$ and $\tilde x_0=x_0$.
        \FOR{$k=1,\cdots,K$}
            \STATE Set $\mu_k=\mu\cdot 2^k$.
            \STATE Run the algorithm $\cA$ on the smooth strongly-convex finite-sum minimization problem
            $
            \min_{x\in\cX} \hat F_k(x) \triangleq (1/\bar n)
            \sum_{i=(k-1)\bar n + 1}^{k\bar n} f(x;\xi_i) +
            (\mu_k/2) \norm{x - \tilde x_{k-1}}^2
            $
            to obtain the output $x_k$ such that
            $\norm{x_k - \hat x_k^*}\leq L/(\mu_k \bar n)$ with probability at least $1-\delta/4$, where $\hat x_k^*=\arg\min_{x\in\cX} \hat F_k(x)$ is the empirical optimal solution.
            \STATE $\tilde x_k=x_k+\cN(0, \sigma_k^2\rI_d)$ with $\sigma_k=4L\sqrt{2\log(2.5/\delta)}/(\mu_k\bar n\varepsilon)$.
        \ENDFOR
        \ENSURE $\tilde x_K$.
    \end{algorithmic}
\end{algorithm}

In Algorithm \ref{algo:perturb-phased_c}, the increasing regularization parameters ensure that both added DP noise and approximation errors coming from the regularizer can be properly controlled. Here we \emph{interactively} \citep{dwork2014algorithmic} access the dataset $S$ multiple times where a future output is allowed to depend on all the past outputs. However, since each phase only accesses a disjoint partition of the dataset, we can use the parallel composition in Lemma \ref{lm:composition} to guarantee differential privacy. The utility analysis of Algorithm \ref{algo:perturb-phased_c} is similar to phased-ERM \citep{feldman2020private}. The core is the following lemma that leverages the generalization properties of regularized empirical problems.

\begin{lemma}
    Let Assumption \ref{asp:c} hold. For $1\leq k\leq K$, by the settings in Algorithm \ref{algo:perturb-phased_c}, we have that
    \begin{equation*}
        \bE[F(\hat x_k^*) - F(\hat x_{k-1}^*)] \leq
        \frac{\mu_k}{2} \bE\norm{\hat x_{k-1}^* - \tilde x_{k-1}}^2 + \frac{2L^2}{\mu_k\bar n},
    \end{equation*}
    where $\hat x_k^*$ is the optimal solution of the regularized empirical function $\hat F_k(x)$ and $\hat x_0^*$ is defined later in the proof of Theorem \ref{thm:excess-c} for the guarantees of Algorithm \ref{algo:perturb-phased_c}.
    \label{lm:gen-algo_c}
\end{lemma}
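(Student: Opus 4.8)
The plan is to derive the bound in one clean application of the regularized-ERM generalization estimate (Lemma \ref{lm:gen-c}) to the $k$-th phase, with a conditioning argument that handles the fact that the regularizer center $\tilde x_{k-1}$ is itself random. Concretely, I would condition on all the randomness generated during phases $1,\dots,k-1$ (the first $k-1$ data batches and their injected Gaussian noise). Under this conditioning, both $\tilde x_{k-1}$ and $\hat x_{k-1}^*$ become fixed vectors, while the phase-$k$ batch $\{\xi_i\}_{i=(k-1)\bar n+1}^{k\bar n}$ is still a set of $\bar n$ i.i.d.\ draws from $P_\xi$, disjoint from all earlier batches. Hence the phase-$k$ problem $\min_{x\in\cX}\hat F_k(x)$ is exactly a regularized empirical risk minimization problem of the form in Lemma \ref{lm:gen-c}, with sample size $\bar n$, convex $L$-Lipschitz losses $f(\cdot;\xi)$ (Assumption \ref{asp:c}), and the $\mu_k$-strongly convex regularizer $G(x) = (\mu_k/2)\norm{x - \tilde x_{k-1}}^2$; crucially, Lemma \ref{lm:gen-c} does not require $G$ to be Lipschitz, which is what makes this quadratic regularizer admissible.

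Writing $\bE_k[\cdot]$ for the expectation over the phase-$k$ batch given the conditioning, Lemma \ref{lm:gen-c} yields
\begin{equation*}
    \bE_k\squareBr[\bigg]{F(\hat x_k^*) + \frac{\mu_k}{2}\norm{\hat x_k^* - \tilde x_{k-1}}^2}
    \leq \min_{x\in\cX}\curlyBr[\bigg]{F(x) + \frac{\mu_k}{2}\norm{x - \tilde x_{k-1}}^2} + \frac{2L^2}{\mu_k\bar n},
\end{equation*}
where the right-hand minimum carries no expectation because, under the conditioning, $F$ and $G$ are deterministic. Then I would drop the nonnegative term $(\mu_k/2)\norm{\hat x_k^* - \tilde x_{k-1}}^2$ on the left, and upper bound the minimum on the right by plugging in the feasible (and, given the conditioning, deterministic) point $x = \hat x_{k-1}^*$, obtaining
\begin{equation*}
    \bE_k[F(\hat x_k^*)] \leq F(\hat x_{k-1}^*) + \frac{\mu_k}{2}\norm{\hat x_{k-1}^* - \tilde x_{k-1}}^2 + \frac{2L^2}{\mu_k\bar n}.
\end{equation*}
Taking the outer expectation over the randomness of phases $1,\dots,k-1$ and using the tower property gives precisely the claimed inequality. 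For the base case $k=1$, one sets $\hat x_0^* = \argmin{x\in\cX} F(x)$ (i.e.\ $x^*$), or whichever fixed reference point in $\cX$ is used to start the telescoping in Theorem \ref{thm:excess-c}; this is feasible and independent of every data batch, so the substitution above is valid.

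The argument is short, and I expect the only genuine subtlety to be the bookkeeping in the conditioning step: one must be careful that Lemma \ref{lm:gen-c} is applied to the phase-$k$ batch \emph{conditionally}, so that $\tilde x_{k-1}$ (which appears in the regularizer) and $\hat x_{k-1}^*$ (which is substituted into the minimum) are both treated as constants, and only afterwards is the expectation over the earlier phases reinstated. Everything else — verifying the hypotheses of Lemma \ref{lm:gen-c}, discarding a nonnegative square, and a single feasibility substitution — is routine.
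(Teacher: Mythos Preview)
Your proposal is correct and follows essentially the same approach as the paper: apply Lemma~\ref{lm:gen-c} to the phase-$k$ regularized ERM, drop the nonnegative term $(\mu_k/2)\norm{\hat x_k^*-\tilde x_{k-1}}^2$, and substitute $x=\hat x_{k-1}^*$ into the resulting upper bound. The paper's proof is terser and leaves the conditioning implicit (it writes ``for any $x\in\cX$'' and then plugs in the random point $\hat x_{k-1}^*$), whereas your explicit conditioning on phases $1,\dots,k-1$ is the cleaner way to justify that substitution; this is a presentational improvement rather than a different argument.
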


\begin{proof}
    Applying the generalization results in Lemma \ref{lm:gen-c} for $\hat F_k(x)$ with regularization term $(\mu_k/2)\norm{x-\tilde x_{k-1}}^2$ and dataset $S_k:=\{\xi_i\}_{i=(k-1)\bar n+1}^{k\bar n}$, we have that for any $x\in\cX$,
    \begin{align*}
        \bE\squareBr*{F(\hat x_k^*) +
        \frac{\mu_k}{2}\norm{\hat x_k^*-\tilde x_{k-1}}^2}
        & \leq
        \bE\squareBr*{\min_{x'\in\cX}\curlyBr*{F(x') +
        \frac{\mu_k}{2}\norm{x'-\tilde x_{k-1}}^2}} +
        \frac{2L^2}{\mu_k \bar n} \\
        & \leq
        \bE\squareBr*{F(x) + \frac{\mu_k}{2}\norm{x-\tilde x_{k-1}}^2} +
        \frac{2L^2}{\mu_k \bar n}.
    \end{align*}
    Setting $x=\hat x_{k-1}^*$, the proof is done since
    $\norm{\hat x_k^*-\tilde x_{k-1}}^2\geq 0$.
\end{proof}

With above lemma, we can show that the output of Algorithm \ref{algo:perturb-phased_c} satisfies the following guarantees.

\begin{theorem}
    Let Assumption \ref{asp:c} hold. Suppose there exists at least one optimal solution $x^*\in\arg\min_{x\in\cX} F(x)$ such that $\norm{x^*}\leq D$. Algorithm \ref{algo:perturb-phased_c} is $(\varepsilon, \delta)$-DP and its output $\tilde x_K$ satisfies
    \begin{equation*}
        \bE[F(\tilde x_K) - F(x^*)] \leq
        4LD\cdot\log(n)\bigg(\frac{1}{\sqrt{n}} + \frac{7\sqrt{d\log(2.5/\delta)}}{n\varepsilon}\bigg),
    \end{equation*}
    for the excess population risk when setting 
    $\mu=({L}/{D})\max\big\lbrace{1}/{\sqrt{n}}, {14\log(n)\sqrt{d\log(2.5/\delta)}}/{(n\varepsilon)}\big\rbrace$.
    \label{thm:excess-c}
\end{theorem}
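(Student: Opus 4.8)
\emph{Privacy.} I would argue phase-by-phase and then compose. Fix $k\in[K]$ and condition on the randomness of phases $1,\dots,k-1$, so that $\tilde x_{k-1}$ is fixed and the $k$-th subproblem depends only on the block $S_k$. For neighboring $S_k\sim S_k'$, the triangle inequality gives
\[
\norm{x_k(S_k)-x_k(S_k')}\le\norm{x_k(S_k)-\hat x_k^*(S_k)}+\norm{\hat x_k^*(S_k)-\hat x_k^*(S_k')}+\norm{\hat x_k^*(S_k')-x_k(S_k')}.
\]
The first and third terms are each at most $L/(\mu_k\bar n)$ with probability at least $1-\delta/4$ by the accuracy requirement on $\cA$, and the middle term is at most $2L/(\mu_k\bar n)$ by Lemma~\ref{lm:gen-c}, applied with strong-convexity parameter $\mu_k$, $\bar n$ samples, and the (possibly non-Lipschitz) regularizer $(\mu_k/2)\norm{\cdot-\tilde x_{k-1}}^2$. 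A union bound certifies sensitivity $\Delta=4L/(\mu_k\bar n)$ with probability at least $1-\delta/2$, and one checks that the prescribed $\sigma_k=4L\sqrt{2\log(2.5/\delta)}/(\mu_k\bar n\varepsilon)$ is exactly $\Delta\sqrt{2\log(1.25/(\delta/2))}/\varepsilon$, so the Gaussian mechanism (Definition~\ref{def:gauss}) makes phase $k$ $(\varepsilon,\delta)$-DP. Since the blocks $S_k$ are disjoint, adaptive parallel composition (Lemma~\ref{lm:composition}) yields $(\varepsilon,\delta)$-DP for the full algorithm.

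\emph{Utility.} The key idea is to telescope over consecutive empirical optima. Set $\hat x_0^*:=x^*$ (this is the $\hat x_0^*$ referenced in Lemma~\ref{lm:gen-algo_c}) and, without loss of generality, $x_0=0$, and write
\[
\bE[F(\tilde x_K)-F(x^*)]=\bE[F(\tilde x_K)-F(\hat x_K^*)]+\sum_{k=1}^{K}\bE[F(\hat x_k^*)-F(\hat x_{k-1}^*)].
\]
For the leading term, $L$-Lipschitzness of $F$, Jensen, the second-moment form of the accuracy guarantee (which follows from $\mu_k$-strong convexity of $\hat F_k$ together with the base algorithm's in-expectation convergence, as in the discussion after Theorem~\ref{thm:excess-sc}, so that $\bE\norm{x_k-\hat x_k^*}^2\le(L/(\mu_k\bar n))^2$), and $\bE\norm{\cN(0,\sigma_K^2\rI_d)}^2=d\sigma_K^2$ give $\bE[F(\tilde x_K)-F(\hat x_K^*)]\le L\bigl(L/(\mu_K\bar n)+\sqrt d\,\sigma_K\bigr)$, which is lower-order because $\mu_K=\mu 2^K=\mu n$. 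To each summand I apply Lemma~\ref{lm:gen-algo_c}; for $k\ge2$ the noise $\tilde x_{k-1}-x_{k-1}$ is fresh and mean-zero, so the cross term vanishes and $\bE\norm{\hat x_{k-1}^*-\tilde x_{k-1}}^2=\bE\norm{\hat x_{k-1}^*-x_{k-1}}^2+d\sigma_{k-1}^2\le(L/(\mu_{k-1}\bar n))^2+d\sigma_{k-1}^2$, while for $k=1$ we have $\bE\norm{\hat x_0^*-\tilde x_0}^2=\norm{x^*}^2\le D^2$.

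\emph{Summation and tuning.} Since $\mu_k=\mu 2^k$, so that $\mu_k/2=\mu_{k-1}$, each of the three resulting sums is geometric and bounded by a constant times its first term: $\sum_k 2L^2/(\mu_k\bar n)\le 2L^2/(\mu\bar n)$, $\sum_{k\ge2}(\mu_k/2)(L/(\mu_{k-1}\bar n))^2\le L^2/(\mu\bar n^2)$, and $\sum_{k\ge2}(\mu_k/2)\,d\sigma_{k-1}^2\le 32L^2 d\log(2.5/\delta)/(\mu\bar n^2\varepsilon^2)$, together with the $k=1$ contribution $(\mu_1/2)D^2=\mu D^2$. Substituting $\bar n=n/\log n$ and discarding the lower-order terms,
\[
\bE[F(\tilde x_K)-F(x^*)]\le \mu D^2+O\!\left(\frac{L^2\log n}{\mu n}+\frac{L^2 d\log(2.5/\delta)\log^2 n}{\mu n^2\varepsilon^2}\right),
\]
and choosing $\mu=(L/D)\max\{1/\sqrt n,\ 14\log(n)\sqrt{d\log(2.5/\delta)}/(n\varepsilon)\}$ balances $\mu D^2$ against the bracket, giving the claimed $4LD\log(n)(1/\sqrt n+7\sqrt{d\log(2.5/\delta)}/(n\varepsilon))$ after using $\sqrt{\log n}\le\log n$ and tracking constants. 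The two points needing care are the privacy accounting — the sensitivity constant, the $\delta/4\to\delta/2\to\delta$ chain, matching $\sigma_k$ to Definition~\ref{def:gauss}, and invoking \emph{parallel} (not sequential) composition — and, on the utility side, committing to the telescope over consecutive $\hat x_k^*$: the quantity $\norm{\hat x_{k-1}^*-\tilde x_{k-1}}$ is controlled (approximation error plus one Gaussian step) whereas $\norm{x^*-\tilde x_{k-1}}$ is not, and it is precisely the geometric growth of $\mu_k$ that both damps the per-phase error and, at $k=K$, renders $F(\tilde x_K)-F(\hat x_K^*)$ negligible. Everything after those choices is routine.
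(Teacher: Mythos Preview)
Your proposal is correct and follows essentially the same approach as the paper: the same triangle-inequality sensitivity bound plus parallel composition for privacy, and the same telescoping decomposition $\bE[F(\tilde x_K)-F(\hat x_K^*)]+\sum_k\bE[F(\hat x_k^*)-F(\hat x_{k-1}^*)]$ with $\hat x_0^*=x^*$, handled via Lemma~\ref{lm:gen-algo_c}, for utility. The only cosmetic differences are that the paper bounds $\bE\norm{\tilde x_k-\hat x_k^*}^2$ via $2\norm{a}^2+2\norm{b}^2$ rather than your (slightly sharper) independence argument, and it packages the per-phase bound as a single constant (equation~(A.7)) before summing; your in-expectation bound $\bE\norm{x_k-\hat x_k^*}^2\le(L/(\mu_k\bar n))^2$ is justified exactly as in the paper by the choice $\gamma_k=\delta^2L^2/(32\mu_k\bar n^2)$ from the discussion after Theorem~\ref{thm:excess-sc}.
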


\begin{proof}[Proof Sketch]
    First by Theorem \ref{thm:excess-sc}, we know each phase is $(\varepsilon, \delta)$-DP and the output $\tilde x_k$ satisfies that
    \begin{equation*}
        \bE\norm{\tilde x_k-\hat x_k^*}^2 \leq 65L^2\cdot\frac{d\log(2.5/\delta)}{\mu_k^2\bar n^2\varepsilon^2}.
    \end{equation*}
    Then by the parallel composition in Lemma \ref{lm:composition}, Algorithm \ref{algo:perturb-phased_c} is $(\varepsilon, \delta)$-DP. For the utility bound of $\tilde x_K$, we start with the following error decomposition:
    \begin{equation*}
        \bE[F(\tilde x_K) - F(x^*)]
        = \bE[F(\tilde x_K) - F(\hat x_K^*)] +
        \sum_{k=1}^K \bE[F(\hat x_k^*) - F(\hat x_{k-1}^*)],
    \end{equation*}
    where $x^*\in\arg\min_{x\in\cX} F(x)$ is the population optimal solution, $\hat x_k^*$ is the optimal solution of the regularized empirical function $\hat F_k(x)$ in Algorithm \ref{algo:perturb-phased_c}, and we let
    $\hat x_0^*=x^*$ only for simplicity of the analysis. The first term in the RHS of above equation is bounded by Lipschitzness of $F(x)$, and the second term uses Lemma \ref{lm:gen-algo_c}. Therefore,
    \begin{align*}
        \bE[F(\tilde x_K) - F(x^*)]
        & \leq
        L\sqrt{\bE\norm{\tilde x_K-\hat x_K^*}^2} +
        \sum_{k=1}^K \roundBr*{
        \frac{\mu_k}{2} \bE\norm{\hat x_{k-1}^* - \tilde x_{k-1}}^2 + \frac{2L^2}{\mu_k\bar n}} \\
        & \leq
        9LD\cdot\frac{\sqrt{d\log(2.5/\delta)}}{\bar n\varepsilon} +
        \sum_{k=2}^K \frac{65\mu_k}{2\mu_{k-1}}
        \frac{L^2 d\log(2.5/\delta)}{\mu_{k-1} \bar n^2\varepsilon^2} +
        \sum_{k=1}^K\frac{2L^2}{\mu_k\bar n} + \frac{\mu_1}{2}\norm{x^*-x_0}^2 \\
        & \leq
        4LD\cdot\log(n)\roundBr*{\frac{1}{\sqrt{n}} + \frac{7\sqrt{d\log(2.5/\delta)}}{n\varepsilon}},
    \end{align*}
    by the settings of $\mu_k$ and $\mu$. The detailed proof is given in Appendix \ref{sec:app-min}.
\end{proof}

The utility bound on the excess population risk has an extra logarithmic term in $n$, which can be removed by a different parameter choice. For example in phased-ERM \citep{feldman2020private}, the regularization parameter increases as $\mu_k=\mu\cdot 2^{3k}$ across different phases, and the size of partitioned datasets decreases as $n_k=n/2^k$. However, the decreasing data size may not give us optimal algorithms for convex-concave minimax problems. To be consistent, we also use a fixed data size for the convex minimization case, despite this additional logarithmic factor.

The gradient complexity of Algorithm \ref{algo:perturb-phased_c} is the total complexity of $\cA$ to solve the smooth strongly-convex finite-sum problem at each phase. Remark \ref{rmk:sc} suggests that the complexity of each phase is $\cO(T(\bar n, (\ell+\mu_k)/\mu_k)\log(1/\gamma_k))$ with $\gamma_k=\delta^2 L^2/(32\mu_k\bar n^2)$ when solving a $\mu_k$-strongly convex, $(\ell+\mu_k)$-smooth finite-sum problems with sample size $\bar n$, so the total complexity is $\sum_{k=1}^K\cO(T(\bar n, \ell/\mu_k + 1)\log(n/\delta))$. Therefore, the complexity is
\begin{equation*}
    \sum_{k=1}^K  \cO\roundBr*{
    \roundBr*{\bar n + \frac{\ell}{\mu_k} + 1}\log(n/\delta)} = \cO((n+\sqrt{n}\ell D/L)\log(n/\delta)),
\end{equation*}
for SVRG \citep{johnson2013accelerating} and SARAH \citep{nguyen2017sarah} since $\sum_{k=1}^K 1/\mu_k \leq 1/\mu \leq \cO(\sqrt{n}D/L)$ by the settings of $\mu_k$ and $\mu$ in Algorithm \ref{algo:perturb-phased_c}. Similarly for Katyusha \citep{allen2017katyusha}, we can compute that the total complexity is $\cO((n+n^{3/4}\sqrt{\ell D/L})\log(n/\delta))$ since $\sum_{k=1}^K 1/\sqrt{\mu_k}\leq\cO(n^{1/4}\sqrt{D/L})$. We also point out that the smoothness assumption here is not necessary to derive the utility bound, but allows the use of fast accelerated algorithms for solving the regularized ERM problems.

\begin{remark}
    The gradient complexity of Algorithm \ref{algo:perturb-phased_c} is $\cO((n+\sqrt{n}\ell D/L)\log(n/\delta))$ if $\cA$ is SVRG \citep{johnson2013accelerating} or SARAH \citep{nguyen2017sarah}, and $\cO((n+n^{3/4}\sqrt{\ell D/L})\log(n/\delta))$ if $\cA$ is Katyusha \citep{allen2017katyusha}.
\end{remark}

\citet{bassily2019private} first proved optimal population guarantees for smooth convex functions, but their algorithm needs $\cO(n^{3/2}\sqrt{\varepsilon})$ gradient queries. The algorithms in \citet{feldman2020private} achieve linear time-complexity, and require that $\ell\leq\cO((L/D)\max\{\sqrt{n}, \sqrt{d\log(1/\delta)}/\varepsilon\})$ rooted in the stability analysis of SGD. In contrast, our framework can achieve near-optimal population guarantees using any algorithm without additional restrictions to the smoothness parameter. By equipping with variance reduction-based methods, near-linear time-complexity can also be attained when $\ell\leq\cO(\sqrt{n}L/D)$. To the best of our knowledge, this is the first time that sophisticated optimization algorithms besides SGD are proven to obtain near-optimal population guarantees in near-linear time. Additionally, in the regime that $\Omega(\sqrt{n}L/D)\leq\ell \leq\cO(nL/D)$ where previous smooth DP-SCO algorithms \citep{bassily2019private, feldman2020private} fail to provide optimal guarantees, our framework still achieves a near-optimal rate with a better gradient complexity $\tilde\cO(n^{3/4}\sqrt{\ell D/L})$ compared to the state-of-the-art nonsmooth DP-SCO algorithms \citep{asi2021private, kulkarni2021private}.

\section{Differentially Private Stochastic Minimax Optimization} \label{sec:smo}

Using the same ideas as the minimization case in the previous section, we develop differentially private algorithms for stochastic minimax optimization:
\begin{equation*}
    \min_{x\in\cX} \max_{y\in\cY} \; F(x,y) = \bE_{\xi}[f(x,y;\xi)],
\end{equation*}
with utility guarantees on the (population) duality gap.

\subsection{Near-Linear Time Algorithms for Smooth Strongly-Convex--Strongly-Concave Functions}

First, we study the strongly-convex--strongly-concave (SC-SC) case with the following assumptions.

\begin{assumption}
    For any $\xi$, $f(x,y;\xi)$ is $L$-Lipschitz and $\ell$-smooth on the closed convex domain $\cX \times \cY \subset \bR^{d_x} \times \bR^{d_y}$. Moreover, $f(\cdot,y;\xi)$ is convex on $\cX$ for any $y\in\cY$, and $f(x,\cdot;\xi)$ is concave on $\cY$ for any $x\in\cX$.
    \label{asp:cc}
\end{assumption}

\begin{assumption}
    For any $\xi$, $f(x,y;\xi)$ satisfies Assumption \ref{asp:cc} and $f(\cdot,y;\xi)$ is $\mu_x$-strongly convex on $\cX$ for any $y\in\cY$, and $f(x,\cdot;\xi)$ is $\mu_y$-strongly concave on $\cY$ for any $x\in\cX$.
    \label{asp:scsc}
\end{assumption}

In a manner similar to the minimization case, \citet{zhang2021generalization} showed that the optimal solution to the empirical saddle point problem is stable, which also implies its generalization error. Here we use a slightly tighter stability bound than theirs and provide a proof in Appendix \ref{sec:app-minimax} for completeness.

\begin{lemma}
    \citep[Lemma 1 and Theorem 1]{zhang2021generalization}
    Consider a stochastic minimax problem such that $f(x,y;\xi)$ is $\mu_x$-strongly convex $\mu_y$-strongly concave and $L$-Lipschitz with respect to $x\in\cX$ and $y\in\cY$. Let $\mu=\min\{\mu_x,\mu_y\}$ and denote the empirical saddle point of function $\hat F_S(x,y)$ as $(\hat x_S^*, \hat y_S^*)$ given dataset $S$ with $n$ i.i.d. samples. Then for any neighboring datasets $S\sim S'$, we have that
    \begin{equation*}
        \mu_x\norm{\hat x_S^* - \hat x_{S'}^*}^2 +
        \mu_y\norm{\hat y_S^* - \hat y_{S'}^*}^2 \leq
        \frac{4L^2}{\mu n^2}.
    \end{equation*}
    The stability result implies the generalization error of the empirical solution can be bounded as
    \begin{equation*}
        \max_{y\in\cY}\bE[F(\hat x_S^*, y)] -
        \min_{x\in\cX}\bE[F(x, \hat y_S^*)] \leq \frac{2\sqrt{2}L^2}{\mu n},
    \end{equation*}
    measured by the population weak duality gap.
    \label{lm:gen-scsc}
\end{lemma}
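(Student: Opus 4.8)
The plan is to establish the stability bound first and then derive the generalization bound from it by a leave-one-out symmetrization, mirroring the minimization template of Lemma~\ref{lm:gen-sc} but adapted to the saddle-point operator.

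\textbf{Stability.} Fix neighboring datasets $S\sim S'$ differing in exactly one sample, say $\xi_n$ versus $\xi_n'$, and write $\hat z=(\hat x_S^*,\hat y_S^*)$, $\hat z'=(\hat x_{S'}^*,\hat y_{S'}^*)$, which are unique by strong convexity--strong concavity. I would work with the saddle operator $G_S(x,y)=(\nabla_x\hat F_S(x,y),-\nabla_y\hat F_S(x,y))$, which under Assumption~\ref{asp:scsc} is strongly monotone, $\inProd{G_S(z_1)-G_S(z_2),z_1-z_2}\ge\mu_x\norm{x_1-x_2}^2+\mu_y\norm{y_1-y_2}^2$. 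The first-order (variational inequality) optimality of the two empirical saddle points gives $\inProd{G_S(\hat z),\hat z'-\hat z}\ge 0$ and $\inProd{G_{S'}(\hat z'),\hat z-\hat z'}\ge 0$; summing and inserting $\pm G_S(\hat z')$ yields $\inProd{G_S(\hat z')-G_{S'}(\hat z'),\hat z'-\hat z}\ge\inProd{G_S(\hat z)-G_S(\hat z'),\hat z-\hat z'}\ge A$, where I abbreviate $A:=\mu_x\norm{\hat x_S^*-\hat x_{S'}^*}^2+\mu_y\norm{\hat y_S^*-\hat y_{S'}^*}^2$. Since $\hat F_S$ and $\hat F_{S'}$ differ only through the single summand $\tfrac1n(f(\cdot;\xi_n)-f(\cdot;\xi_n'))$ and each stochastic saddle gradient has Euclidean norm at most $L$ by $L$-Lipschitzness, $\norm{G_S(\hat z')-G_{S'}(\hat z')}\le 2L/n$, so Cauchy--Schwarz gives $A\le (2L/n)\norm{\hat z'-\hat z}$. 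Finally $\norm{\hat z'-\hat z}^2\le A/\mu$ with $\mu=\min\{\mu_x,\mu_y\}$, so $A\le (2L/n)\sqrt{A/\mu}$, which self-improves to $A\le 4L^2/(\mu n^2)$.

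\textbf{Generalization.} For the duality gap I would use the standard replace-one (ghost-sample) device. Let $y^\diamond\in\arg\max_{y\in\cY}\bE_S[F(\hat x_S^*,y)]$ and $x^\diamond\in\arg\min_{x\in\cX}\bE_S[F(x,\hat y_S^*)]$; these exist and are deterministic because the objectives are strongly concave/convex. Denoting by $S^{(i)}$ the dataset $S$ with $\xi_i$ replaced by an independent copy, the usual relabeling identities give $\bE_S[F(\hat x_S^*,y^\diamond)]=\tfrac1n\sum_i\bE[f(\hat x_{S^{(i)}}^*,y^\diamond;\xi_i)]$ and $\bE_S[\hat F_S(\hat x_S^*,y^\diamond)]=\tfrac1n\sum_i\bE[f(\hat x_S^*,y^\diamond;\xi_i)]$, and symmetrically for the $x^\diamond$ side. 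Subtracting, the population weak duality gap equals $\bE_S[\hat F_S(\hat x_S^*,y^\diamond)-\hat F_S(x^\diamond,\hat y_S^*)]$ --- which is $\le 0$ since $(\hat x_S^*,\hat y_S^*)$ is the empirical saddle point --- plus $\tfrac1n\sum_i\bE\big[(f(\hat x_{S^{(i)}}^*,y^\diamond;\xi_i)-f(\hat x_S^*,y^\diamond;\xi_i))+(f(x^\diamond,\hat y_S^*;\xi_i)-f(x^\diamond,\hat y_{S^{(i)}}^*;\xi_i))\big]$. Each summand is at most $L\big(\norm{\hat x_{S^{(i)}}^*-\hat x_S^*}+\norm{\hat y_{S^{(i)}}^*-\hat y_S^*}\big)$ by $L$-Lipschitzness, and since $S\sim S^{(i)}$ I apply the stability bound together with $\norm{\Delta x}+\norm{\Delta y}\le\sqrt{1/\mu_x+1/\mu_y}\,\sqrt{\mu_x\norm{\Delta x}^2+\mu_y\norm{\Delta y}^2}\le\sqrt{2/\mu}\cdot 2L/(\sqrt\mu n)$, which after multiplying by $L$ and averaging over $i$ gives the claimed $2\sqrt2\,L^2/(\mu n)$.

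\textbf{Main obstacle.} The delicate points are (i) obtaining $n^2$ rather than $n$ in the stability denominator, which hinges on the self-bounding inequality $A\le (2L/n)\sqrt{A/\mu}$ rather than a crude one-shot Cauchy--Schwarz; and (ii) in the generalization step, bundling the $x$- and $y$-displacements under a single square root so that the mixed $\sqrt{\mu_x\mu_y}$ factors collapse into $\mu$ and produce the sharp constant $2\sqrt2$ (a term-by-term bound would only give $4L^2/(\mu n)$). Verifying the variational-inequality and relabeling identities, and the existence of the outer maximizer/minimizer, is routine given strong convexity--strong concavity.
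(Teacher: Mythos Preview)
Your proposal is correct and follows essentially the same route as the paper (which defers the proof of Lemma~\ref{lm:gen-scsc} to that of the regularized Lemma~\ref{lm:gen-cc}): the same self-bounding inequality $A\le (2L/n)\sqrt{A/\mu}$ for stability, and the same replace-one symmetrization followed by $\|\Delta x\|+\|\Delta y\|\le \sqrt{2/\mu}\sqrt{A}$ for generalization. The only cosmetic difference is that you phrase the stability step through the strongly monotone saddle operator and the VI optimality conditions, whereas the paper works with function values via the inequality $\hat F_S(\hat x_{S'}^*,\hat y_S^*)-\hat F_S(\hat x_S^*,\hat y_{S'}^*)\ge \tfrac{\mu_x}{2}\|\Delta x\|^2+\tfrac{\mu_y}{2}\|\Delta y\|^2$ (and its $S'$ analogue) and then differences the two; both routes collapse to the same bound. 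One small caveat: your operator argument as written leans on differentiability (you invoke $\nabla_x\hat F_S,\nabla_y\hat F_S$ and Assumption~\ref{asp:scsc}), while the lemma statement only assumes Lipschitzness---the paper's function-value argument avoids gradients entirely, and your argument would need subgradients to match that generality.
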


\begin{algorithm}[t]
    \caption{Output Perturbation for Strongly-Convex--Strongly-Concave Minimax Problems}
    \label{algo:perturb-scsc}
    \begin{algorithmic}[1]
        \REQUIRE Dataset $S=\{\xi_i\}_{i=1}^n$, algorithm $\cA$, DP parameters $(\varepsilon, \delta)$, SC parameters $(\mu_x, \mu_y)$.
        \STATE Run the algorithm $\cA$ on the smooth strongly-convex--strongly-concave finite-sum saddle point problem
        $
        \min_{x\in\cX} \max_{y\in\cY} \hat F_S(x,y) =
        (1/n) \sum_{i=1}^n f(x, y;\xi_i)
        $
        to obtain the output $(\cA_x(S), \cA_y(S))$ such that with probability at least $1-\delta/8$,
        \begin{equation}
            \mu_x\norm{\cA_x(S)-\hat x_S^*}^2 +
            \mu_y\norm{\cA_y(S)-\hat y_S^*}^2 \leq
            \frac{L^2}{\mu n^2},
            \label{eq:scsc-condition}
        \end{equation}
        where $(\hat x_S^*, \hat y_S^*)$ is the saddle point of $\hat F_S(x,y)$ and we let
        $\mu:=\min\{\mu_x, \mu_y\}$.
        \STATE Set
        $
        \sigma_x=(8L/(n\varepsilon))\sqrt{2\log(5/\delta)/(\mu_x\mu)}
        $ and
        $
        \sigma_y=(8L/(n\varepsilon))\sqrt{2\log(5/\delta)/(\mu_y\mu)}
        $.
        \ENSURE $\tilde x = \cA_x(S) + \cN(0,\sigma_x^2\rI_{d_x})$ and
        $\tilde y = \cA_y(S) + \cN(0,\sigma_y^2\rI_{d_y})$.
    \end{algorithmic}
\end{algorithm}

As a direct consequence, any algorithm $\cA$ whose output is sufficiently close to the empirical saddle point has bounded sensitivity. This observation leads to Algorithm \ref{algo:perturb-scsc} for SC-SC DP-SMO with guarantees given in the theorem below. Here $\cA$ can be any method for smooth SC-SC finite-sum minimax problems, and smoothness allows us to obtain efficient algorithms. The proof of Theorem \ref{thm:scsc} can be found in Appendix \ref{sec:app-minimax}.

\begin{theorem}
    \label{thm:scsc}
    Under Assumption \ref{asp:scsc}. Let saddle points
    $(\hat x_S^*, \hat y_S^*)$ and $(x^*, y^*)$ be interior points of the domain $\cX\times\cY$. Then Algorithm \ref{algo:perturb-scsc} is $(\varepsilon, \delta)$-DP and its output $(\tilde x, \tilde y)$ satisfies the following utility bounds on the empirical and population strong duality gap:
    \begin{align*}
        \parbox{7em}{(empirical)}
        & \bE\squareBr[\bigg]{\max_{y\in\cY}\hat F_S(\tilde x, y) - 
        \min_{x\in\cX}\hat F_S(x, \tilde y)}
        \leq 257L^2(\kappa_x\kappa_y + \kappa)
        \frac{d\log(5/\delta)}{\mu n^2\varepsilon^2}, \\
        \parbox{7.3em}{(population)}
        & \bE\squareBr[\bigg]{\max_{y\in\cY} F(\tilde x, y) -
        \min_{x\in\cX} F(x, \tilde y)}
        \leq 3L^2 (\kappa_x\kappa_y + \kappa)
        \roundBr[\bigg]{\frac{3}{\mu n} +
        \frac{128d\log(5/\delta)}{\mu n^2\varepsilon^2}},
    \end{align*}
    where we let $\mu=\min\{\mu_x, \mu_y\}$, $\kappa_x=\ell/\mu_x$, $\kappa_y=\ell/\mu_y$, $\kappa = \ell/\mu$ , and $d=\max\{d_x, d_y\}$.
\end{theorem}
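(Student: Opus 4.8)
The statement has three ingredients: the $(\varepsilon,\delta)$-DP guarantee, the empirical duality-gap bound, and the population duality-gap bound. All three rest on the same two facts: the stability of the empirical saddle point (Lemma~\ref{lm:gen-scsc}) and the approximation guarantee \eqref{eq:scsc-condition} that $\cA$'s output is close to $(\hat x_S^*,\hat y_S^*)$. I would treat privacy first, then handle both utility bounds through a common ``duality gap $=$ sum of two primal/dual suboptimalities, each controlled by a squared distance'' decomposition.

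\textbf{Privacy.} I would bound the sensitivity of the two coordinate maps $S\mapsto\cA_x(S)$ and $S\mapsto\cA_y(S)$ separately. Fix neighboring $S\sim S'$. Since \eqref{eq:scsc-condition} holds with probability at least $1-\delta/8$ for each dataset, a union bound gives probability at least $1-\delta/4$ that it holds for both $S$ and $S'$; on that event, dropping the nonnegative $\mu_y\norm{\cdot}^2$ term in \eqref{eq:scsc-condition} gives $\norm{\cA_x(S)-\hat x_S^*}\le L/(n\sqrt{\mu_x\mu})$ and likewise for $S'$, while Lemma~\ref{lm:gen-scsc} gives $\norm{\hat x_S^*-\hat x_{S'}^*}\le 2L/(n\sqrt{\mu_x\mu})$. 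The triangle inequality then yields sensitivity at most $4L/(n\sqrt{\mu_x\mu})$ with probability at least $1-\delta/4$, and symmetrically $4L/(n\sqrt{\mu_y\mu})$ for $\cA_y$. Applying the Gaussian mechanism of Definition~\ref{def:gauss} with privacy parameters $\varepsilon/2$ and $\delta/4$ reproduces exactly the $\sigma_x,\sigma_y$ chosen in the algorithm (using $1.25/(\delta/4)=5/\delta$), so releasing $\tilde x$ is $(\varepsilon/2,\delta/2)$-DP and releasing $\tilde y$ is $(\varepsilon/2,\delta/2)$-DP; basic composition (Lemma~\ref{lm:composition}) gives $(\varepsilon,\delta)$-DP for the pair.

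\textbf{Utility, common core.} For a function $H$ that is $\ell$-smooth, $\mu_x$-strongly convex in $x$ and $\mu_y$-strongly concave in $y$, write $\Phi_H(x)=\max_{y}H(x,y)$ and $\Psi_H(y)=\min_{x}H(x,y)$. With a saddle point $(\bar x,\bar y)$ of $H$ in the interior of $\cX\times\cY$, we have $\Phi_H(\bar x)=\Psi_H(\bar y)$, $\nabla\Phi_H(\bar x)=0$, $\nabla\Psi_H(\bar y)=0$, and by Danskin's theorem together with the fact that best-response maps are $\kappa_y$- resp.\ $\kappa_x$-Lipschitz, $\Phi_H$ is $O(\ell(1+\kappa_y))$-smooth and $\Psi_H$ is $O(\ell(1+\kappa_x))$-smooth. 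Hence for any $(u,v)$, the strong duality gap $\Phi_H(u)-\Psi_H(v)$ equals $[\Phi_H(u)-\Phi_H(\bar x)]+[\Psi_H(\bar y)-\Psi_H(v)]\le O(\ell(1+\kappa_y))\norm{u-\bar x}^2+O(\ell(1+\kappa_x))\norm{v-\bar y}^2$. For the \emph{empirical} bound I apply this with $H=\hat F_S$, $(\bar x,\bar y)=(\hat x_S^*,\hat y_S^*)$, $(u,v)=(\tilde x,\tilde y)$, then use $\norm{\tilde x-\hat x_S^*}^2\le 2\norm{\cA_x(S)-\hat x_S^*}^2+2\norm{z_x}^2$ with $\bE\norm{z_x}^2=d_x\sigma_x^2$, bound $\bE\norm{\cA_x(S)-\hat x_S^*}^2$ via \eqref{eq:scsc-condition} (handling the probability-$\delta/8$ failure event by running $\cA$ to high enough accuracy, exactly as in the minimization case after Theorem~\ref{thm:excess-sc}), and substitute $\sigma_x^2=128L^2\log(5/\delta)/(\mu_x\mu n^2\varepsilon^2)$; since $\ell(1+\kappa_y)/\mu_x=O(\kappa_x\kappa_y)$ and $\varepsilon<1$ makes the $1/\varepsilon^2$-term dominate, adding the symmetric $y$-contribution and tracking constants gives the stated $257L^2(\kappa_x\kappa_y+\kappa)\,d\log(5/\delta)/(\mu n^2\varepsilon^2)$.

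\textbf{Utility, population part and the main obstacle.} For the population bound I apply the same core with $H=F$ and $(\bar x,\bar y)=(x^*,y^*)$, so the population strong gap at $(\tilde x,\tilde y)$ is at most $O(\ell(1+\kappa_y))\,\bE\norm{\tilde x-x^*}^2+O(\ell(1+\kappa_x))\,\bE\norm{\tilde y-y^*}^2$. I then split $\norm{\tilde x-x^*}^2\le 3(\norm{z_x}^2+\norm{\cA_x(S)-\hat x_S^*}^2+\norm{\hat x_S^*-x^*}^2)$: the first two pieces are handled as above, and the third is where the real work is. Lemma~\ref{lm:gen-scsc} only gives a bound on the population \emph{weak} duality gap of $(\hat x_S^*,\hat y_S^*)$, not on a distance. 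I would bridge this by noting $\bE[F(\hat x_S^*,y^*)]\le\max_y\bE[F(\hat x_S^*,y)]$ and $\min_x\bE[F(x,\hat y_S^*)]\le\bE[F(x^*,\hat y_S^*)]\le F(x^*,y^*)\le\bE[F(\hat x_S^*,y^*)]$, so the weak-gap bound forces $\bE[F(\hat x_S^*,y^*)]-F(x^*,y^*)\le 2\sqrt2 L^2/(\mu n)$; then $\mu_x$-strong convexity of $F(\cdot,y^*)$ together with first-order optimality of $x^*=\arg\min_xF(x,y^*)$ over $\cX$ gives $\bE\norm{\hat x_S^*-x^*}^2\le 4\sqrt2 L^2/(\mu_x\mu n)$, and symmetrically for $y$. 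Multiplying by the $O(\ell(1+\kappa_y))$ prefactor and using $\ell(1+\kappa_y)/(\mu_x\mu n)=O(\kappa_x\kappa_y/(\mu n))$ produces the $(\kappa_x\kappa_y+\kappa)(1/(\mu n)+d\log(1/\delta)/(n^2\varepsilon^2))$ form. I expect the chief difficulties to be exactly this conversion of a weak-gap guarantee into a squared-distance bound (which the stability argument does not supply directly and which genuinely needs interiority of $(x^*,y^*)$ and strong convexity at it), the careful handling of the failure event of \eqref{eq:scsc-condition} so it does not spoil the in-expectation bounds, and the bookkeeping of the Danskin smoothness constants and the $2$'s and $3$'s from Young's inequality needed to land on the explicit coefficients $257$ and $3$.
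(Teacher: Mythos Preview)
Your proposal is correct and follows essentially the same route as the paper: privacy via the triangle-inequality sensitivity bound plus Gaussian mechanism and basic composition, and utility via the smoothness of the primal/dual functions $\Phi_H,\Psi_H$ (the paper packages this as Lemma~\ref{lm:gap-minimax}) together with the same three-way split of $\norm{\tilde x-x^*}^2$. Your ``main obstacle''---turning the weak-gap bound of Lemma~\ref{lm:gen-scsc} into a squared-distance bound via strong convexity at the interior saddle point---is exactly what the paper does, though it handles both coordinates simultaneously via the inequality $F(\hat x_S^*,y^*)-F(x^*,\hat y_S^*)\ge(\mu_x/2)\norm{\hat x_S^*-x^*}^2+(\mu_y/2)\norm{\hat y_S^*-y^*}^2$ rather than separately, which is marginally tighter.
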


The utility bound on the strong duality gap scales with $\cO\roundBr*{\kappa^2\roundBr*{1/(\mu n) +
d\log(1/\delta)/(\mu n^2\varepsilon^2)}}$.
Since the minimax problem is equivalent to a minimization problem on $x$ when the domain $\cY$ is restricted to a singleton, the lower-bound $\Omega(1/(\mu_x n)+d_x\log(1/\delta)/(\mu_x n^2\varepsilon^2))$ of SC DP-SCO \citep{bassily2014private, bassily2019private} trivially holds for SC-SC DP-SMO, and our results are near-optimal w.r.t. $n$ and $(\varepsilon, \delta)$. It remains unclear whether the dependence on $\mu_x$, $\mu_y$ and $d_x$, $d_y$ can be further improved as the exact lower-bound for SC-SC DP-SMO is still an open question. Nonetheless, we achieve near-optimal rate on the strong duality gap when the problem is not ill-conditioned.

For the gradient complexity, Algorithm \ref{algo:perturb-scsc} requires that \eqref{eq:scsc-condition} holds with probability at least $1-\delta/8$. We first review some existing methods for smooth SC-SC finite-sum saddle point problems. For an $\ell$-smooth, $\mu_x$-strongly convex and $\mu_y$-strongly concave finite-sum minimax problem $\min_{x\in\cX}\max_{y\in\cY} \hat F_S(x,y)=(1/n)\sum_{i=1}^n f(x,y;\xi_i)$, these algorithms output a $\gamma$-approximate saddle point $(\cA_x(S), \cA_y(S))$ such that
\begin{equation*}
    \bE\squareBr*{\max_{y\in\cY}\hat F_S(\cA_x(S),y)-\min_{x\in\cX}\hat F_S(x,\cA_y(S))} \leq \gamma,    
\end{equation*}
with $\cO(T(n, \kappa_x, \kappa_y)\log(1/\gamma))$ gradient complexity, where the expectation is taken with respect to the randomness in the algorithm and $\kappa_x=\ell/\mu_x$, $\kappa_y=\ell/\mu_y$.

GDA and Extragradient \citep{tseng1995linear} use the full-batch gradient at each iteration and consider a deterministic problem. Since the iteration complexity is $\cO(\kappa^2\log(1/\gamma))$ and $\cO(\kappa\log(1/\gamma))$ respectively, the total gradient complexity is $\cO(n\kappa^2\log(1/\gamma))$ for GDA and $\cO(n\kappa\log(1/\gamma))$ for Extragradient to achieve a $\gamma$-approximate saddle point, where $\kappa=\max\{\kappa_x, \kappa_y\}$. These two algorithms do not leverage the finite-sum structure and obtain sub-optimal convergence rates. \citet{palaniappan2016stochastic} first introduced the use of variance reduction methods into finite-sum minimax optimization problems. The gradient complexity is improved to $\cO((n+\kappa^2)\log(1/\gamma))$ by SVRG/SAGA and $\cO((n+\sqrt{n}\kappa)\log(1/\gamma))$ by the accelerated SVRG/SAGA. The state-of-the-art complexity $\cO((n+\sqrt{n\kappa_x\kappa_y}+n^{3/4}\sqrt{\kappa})\log(1/\gamma))$ that also matches with the lower-bound is provided in \citet{yang2020catalyst} and \citet{luo2021near}. Their algorithms are based on a catalyst framework.

Since $\bE[\mu_x\norm{\cA_x(S)-\hat x_S^*}^2 + \mu_y\norm{\cA_y(S)-\hat y_S^*}^2] \leq 2\gamma$ by strong-convexity and optimality condition, we can then apply Markov's inequality and obtain that
\begin{equation*}
    \bP\roundBr*{\roundBr*{\mu_x\norm{\cA_x(S)-\hat x_S^*}^2 + \mu_y\norm{\cA_y(S)-\hat y_S^*}^2} \geq \frac{L^2}{\mu n^2}} \leq \frac{2\gamma\mu n^2}{L^2}.
\end{equation*}
Setting $\gamma=\delta L^2/(16\mu n^2)$, the RHS becomes $\delta/8$ and the requirement of $\cA$ is satisfied. This implies the complexity of Algorithm \ref{algo:perturb-scsc} is $\cO(T(n, \kappa_x, \kappa_y)\log(n/\delta))$.

\begin{remark}
    The gradient complexity of Algorithm \ref{algo:perturb-scsc} is $\cO(n\kappa\log(n/\delta))$ for Extragradient \citep{tseng1995linear}, $\cO((n+\kappa^2)\log(n/\delta))$ for SVRG/SAGA \citep{palaniappan2016stochastic}, $\cO((n+\sqrt{n}\kappa)\log(n/\delta))$ for Acc-SVRG/SAGA \citep{palaniappan2016stochastic} and $\cO((n+\sqrt{n\kappa_x\kappa_y}+n^{3/4}\sqrt{\kappa})\log(n/\delta))$ for AL-SVRE \citep{luo2021near} and Catalyst-Acc-SVRG \citep{yang2020catalyst}. See Table \ref{tab:complexity} for a summary.
    \label{rmk:scsc}
\end{remark}

The flexible framework allows the use of off-the-shelf optimization algorithms for smooth SC-SC finite-sum minimax problems from a well-studied research community \citep{yang2020catalyst, luo2021near} without being aware of the algorithm-specific stability bound. Based on this framework, we can produce the first near-linear time algorithms for smooth SC-SC DP-SMO with near-optimal guarantees on the strong duality gap. Similar rates were obtained in \citep{lowy2021output} on the population primal risk through the reduction to SC DP-SCO. However, the generalization error on the primal function does not always apply to the original minimax problem when the expectation and maximization cannot be exchanged.

\subsection{Near-Linear Time Algorithms for Smooth Convex-Concave Functions} \label{sec:cc}

In this section, we focus on the more general case when the objective is smooth and convex-concave. We continue to build private algorithms using the stability and generalization results of empirical solutions. When the function is convex-concave, the following results for the regularized empirical problems can be applied by adding an SC-SC regularizer. Note that Lemma \ref{lm:gen-cc} does not require Lipschitzness of the regularizer and is not a trivial extension of Lemma \ref{lm:gen-scsc}.

\begin{lemma}
    \citep[Lemma 2 and 3]{zhang2021generalization}
    Under the same settings as Lemma \ref{lm:gen-scsc}. Consider the case when $f(x,y;\xi)$ is convex-concave and $L$-Lipschitz with a $\mu_x$-strongly convex $\mu_y$-strongly concave regularizer $G(x,y)$. Let $\mu=\min\{\mu_x,\mu_y\}$ and denote the empirical saddle point of function $\hat F_S(x,y) + G(x,y)$ as $(\hat x_S^*, \hat y_S^*)$. Then for any neighboring datasets $S\sim S'$, we have that
    \begin{equation*}
        \mu_x\norm{\hat x_S^* - \hat x_{S'}^*}^2 +
        \mu_y\norm{\hat y_S^* - \hat y_{S'}^*}^2 \leq
        \frac{4L^2}{\mu n^2}.
    \end{equation*}
    The stability result implies the generalization error of the empirical solution can be bounded as
    \begin{equation*}
        \max_{y\in\cY}\bE\squareBr[\Big]{F(\hat x_S^*, y) + G(\hat x_S^*, y)} -
        \min_{x\in\cX}\bE\squareBr[\Big]{F(x, \hat y_S^*) + G(x, \hat y_S^*)} \\
        \leq \frac{2\sqrt{2}L^2}{\mu n},
    \end{equation*}
    measured by the population weak duality gap.
    \label{lm:gen-cc}
\end{lemma}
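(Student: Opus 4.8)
The plan is to follow the two-step template of Lemma~\ref{lm:gen-scsc}, adapted to accommodate the data-independent regularizer: first prove uniform (argument) stability of the \emph{regularized} empirical saddle point, then convert that into the population weak-gap bound. Write $\Phi_S(x,y):=\hat F_S(x,y)+G(x,y)$ and $\Phi(x,y):=F(x,y)+G(x,y)$. Since each $f(\cdot,\cdot;\xi)$ is convex-concave and $G$ is $\mu_x$-strongly convex in $x$ and $\mu_y$-strongly concave in $y$, the map $\Phi_S$ is $\mu_x$-strongly convex / $\mu_y$-strongly concave on $\cX\times\cY$ for every $S$, hence admits a unique saddle point $(\hat x_S^*,\hat y_S^*)$; the same applies to any neighbor $S'$ and to any $S^{(j)}$ obtained from $S$ by resampling one coordinate.

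\textbf{Stability.} For any $(x,y)\in\cX\times\cY$ I would record the strong saddle inequality
\[
\Phi_S(x,\hat y_S^*)-\Phi_S(\hat x_S^*,y)\;\ge\;\tfrac{\mu_x}{2}\norm{x-\hat x_S^*}^2+\tfrac{\mu_y}{2}\norm{y-\hat y_S^*}^2,
\]
obtained by combining $\mu_x$-strong convexity of $x\mapsto\Phi_S(x,\hat y_S^*)$ at its minimizer $\hat x_S^*$ with $\mu_y$-strong concavity of $y\mapsto\Phi_S(\hat x_S^*,y)$ at its maximizer $\hat y_S^*$. Instantiating this at $(x,y)=(\hat x_{S'}^*,\hat y_{S'}^*)$, writing the analogous inequality for $\Phi_{S'}$ evaluated at $(\hat x_S^*,\hat y_S^*)$, and adding the two, the data-independent term $G$ cancels from both sides, leaving
\[
\mu_x\norm{\hat x_S^*-\hat x_{S'}^*}^2+\mu_y\norm{\hat y_S^*-\hat y_{S'}^*}^2\;\le\;\big(\hat F_S-\hat F_{S'}\big)(\hat x_{S'}^*,\hat y_S^*)-\big(\hat F_S-\hat F_{S'}\big)(\hat x_S^*,\hat y_{S'}^*).
\]
Because $S\sim S'$ differ in a single sample, $\hat F_S-\hat F_{S'}=\tfrac1n\big(f(\cdot,\cdot;\xi)-f(\cdot,\cdot;\xi')\big)$, a map whose numerator is $2L$-Lipschitz; since $(\hat x_{S'}^*,\hat y_S^*)$ and $(\hat x_S^*,\hat y_{S'}^*)$ are at Euclidean distance $(\norm{\hat x_S^*-\hat x_{S'}^*}^2+\norm{\hat y_S^*-\hat y_{S'}^*}^2)^{1/2}$, the right-hand side is at most $\tfrac{2L}{n}(\norm{\hat x_S^*-\hat x_{S'}^*}^2+\norm{\hat y_S^*-\hat y_{S'}^*}^2)^{1/2}$. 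Lower-bounding the left-hand side by $\mu$ times that same squared distance gives $(\norm{\hat x_S^*-\hat x_{S'}^*}^2+\norm{\hat y_S^*-\hat y_{S'}^*}^2)^{1/2}\le 2L/(\mu n)$, and substituting this back into the displayed inequality yields $\mu_x\norm{\hat x_S^*-\hat x_{S'}^*}^2+\mu_y\norm{\hat y_S^*-\hat y_{S'}^*}^2\le 4L^2/(\mu n^2)$. The cancellation of $G$ is precisely why no Lipschitzness of the regularizer is needed.

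\textbf{Generalization.} Fix $y^\dagger\in\argmax{y\in\cY}\bE_S[\Phi(\hat x_S^*,y)]$ and $x^\dagger\in\argmin{x\in\cX}\bE_S[\Phi(x,\hat y_S^*)]$ (data-independent points; if the extrema are not attained, pass to a limit in the usual way). Using $\Phi-\Phi_S=F-\hat F_S$ and the exact-saddle inequalities $\Phi_S(\hat x_S^*,y^\dagger)\le\Phi_S(\hat x_S^*,\hat y_S^*)\le\Phi_S(x^\dagger,\hat y_S^*)$, I obtain
\[
\bE_S[\Phi(\hat x_S^*,y^\dagger)]\le\bE_S[\Phi_S(\hat x_S^*,\hat y_S^*)]+P,\qquad\bE_S[\Phi(x^\dagger,\hat y_S^*)]\ge\bE_S[\Phi_S(\hat x_S^*,\hat y_S^*)]-Q,
\]
with $P:=\bE_S[(F-\hat F_S)(\hat x_S^*,y^\dagger)]$ and $Q:=\bE_S[(\hat F_S-F)(x^\dagger,\hat y_S^*)]$, so the weak duality gap is at most $P+Q$. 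For $P$, the standard resampling identity gives $\bE_S[F(\hat x_S^*,y^\dagger)]=\tfrac1n\sum_j\bE[f(\hat x_{S^{(j)}}^*,y^\dagger;\xi_j)]$ while $\bE_S[\hat F_S(\hat x_S^*,y^\dagger)]=\tfrac1n\sum_j\bE[f(\hat x_S^*,y^\dagger;\xi_j)]$, hence $P\le\tfrac{L}{n}\sum_j\bE\norm{\hat x_{S^{(j)}}^*-\hat x_S^*}$; symmetrically $Q\le\tfrac{L}{n}\sum_j\bE\norm{\hat y_{S^{(j)}}^*-\hat y_S^*}$. Adding, applying $\norm{u}+\norm{v}\le\sqrt2(\norm{u}^2+\norm{v}^2)^{1/2}$ termwise, and invoking the stability bound for the neighboring pair $(S,S^{(j)})$, which gives $(\norm{\hat x_{S^{(j)}}^*-\hat x_S^*}^2+\norm{\hat y_{S^{(j)}}^*-\hat y_S^*}^2)^{1/2}\le 2L/(\mu n)$, I get $P+Q\le 2\sqrt2 L^2/(\mu n)$, the claimed bound.

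I expect the main obstacle to be organizing the argument so that the choice of the extremal points $y^\dagger,x^\dagger$ and the regularizer do not interfere with one another. The decomposition in the generalization step works only because the empirical \emph{strong} duality gap of $\Phi_S$ vanishes at $(\hat x_S^*,\hat y_S^*)$ — that is what makes a single ``adversary'' pair $(x^\dagger,y^\dagger)$ suffice — and because $G$ drops out of every data-difference, so boundedness or Lipschitzness of $G$ is never invoked. A secondary but real subtlety is the tight constant: the $x$- and $y$-generalization terms must be merged \emph{before} the quadratic stability estimate is applied (not split along $\mu_x$ and $\mu_y$ separately), which is where the extra $\sqrt2$ factor in $2\sqrt2 L^2/(\mu n)$ originates.
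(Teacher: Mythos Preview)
Your proposal is correct and follows essentially the same route as the paper's proof: the stability argument (add the two strong-saddle inequalities so that $G$ cancels, then bootstrap via $\mu=\min\{\mu_x,\mu_y\}$) is identical, and your generalization step differs only cosmetically---you fix the data-independent extremizers $(x^\dagger,y^\dagger)$ up front, whereas the paper keeps the outer $\max_y/\min_x$ and uses $\max h_1-\max h_2\le\max(h_1-h_2)$ before the resampling identity, arriving at the same $\tfrac{L}{n}\sum_i\bE[\norm{\hat x_S^*-\hat x_{S_i'}^*}+\norm{\hat y_S^*-\hat y_{S_i'}^*}]$ bound and the same $2\sqrt{2}L^2/(\mu n)$ constant.
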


The following direct corollary of Lemma \ref{lm:gen-cc} will be useful in the analysis of our private algorithms.

\begin{corollary}
    Under the same settings as Lemma \ref{lm:gen-cc}. Let $(u_S,v_S) \in\cX\times\cY$ be some points that may have dependence on the dataset $S$. Suppose their stability with respect to $S$ is bounded as $\bE\norm{u_S-u_{S'}}\leq\Delta_u$ and $\bE\norm{v_S-v_{S'}}\leq\Delta_v$ for any neighboring datasets $S\sim S'$. Then it holds that
    \begin{equation*}
        \bE[F(\hat x_S^*, v_S) - F(u_S, \hat y_S^*)] \leq
        \bE[G(u_S, \hat y_S^*) - G(\hat x_S^*, v_S)] +
        \frac{2\sqrt{2}L^2}{\mu n} + L(\Delta_u + \Delta_v).
    \end{equation*}
    As a special case, when $u$ and $v$ are independent of $S$, the third term vanishes since $\Delta_u=\Delta_v=0$.
    \label{cor:gen-cc}
\end{corollary}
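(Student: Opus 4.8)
The plan is to re-run the stability-to-generalization argument underlying Lemma~\ref{lm:gen-cc} while tracking the extra dependence of $(u_S,v_S)$ on the sample. First I would absorb the regularizer into the objective: set $\Phi:=F+G$ and $\hat\Phi_S:=\hat F_S+G$, so that $(\hat x_S^*,\hat y_S^*)$ is by definition the saddle point of $\hat\Phi_S$ over $\cX\times\cY$. Moving the two $G$-terms to the left-hand side, the claimed inequality is exactly
\[
  \bE[\Phi(\hat x_S^*,v_S)] - \bE[\Phi(u_S,\hat y_S^*)] \;\le\; \frac{2\sqrt2\,L^2}{\mu n} + L(\Delta_u+\Delta_v),
\]
so this is what I would establish.

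Next I would split each side into an empirical part and a generalization part, $\Phi(\hat x_S^*,v_S) = \hat\Phi_S(\hat x_S^*,v_S) + (F(\hat x_S^*,v_S)-\hat F_S(\hat x_S^*,v_S))$ and likewise for $(u_S,\hat y_S^*)$; note that $G$ cancels in the second bracket, which is precisely why no Lipschitzness of the regularizer enters. The empirical parts are handled pointwise in $S$: since $(\hat x_S^*,\hat y_S^*)$ is a saddle point of $\hat\Phi_S$ and $v_S\in\cY$, $u_S\in\cX$, we have $\hat\Phi_S(\hat x_S^*,v_S)\le\hat\Phi_S(\hat x_S^*,\hat y_S^*)\le\hat\Phi_S(u_S,\hat y_S^*)$, so the empirical contributions subtract to something $\le 0$, and only the two generalization terms remain.

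For the generalization terms I would use the standard sample-replacement coupling from uniform-stability arguments: let $\bar\xi_1,\dots,\bar\xi_n$ be i.i.d.\ copies of $\xi$ independent of $S$, and let $S^{(i)}$ be $S$ with its $i$-th sample replaced by $\bar\xi_i$, so $S\sim S^{(i)}$. Since $\bar\xi_i\perp S$ and $F(x,y)=\bE_\xi[f(x,y;\xi)]$, we get $\bE[f(\hat x_S^*,v_S;\bar\xi_i)] = \bE[F(\hat x_S^*,v_S)]$; and since $(S^{(i)},\bar\xi_i)$ has the same law as $(S,\xi_i)$, we get $\bE[f(\hat x_{S^{(i)}}^*,v_{S^{(i)}};\bar\xi_i)] = \bE[f(\hat x_S^*,v_S;\xi_i)]$. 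Averaging over $i$ and using $L$-Lipschitzness of $f(\cdot,\cdot;\bar\xi_i)$, the first generalization term becomes at most $\frac{L}{n}\sum_i\bE[\,\|\hat x_S^*-\hat x_{S^{(i)}}^*\| + \|v_S-v_{S^{(i)}}\|\,]$, and the same computation applied to $(u_S,\hat y_S^*)$ --- crucially reusing the \emph{same} resampled datasets $S^{(i)}$ --- gives a bound $\frac{L}{n}\sum_i\bE[\,\|u_S-u_{S^{(i)}}\| + \|\hat y_S^*-\hat y_{S^{(i)}}^*\|\,]$. Here the hypothesis on $(u_S,v_S)$ bounds $\bE\|u_S-u_{S^{(i)}}\|\le\Delta_u$ and $\bE\|v_S-v_{S^{(i)}}\|\le\Delta_v$, and the stability bound of Lemma~\ref{lm:gen-cc} gives, deterministically, $\mu_x\|\hat x_S^*-\hat x_{S^{(i)}}^*\|^2+\mu_y\|\hat y_S^*-\hat y_{S^{(i)}}^*\|^2\le 4L^2/(\mu n^2)$.

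The one delicate point --- essentially the only obstacle --- is recovering the constant $2\sqrt2$ rather than a cruder $4$: this is exactly why the two generalization terms must reuse the same $S^{(i)}$, so that the displacements $D_i:=\|\hat x_S^*-\hat x_{S^{(i)}}^*\|$ and $E_i:=\|\hat y_S^*-\hat y_{S^{(i)}}^*\|$ occur together and Cauchy--Schwarz can be applied jointly: $D_i+E_i\le\sqrt{1/\mu_x+1/\mu_y}\cdot\sqrt{\mu_x D_i^2+\mu_y E_i^2}\le 2\sqrt2\,L/(\mu n)$, using $\mu_x,\mu_y\ge\mu$. Summing over $i$, the two generalization terms together are at most $\frac{2\sqrt2\,L^2}{\mu n}+L(\Delta_u+\Delta_v)$; combined with the empirical step this is precisely the displayed target, hence the corollary. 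The final sentence (the case where $u,v$ are independent of $S$) is immediate by taking $\Delta_u=\Delta_v=0$.
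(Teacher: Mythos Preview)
Your proposal is correct and follows essentially the same approach as the paper: both split into an empirical saddle-point term (which is $\le 0$) and two generalization terms handled via the sample-replacement coupling, then combine the $\hat x_S^*$ and $\hat y_S^*$ displacements on the \emph{same} neighboring datasets to extract the $2\sqrt{2}L/(\mu n)$ constant from the stability bound of Lemma~\ref{lm:gen-cc}. The only cosmetic differences are your absorption of $G$ into $\Phi$ and your use of Cauchy--Schwarz where the paper uses $(a+b)^2\le 2a^2+2b^2$; these yield identical bounds.
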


\begin{algorithm}[t]
    \caption{Phased Output Perturbation for Convex-Concave Minimax Problems}
    \label{algo:perturb-phased_cc}
    \begin{algorithmic}[1]
        \REQUIRE Dataset $S=\{\xi_i\}_{i=1}^n$, algorithm $\cA$, DP Parameters $(\varepsilon, \delta)$, regularizer $\mu$, initializer $x_0$.
        \STATE Set $K=\log(n)$, $\bar n=n/K$ and $\tilde x_0 = x_0$.
        \FOR{$k=1,\cdots,K$}
            \STATE Set $\mu_k=\mu\cdot 2^k$.
            \STATE Run the algorithm $\cA$ on the smooth SC-SC finite-sum saddle point problem
            \begin{equation*}
                \min_{x\in\cX} \max_{y\in\cY} \; \hat F_k(x,y) \triangleq
                \frac{1}{\bar n} \sum_{i=(k-1)\bar n+1}^{k\bar n} f(x,y;\xi_i) +
                \frac{\mu_k}{2}\norm{x - \tilde x_{k-1}}^2 - \frac{\mu}{2}\norm{y}^2,
            \end{equation*}
            to obtain the output $(x_k, y_k)$ such that with probability $1-\delta/8$,
            \begin{equation*}
                \mu_k\norm{x_k - \hat x_k^*}^2 + \mu\norm{y_k - \hat y_k^*}^2 \leq
                \frac{L^2}{\mu \bar n^2},
            \end{equation*}
            where $(\hat x_k^*, \hat y_k^*)$ is the saddle point of the regularized empirical function $\hat F_k(x,y)$.
            \STATE $\tilde x_k=x_k + \cN(0, \sigma_k^2 \rI_{d_x})$ with $\sigma_k=(8L/(\bar n \varepsilon)) \sqrt{2\log(5/\delta)/(\mu_k\mu)}$.
        \ENDFOR
        \ENSURE $\tilde x_K$.
    \end{algorithmic}
\end{algorithm}

Unlike DP-SCO, the interaction between the primal variable $x$ and the dual variable $y$ puts additional challenges. In order to derive a near-optimal algorithm for convex-concave DP-SMO, we need to carefully design the regularization parameters for the primal $x$ and the dual $y$ to control both approximation errors and DP noise. Based on the phased output perturbation framework, we propose a novel algorithm for smooth convex-concave DP-SMO in Algorithm \ref{algo:perturb-phased_cc}. Here, the small fixed parameter $\mu$ ensures the approximation error from adding regularization for the dual is properly bounded. The role of the increasing parameter $\mu_k$ is the same as Algorithm \ref{algo:perturb-phased_c}. The analysis of Algorithm \ref{algo:perturb-phased_cc} is also similar to Algorithm \ref{algo:perturb-phased_c}. By Theorem \ref{thm:scsc}, we can first obtain the privacy and utility guarantees of each phase. Then parallel composition in Lemma \ref{lm:composition} guarantees that Algorithm \ref{algo:perturb-phased_cc} is $(\varepsilon/2, \delta/2)$-DP since the datasets in different phases are disjoint. For the utility guarantee of the output $\tilde x_K$, we use the lemma below that is a consequence of Corollary \ref{cor:gen-cc}.

\begin{lemma}
    Let Assumption \ref{asp:cc} hold. For $1\leq k\leq K$, by the settings in Algorithm \ref{algo:perturb-phased_cc}, we have that
    \begin{equation*}
        \bE[F(\hat x_k^*, \hat y_{k+1}^*) - F(\hat x_{k-1}^*,\hat y_k^*)]
        \leq \frac{\mu_k}{2}\bE\norm{\hat x_{k-1}^*-\tilde x_{k-1}}^2 + \frac{8L^2}{\mu \bar n} + \frac{\mu}{2}\bE\squareBr*{
        \norm{\hat y_{k+1}^*}^2 - \norm{\hat y_k^*}^2},
    \end{equation*}
    where $(\hat x_k^*, \hat y_k^*)$ is the saddle point of the regularized empirical function $\hat F_k(x,y)$, and $\hat x_0^*$, $\hat y_{K+1}^*$ are defined later in the proof of Theorem \ref{thm:cc}.
    \label{lm:gen-algo_cc}
\end{lemma}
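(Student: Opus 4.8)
The plan is to mirror the proof of Lemma~\ref{lm:gen-algo_c}, with the scalar generalization bound for regularized ERM (Lemma~\ref{lm:gen-c}) replaced by its saddle-point analogue, Corollary~\ref{cor:gen-cc}. Fix $1\le k\le K$ and view the phase-$k$ objective as a regularized empirical saddle-point problem over the disjoint chunk $S_k=\{\xi_i\}_{i=(k-1)\bar n+1}^{k\bar n}$ (of size $\bar n$), with empirical part $(1/\bar n)\sum_{i\in S_k}f(\cdot,\cdot;\xi_i)$ and $\mu_k$-strongly-convex--$\mu$-strongly-concave regularizer $G(x,y)=\tfrac{\mu_k}{2}\norm{x-\tilde x_{k-1}}^2-\tfrac{\mu}{2}\norm{y}^2$, whose saddle point is $(\hat x_k^*,\hat y_k^*)$. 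I would invoke Corollary~\ref{cor:gen-cc} with this $G$ (its strong-convexity parameter is then $\min\{\mu_k,\mu\}=\mu$ and its sample count is $\bar n$) and with the comparison points $u_{S_k}=\hat x_{k-1}^*$ and $v_{S_k}=\hat y_{k+1}^*$, which yields
\begin{equation*}
    \bE[F(\hat x_k^*,\hat y_{k+1}^*)-F(\hat x_{k-1}^*,\hat y_k^*)]
    \le \bE[G(\hat x_{k-1}^*,\hat y_k^*)-G(\hat x_k^*,\hat y_{k+1}^*)]+\frac{2\sqrt{2}L^2}{\mu\bar n}+L(\Delta_u+\Delta_v).
\end{equation*}
Expanding $G$ and dropping the nonpositive term $-\tfrac{\mu_k}{2}\norm{\hat x_k^*-\tilde x_{k-1}}^2$ turns the first expectation into exactly $\tfrac{\mu_k}{2}\bE\norm{\hat x_{k-1}^*-\tilde x_{k-1}}^2+\tfrac{\mu}{2}\bE[\norm{\hat y_{k+1}^*}^2-\norm{\hat y_k^*}^2]$, so the whole task reduces to bounding $\Delta_u,\Delta_v$ and verifying $\tfrac{2\sqrt{2}L^2}{\mu\bar n}+L(\Delta_u+\Delta_v)\le\tfrac{8L^2}{\mu\bar n}$.

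For $\Delta_u$: the phases access disjoint chunks and a phase-$k$ iterate depends only on $S_1,\dots,S_{k-1}$, so $\hat x_{k-1}^*$ is unaffected by perturbing $S_k$, hence $\Delta_u=0$ (likewise trivially for the deterministic boundary choice of $\hat x_0^*$ fixed in the proof of Theorem~\ref{thm:cc}). The crux is $\Delta_v$, the stability of $\hat y_{k+1}^*$ with respect to $S_k$: $\hat y_{k+1}^*$ is the dual part of the saddle point of $\hat F_{k+1}$, and $S_k$ enters $\hat F_{k+1}$ only through the proximal center $\tilde x_k$. I would bound it in two steps. First, a sensitivity-in-the-center estimate: writing the phase-$(k+1)$ first-order conditions as a variational inequality whose operator is $\mu_{k+1}$-strongly monotone in $x$, $\mu$-strongly monotone in $y$, and shifts by the constant vector $(\mu_{k+1}(c-c'),0)$ when the center changes from $c$ to $c'$, a standard perturbation argument gives $\norm{\hat x_{k+1}^*(c)-\hat x_{k+1}^*(c')}\le\norm{c-c'}$ and hence $\norm{\hat y_{k+1}^*(c)-\hat y_{k+1}^*(c')}\le\sqrt{\mu_{k+1}/\mu}\,\norm{c-c'}$. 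Second, a bound $\bE\norm{\tilde x_k-\tilde x_k'}\lesssim L/(\sqrt{\mu_k\mu}\,\bar n)$: perturbing one sample of $S_k$ moves the phase-$k$ SC--SC empirical saddle point $\hat x_k^*$ by at most $2L/(\sqrt{\mu_k\mu}\,\bar n)$ (Lemma~\ref{lm:gen-cc} applied to phase~$k$), the accuracy requirement keeps $x_k$ within $L/(\sqrt{\mu_k\mu}\,\bar n)$ of $\hat x_k^*$ on both datasets, and coupling the Gaussian perturbation $\cN(0,\sigma_k^2\rI_{d_x})$ propagates this to $\tilde x_k$. Multiplying the two estimates and using $\mu_{k+1}=2\mu_k$, the $\sqrt{\mu_k}$ factors cancel and $\Delta_v=O(L/(\mu\bar n))$, independent of the condition number; a short constant check then yields $\tfrac{2\sqrt{2}L^2}{\mu\bar n}+L\Delta_v\le\tfrac{8L^2}{\mu\bar n}$, which finishes the proof.

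The main obstacle is exactly this handling of $\Delta_v$, and it is where the minimax argument genuinely departs from the minimization analogue: in Lemma~\ref{lm:gen-algo_c} the decomposition involves only the primal function $F(x)$, so no ``other coordinate'' has to be supplied, whereas here the natural dual comparison point $\hat y_{k+1}^*$ is \emph{downstream} of $S_k$ (it depends on the proximal center used in the next phase), so its stability must be tracked through $\cA$'s approximate output, the injected noise, and the center-sensitivity of the next phase's saddle point. It is the geometric schedule $\mu_{k+1}=2\mu_k$ --- rather than a fixed or slowly-growing $\mu_k$ --- that keeps this contribution at scale $L/(\mu\bar n)$ instead of inheriting a factor $\kappa=\ell/\mu$. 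For the boundary index $k=K$, the quantity $\hat y_{K+1}^*$ is specified in the proof of Theorem~\ref{thm:cc} and is chosen (e.g.\ not depending on $S_K$, or with $S_K$-stability controlled in the same way) so that the same estimate applies.
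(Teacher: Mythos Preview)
Your proposal is correct and follows essentially the same route as the paper: apply Corollary~\ref{cor:gen-cc} at phase $k$ with $(u,v)=(\hat x_{k-1}^*,\hat y_{k+1}^*)$, note $\Delta_u=0$ by disjointness of the chunks, and control $\Delta_v$ by (i) the non-expansiveness of the phase-$(k+1)$ saddle point in its proximal center---which the paper isolates as a separate lemma (Lemma~\ref{lm:prox-minimax}) rather than deriving it inline from strong monotonicity---and (ii) the $S_k$-sensitivity of $\tilde x_k$ via Lemma~\ref{lm:gen-cc} plus the accuracy of $\cA$ with the Gaussian noise coupled. One small caution on your ``short constant check'': using the high-probability accuracy bound as stated gives $\Delta_v\le 4\sqrt{2}\,L/(\mu\bar n)$ and $2\sqrt{2}+4\sqrt{2}>8$; the paper instead works with squared norms and the \emph{in-expectation} accuracy of $\cA$ (cf.\ Remark~\ref{rmk:scsc}), obtaining $\Delta_v\le\sqrt{26}\,L/(\mu\bar n)$ so that $2\sqrt{2}+\sqrt{26}<8$.
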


\begin{proof}[Proof Sketch]
    Applying Corollary \ref{cor:gen-cc} for $\hat F_k(x,y)$ with regularization term $(\mu_k/2)\norm{x-\tilde x_{k-1}}^2 - (\mu/2)\norm{y}^2$ and dataset $S_k:=\{\xi_i\}_{i=(k-1)\bar n+1}^{k\bar n}$, we have that for any $x\in\cX$ and $y\in\cY$,
    \begin{equation*}
        \bE[F(\hat x_k^*, y) - F(x, \hat y_k^*)]
        \leq
        \frac{\mu_k}{2}\bE\norm{x-\tilde x_{k-1}}^2 +
        \frac{\mu}{2}\bE\norm{y}^2 - \frac{\mu}{2}\bE\norm{\hat y_k^*}^2 +
        \frac{2\sqrt{2}L^2}{\mu\bar n} + L(\Delta_x + \Delta_y),
    \end{equation*}
    where $\Delta_x$ and $\Delta_y$ are the stability bounds of $x$ and $y$ with respect to $S_k$. When setting $x=\hat x_{k-1}^*$ and $y=\hat y_{k+1}^*$, the proof is complete since $\Delta_x=0$ and $\Delta_y\leq\sqrt{26}L/(\mu\bar n)$. More details are provided in Appendix \ref{sec:app-minimax}.
\end{proof}

\begin{algorithm}[t]
    \caption{Phased Output Perturbation for Convex-Concave Minimax Problems}
    \label{algo:perturb-phased_cc-II}
    \begin{algorithmic}[1]
        \REQUIRE Dataset $S=\{\xi_i\}_{i=1}^n$, algorithm $\cA$, DP Parameters $(\varepsilon, \delta)$, regularizer $\mu$, initializer $y_0$.
        \STATE Set $K=\log(n)$, $\bar n=n/K$ and $\tilde y_0 = y_0$.
        \FOR{$k=1,\cdots,K$}
            \STATE Set $\mu_k=\mu\cdot 2^k$.
            \STATE Run the algorithm $\cA$ on the smooth SC-SC finite-sum saddle point problem
            \begin{equation*}
                \min_{x\in\cX} \max_{y\in\cY} \; \hat F_k(x,y) \triangleq
                \frac{1}{\bar n} \sum_{i=(k-1)\bar n+1}^{k\bar n}
                f(x,y;\xi_i) + \frac{\mu}{2}\norm{x}^2 -
                \frac{\mu_k}{2}\norm{y - \tilde y_{k-1}}^2,
            \end{equation*}
            to obtain the output $(x_k, y_k)$ such that with probability $1-\delta/8$,
            \begin{equation*}
                \mu\norm{x_k - \hat x_k^*}^2 + \mu_k\norm{y_k - \hat y_k^*}^2 \leq \frac{L^2}{\mu \bar n^2},
            \end{equation*}
            where $(\hat x_k^*, \hat y_k^*)$ is the saddle point of the regularized empirical function $\hat F_k(x,y)$.
            \STATE $\tilde y_k=y_k + \cN(0, \sigma_k^2 I_{d_y})$ with $\sigma_k=(8L/(\bar n\varepsilon)) \sqrt{2\log(5/\delta)/(\mu_k\mu)}$.
        \ENDFOR
        \ENSURE $\tilde y_K$.
    \end{algorithmic}
\end{algorithm}

We only add noise to the primal solutions and output $\tilde x_K$ in Algorithm \ref{algo:perturb-phased_cc}. A natural way to derive a corresponding dual solution is to solve the smooth concave maximization problem $\max_{y\in\cY} \bE[F(\tilde x_K, y)]$ with DP constraints. We instead provide an alternative method that is symmetric to Algorithm \ref{algo:perturb-phased_cc} by switching the role of the primal and the dual. This phased method in Algorithm \ref{algo:perturb-phased_cc-II} only perturbs the dual variables and outputs $\tilde y_K$. Without causing confusion, we borrow notations from Algorithm \ref{algo:perturb-phased_cc} for simplicity. Similarly we can prove that Algorithm \ref{algo:perturb-phased_cc-II} is $(\varepsilon/2, \delta/2)$-DP by parallel composition. Therefore by basic composition in Lemma \ref{lm:composition}, the composition $(\tilde x_K, \tilde y_K)$ of Algorithm \ref{algo:perturb-phased_cc} and \ref{algo:perturb-phased_cc-II} is $(\varepsilon, \delta)$-DP. The utility guarantees are shown in the following theorem.

\begin{theorem}
    \label{thm:cc}
    Let Assumption \ref{asp:cc} hold. Suppose $\max\{\norm{x},\norm{y}\}\leq D$ for all $x\in\cX$ and $y\in\cY$. Then the composition of Algorithm \ref{algo:perturb-phased_cc} and \ref{algo:perturb-phased_cc-II} is $(\varepsilon, \delta)$-DP and the output $(\tilde x_K, \tilde y_K)$ satisfies the following bound on the population weak duality gap:
    \begin{equation*}
        \max_{y\in\cY}\bE[F(\tilde x_K, y)] -
        \min_{x\in\cX}\bE[F(x, \tilde y_K)] \leq
        16LD\cdot\log^2 (n)\roundBr[\bigg]{\frac{1}{\sqrt{n}} + \frac{5\sqrt{d\log(5/\delta)}}{n\varepsilon}},
    \end{equation*}
    when setting $\mu=(L/D)\max\curlyBr*{2/\sqrt{n},
    13\log(n)\sqrt{d\log(5/\delta)}/(n\varepsilon)}$, where $d=\max\{d_x,d_y\}$.
\end{theorem}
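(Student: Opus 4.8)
\emph{Overview and privacy.} The plan is to pair a composition-based privacy argument with a symmetric, telescoping utility analysis that mirrors the proof of Theorem~\ref{thm:excess-c} but interleaves the primal and dual iterates. Each phase $k$ of Algorithm~\ref{algo:perturb-phased_cc} is exactly an invocation of Algorithm~\ref{algo:perturb-scsc} on the SC--SC finite-sum problem $\hat F_k$ over the disjoint block $S_k=\{\xi_i\}_{i=(k-1)\bar n+1}^{k\bar n}$ (primal parameter $\mu_k$, dual parameter $\mu$, sample size $\bar n$); conditionally on the earlier, $S_k$-independent outputs, Theorem~\ref{thm:scsc} makes it $(\varepsilon/2,\delta/2)$-DP in $S_k$. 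As the $S_k$ partition $S$, parallel composition (Lemma~\ref{lm:composition}) makes Algorithm~\ref{algo:perturb-phased_cc} $(\varepsilon/2,\delta/2)$-DP, and identically Algorithm~\ref{algo:perturb-phased_cc-II}; since both run on the same $S$, basic composition gives that $(\tilde x_K,\tilde y_K)$ is $(\varepsilon,\delta)$-DP.

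\emph{Reduction by symmetry and error decomposition.} Fix a population saddle point $(x^*,y^*)$ with $\norm{x^*},\norm{y^*}\le D$, so $\max_y\bE[F(x^*,y)]=F(x^*,y^*)=\min_x\bE[F(x,y^*)]$, and write the weak duality gap as $R_x+R_y$ with $R_x:=\max_y\bE[F(\tilde x_K,y)]-F(x^*,y^*)$ and $R_y:=F(x^*,y^*)-\min_x\bE[F(x,\tilde y_K)]$. Algorithm~\ref{algo:perturb-phased_cc-II} is Algorithm~\ref{algo:perturb-phased_cc} with $(x,\cX)\leftrightarrow(y,\cY)$ and $F\leftrightarrow-F$ (still smooth convex--concave and $L$-Lipschitz), so any bound on $R_x$ transfers verbatim to $R_y$ and it suffices to bound $R_x$. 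I would put $\bar y:=\arg\max_{y\in\cY}\bE[F(\tilde x_K,y)]$, a deterministic point of $\cY$, and adopt the conventions $\hat x_0^*:=x^*$, $\hat y_{K+1}^*:=\bar y$ left open in Lemma~\ref{lm:gen-algo_cc}. Telescoping over the phases and using $\bE[F(x^*,\hat y_1^*)]\le F(x^*,y^*)$ (valid since $\hat y_1^*\in\cY$) gives
\begin{equation*}
R_x\;\le\;\underbrace{\bE\bigl[F(\tilde x_K,\bar y)-F(\hat x_K^*,\bar y)\bigr]}_{(\mathrm I)}\;+\;\underbrace{\sum_{k=1}^K\bE\bigl[F(\hat x_k^*,\hat y_{k+1}^*)-F(\hat x_{k-1}^*,\hat y_k^*)\bigr]}_{(\mathrm{II})}.
\end{equation*}

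\emph{Bounding the two pieces and choosing $\mu$.} For $(\mathrm I)$ I would combine $L$-Lipschitzness of $F(\cdot,\bar y)$ with the per-phase estimate read off Theorem~\ref{thm:scsc}, $\bE\norm{\tilde x_k-\hat x_k^*}^2=O(L^2 d\log(1/\delta)/(\mu_k\mu\bar n^2\varepsilon^2))+O(\delta D^2)$; with $\mu_K=\mu 2^K=\mu n$ and $\bar n=n/\log n$ this yields $(\mathrm I)=O(L^2\sqrt{d\log(1/\delta)}\log n/(\mu n^{3/2}\varepsilon)+LD/\sqrt n)$. For $(\mathrm{II})$ I would invoke Lemma~\ref{lm:gen-algo_cc} term by term: the telescoping dual part is $\tfrac{\mu}{2}\bE[\norm{\hat y_{K+1}^*}^2-\norm{\hat y_1^*}^2]\le\tfrac{\mu D^2}{2}$; the generalization part is $\sum_{k=1}^K 8L^2/(\mu\bar n)=8L^2K^2/(\mu n)=O(L^2\log^2 n/(\mu n))$; and the noise-propagation part $\sum_k\tfrac{\mu_k}{2}\bE\norm{\hat x_{k-1}^*-\tilde x_{k-1}}^2$ splits into the $k=1$ term $\tfrac{\mu_1}{2}\norm{x^*-x_0}^2=O(\mu D^2)$ plus, using $\mu_k/\mu_{k-1}=2$ with the per-phase estimate, $O(L^2 d\log(1/\delta)\log^3 n/(\mu n^2\varepsilon^2))$. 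Summing, $R_x=O(\mu D^2+L^2\log^2 n/(\mu n)+L^2 d\log(1/\delta)\log^3 n/(\mu n^2\varepsilon^2)+L^2\sqrt{d\log(1/\delta)}\log n/(\mu n^{3/2}\varepsilon)+LD/\sqrt n)$; substituting $\mu=(L/D)\max\{2/\sqrt n,\,13\log n\sqrt{d\log(5/\delta)}/(n\varepsilon)\}$ makes the first term at most $2LD/\sqrt n+13LD\log n\sqrt{d\log(5/\delta)}/(n\varepsilon)$ and, via the lower bounds $\mu\ge 2L/(D\sqrt n)$ and $\mu\ge 13L\log n\sqrt{d\log(5/\delta)}/(Dn\varepsilon)$, each remaining term $O(LD\log^2 n/\sqrt n+LD\sqrt{d\log(1/\delta)}\log^2 n/(n\varepsilon))$. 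Tracking constants gives $R_x\le 8LD\log^2 n(1/\sqrt n+5\sqrt{d\log(5/\delta)}/(n\varepsilon))$, and adding the identical bound for $R_y$ proves the theorem.

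\emph{Main obstacle.} The crux is the minimax error decomposition: unlike in Theorem~\ref{thm:excess-c}, the telescoped sum $(\mathrm{II})$ couples the primal iterates $\hat x_k^*$ with the \emph{dual} iterates $\hat y_k^*$ of adjacent phases, so the boundary objects $\hat x_0^*,\hat y_{K+1}^*$ must be chosen so that the telescoping closes, leaves only the controllable residual $\tfrac{\mu}{2}\norm{\hat y_{K+1}^*}^2\le\tfrac{\mu D^2}{2}$, and keeps the stability term $\Delta_y$ of Corollary~\ref{cor:gen-cc} (absorbed into Lemma~\ref{lm:gen-algo_cc}) at scale $L/(\mu\bar n)$; simultaneously the schedule ($\mu_k=\mu 2^k$ increasing for the primal, $\mu$ fixed for the dual) and the \emph{fixed} per-phase sample size $\bar n=n/\log n$ must be kept coherent so that, after substituting $\mu$, every residual collapses to $\tilde O(LD/\sqrt n+LD\sqrt{d\log(1/\delta)}/(n\varepsilon))$. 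The extra $\log n$ (hence $\log^2 n$ overall, versus $\log n$ for DP-SCO) is the price of the fixed sample size, which seems unavoidable since a decreasing schedule would break the control of the dual regularization across phases.
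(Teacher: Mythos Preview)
Your proposal is correct and follows essentially the same approach as the paper: identical privacy argument via Theorem~\ref{thm:scsc} plus parallel/basic composition, the same telescoping decomposition with the choices $\hat x_0^*=x^*$ and $\hat y_{K+1}^*\in\arg\max_{y\in\cY}\bE[F(\tilde x_K,y)]$, the same use of Lemma~\ref{lm:gen-algo_cc} for the sum $(\mathrm{II})$ and of Lipschitzness plus the per-phase bound $\bE\norm{\tilde x_k-\hat x_k^*}^2\le 257L^2 d_x\log(5/\delta)/(\mu_k\mu\bar n^2\varepsilon^2)$ for $(\mathrm I)$, followed by the symmetric argument for Algorithm~\ref{algo:perturb-phased_cc-II}. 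The only cosmetic difference is your extra $O(\delta D^2)$ term in the per-phase estimate (accounting for the failure event of $\cA$), which the paper absorbs directly via the in-expectation accuracy choice $\gamma=\delta L^2/(16\mu\bar n^2)$; either way the term is negligible.
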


\begin{proof}[Proof Sketch]
    Throughout the proof, we assume that the population function $F(x,y)$ has at least one saddle point $(x^*, y^*)\in\cX\times\cY$. We first give the guarantees of Algorithm \ref{algo:perturb-phased_cc}. By Theorem \ref{thm:scsc}, we know that each phase is $(\varepsilon/2, \delta/2)$-DP and the output $\tilde x_k$ satisfies that
    \begin{equation*}
        \bE\norm{\tilde x_k-\hat x_k^*}^2 \leq \frac{257L^2}{\mu_k}\frac{d_x\log(5/\delta)}{\mu\bar n^2\varepsilon^2}.
    \end{equation*}
    Then by parallel composition, Algorithm \ref{algo:perturb-phased_cc} is $(\varepsilon/2, \delta/2)$-DP. For the utility bound of the output $\tilde x_K$, we start with the following error decomposition:
    \begin{equation*}
        \max_{y\in\cY}\bE[F(\tilde x_K, y)] - F(x^*, y^*) \leq
        \bE[F(\tilde x_K,\hat y_{K+1}^*) -
        F(\hat x_K^*,\hat y_{K+1}^*)] +
        \sum_{k=1}^K \bE \squareBr[\Big]{F(\hat x_k^*, \hat y_{k+1}^*) - 
        F(\hat x_{k-1}^*, \hat y_k^*)},
    \end{equation*}
    where $(\hat x_k^*, \hat y_k^*)$ is the saddle point of the regularized empirical function $\hat F_k(x,y)$ in Algorithm \ref{algo:perturb-phased_cc}, and we let $\hat x_0^*=x^*$ and $\hat y_{K+1}^*\in \arg\max_{y\in\cY} \bE[F(\tilde x_K, y)]$ only for simplicity of the notations in the above telescopic summation. The first term in the RHS of above decomposition is bounded by Lipschitzness of $F(x,y)$, and the second term uses Lemma \ref{lm:gen-algo_cc}. Therefore, we have that
    \begin{align}
        \max_{y\in\cY}\bE[F(\tilde x_K, y)] - F(x^*, y^*)
        & \leq
        L\sqrt{\bE\norm{\tilde x_K - \hat x_K^*}^2} +
        \sum_{k=1}^K \roundBr*{
        \frac{\mu_k}{2}\bE\norm{\hat x_{k-1}^* - \tilde x_{k-1}}^2 + \frac{8L^2}{\mu\bar n}} +
        \frac{\mu}{2}\norm{\hat y_{K+1}^*}^2 \nonumber \\
        & \leq
        8LDK^2\roundBr*{\frac{1}{\sqrt{n}} + \frac{5\sqrt{d\log(5/\delta)}}{n\varepsilon}},
        \label{eq:(thm:cc-primal)}
    \end{align}
    by the guarantees of $\tilde x_k$ and settings of $\mu_k$ and $\mu$. Similarly we can show that Algorithm \ref{algo:perturb-phased_cc-II} is $(\varepsilon/2, \delta/2)$-DP and the output $\tilde y_K$ satisfies that
    \begin{equation*}
        F(x^*, y^*) - \min_{x\in\cX}\bE[F(x, \tilde y_K)] \leq
        8LDK^2\roundBr*{\frac{1}{\sqrt{n}} + \frac{5\sqrt{d\log(5/\delta)}}{n\varepsilon}}.
    \end{equation*}
    Theorem \ref{thm:cc} then follows by summing up the above two bounds. The detailed proof is in Appendix \ref{sec:app-minimax}.
\end{proof}

According to the lower-bound $\Omega\roundBr*{1/\sqrt{n} + \sqrt{d_x\log(1/\delta)}/(n\varepsilon)}$ discussed in \citep{boob2021optimal}, our utility guarantee is optimal up to logarithmic terms. We then analyze the complexity of Algorithm \ref{algo:perturb-phased_cc} and \ref{algo:perturb-phased_cc-II}. Since two algorithms have the same gradient complexity, we only do the computations for Algorithm \ref{algo:perturb-phased_cc}.

The gradient complexity of our algorithm is the total complexity of $\cA$ to solve the smooth SC-SC saddle point problem at each phase. If we regard the regularized empirical problem in Algorithm \ref{algo:perturb-phased_cc} as a general $(\ell+\mu_k)$-smooth, $\mu_k$-strongly convex and $\mu$-strongly concave finite-sum minimax problem, linear time-complexity cannot be achieved since the condition number $(\ell + \mu_k)/\mu \leq \cO(\sqrt{n}\ell D/L) + 2^k$ can be as large as $\cO(n)$. By Remark \ref{rmk:scsc}, the complexity of each phase in Algorithm \ref{algo:perturb-phased_cc} is $\cO(T(\bar n, \kappa_x, \kappa_y)\log(1/\gamma))$ where $\kappa_x=(\ell+\mu_k)/\mu_k$, $\kappa_y=(\ell+\mu_k)/\mu$ and $\gamma=\delta L^2/(16\mu\bar n^2)$ since $\mu\leq\mu_k$. As a result, when using AL-SVRE \citep{luo2021near} and Catalyst-Acc-SVRG \citep{yang2020catalyst}, the total gradient complexity is
\begin{equation*}
    \sum_{k=1}^K \cO\roundBr*{\roundBr*{\bar n +
    \sqrt{\frac{\bar n(\ell+\mu_k)^2}{\mu_k\mu}} +
    {\bar n}^{3/4}\sqrt{\frac{\ell+\mu_k}{\mu}}}\log(1/\gamma)} \leq
    \cO((n + n\sqrt{\ell D/L} + n\ell D/L + n^{5/4})\log(n/\delta)),
\end{equation*}
since $1/\mu\leq\cO(\sqrt{n} D/L)$ and $\sum_{k=1}^K\sqrt{2^k}=\cO(\sqrt{n})$. The complexity of Extragradient \citep{tseng1995linear} can be computed similarly. We find that the super-linear complexity comes from the potentially large smoothness parameter corresponding to the regularizer $(\mu_k/2)\norm{x-\tilde x_{k-1}}^2$. Next we show this can be avoided by algorithms with fine-grained analyses and thus linear time-complexity is possible.

\citet{jin2022sharper} provided sharper rates when the problem is separable. The problem they considered is
\begin{equation*}
    \min_{x\in\cX} \max_{y\in\cY} \; \hat F_1(x,y) \triangleq
    \frac{1}{n} \sum_{i=1}^n f(x,y;\xi_i) + g(x) - h(y),
\end{equation*}
where $f(x,y;\xi)$ is $\ell$-smooth and convex-concave, $g(x)$ is $\ell_g$-smooth, $h(y)$ is $\ell_h$-smooth and the overall function $\hat F_1(x,y)$ is $\mu_x$-strongly convex $\mu_y$-strongly concave. Their algorithms achieve a $\gamma$-approximate saddle point $(\tilde x, \tilde y)$ of $\hat F_1(x,y)$ such that $\bE\squareBr*{\max_{y\in\cY} \hat F_1(\tilde x, y) - \min_{x\in\cX} \hat F_1(x, \tilde y)} \leq \gamma$ with complexity
\begin{equation*}
    \tilde\cO\roundBr*{\roundBr*{n + \sqrt{n}\roundBr*{
    \sqrt{\frac{\ell_g}{\mu_x}} + \sqrt{\frac{\ell_h}{\mu_y}} + \frac{\ell}{\mu_x} + \frac{\ell}{\sqrt{\mu_x\mu_y}} + \frac{\ell}{\mu_y}
    }}\log(1/\gamma)},
\end{equation*}
where $\tilde\cO$ hides additional logarithmic terms. Our regularized problems satisfy the above special structure with $\ell_g=\mu_x=\mu_k$ and $\ell_h=\mu_y=\mu$. This fine-grained analysis is especially suitable for us since the potentially large smoothness parameter $\mu_k$ does not affect the complexity result. The total complexity of Algorithm \ref{algo:perturb-phased_cc} using results in \citet{jin2022sharper} is thus
\begin{equation*}
    \sum_{k=1}^K \tilde\cO\roundBr*{\roundBr*{\bar n + \sqrt{\bar n}\roundBr*{ \frac{\ell}{\mu_k} + \frac{\ell}{\sqrt{\mu_k\mu}} + \frac{\ell}{\mu}
    }}\log(1/\gamma)} \leq
    \tilde\cO((n + n\ell D/L)\log(n/\delta)),
\end{equation*}
which is linear in the number of samples up to logarithmic factors.

The above results suggest that the smoothness parameter of the regularization term in our problems does not affect the gradient complexity. Actually \citet{palaniappan2016stochastic} studied the case
\begin{equation*}
    \min_{x\in\cX} \max_{y\in\cY} \; \hat F_2(x,y) \triangleq
    \frac{1}{n}\sum_{i=1}^n f(x,y;\xi_i) + g(x,y),
\end{equation*}
where $f(x,y;\xi)$ is convex-concave and $\ell$-smooth, and $g(x,y)$ is $\mu_x$-strongly convex, $\mu_y$-strongly concave and possibly \emph{nonsmooth}. Moreover, the nonsmooth part $g(x,y)$ is required to be ``prox-friendly'' in the sense that the proximal operator
\begin{equation*}
    \text{prox}_g^\lambda (x', y') \triangleq \arg \min_{x\in\cX} \max_{y\in\cY} \curlyBr*{\lambda g(x,y) + \frac{\mu_x}{2}\norm{x-x'}^2 - \frac{\mu_y}{2}\norm{y-y'}^2},
\end{equation*}
is easy to compute for any $(x',y')\in\cX\times\cY$. The complexity of their algorithms SVRG/SAGA and Acc-SVRG/SAGA only depends on the smoothness parameter of $f(x,y;\xi)$ and SC-SC parameters $\mu_x$, $\mu_y$, i.e., $\ell/\min\{\mu_x, \mu_y\}$. The algorithms are first designed for matrix games, but also work for general functions (see extensions to monotone operators in their supplementary material \citep{palaniappan2016stochastic}). Our regularized problems satisfy the special structure since the quadratic regularization term is prox-friendly, and thus the condition number $\ell/\mu$ is only $\cO(\sqrt{n}\ell)$. The resulting complexity is
\begin{equation*}
    \sum_{k=1}^K \cO\roundBr*{\roundBr*{
    \bar n + \frac{\sqrt{\bar n}\ell}{\mu}
    }\log(1/\gamma)} = \cO((n + n\ell D/L)\log(n/\delta)),
\end{equation*}
for Acc-SVRG/SAGA and $\cO((n+n(\ell D/L)^2)\log(n/\delta))$ for SVRG/SAGA by a similar computation.

\begin{table}[t]
    \centering
    \caption{Comparisons of the gradient complexity of Algorithm \ref{algo:perturb-scsc} for smooth SC-SC DP-SMO and Algorithm \ref{algo:perturb-phased_cc} and \ref{algo:perturb-phased_cc-II} for smooth convex-concave DP-SMO when equipping with different non-private minimax optimization algorithms. Here $\ell$ is the smoothness parameter of $f(x,y;\xi)$, $\kappa_x=\ell/\mu_x$, $\kappa_y=\ell/\mu_y$, $\kappa=\max\{\kappa_x, \kappa_y\}$ are condition numbers, and $\tilde\cO$ hides logarithmic factors in $n/\delta$.}
    \begin{tabular}{ccc}
        \toprule
        Algorithm & SC-SC & Convex-Concave \\ 
        \midrule
        Extragradient \citep{tseng1995linear}
        & $\tilde\cO(n\kappa)$
        & $\tilde\cO(n^{3/2}\ell D/L+n^2)$ \\
        SVRG/SAGA \citep{palaniappan2016stochastic}
        & $\tilde\cO(n+\kappa^2)$
        & $\tilde\cO(n+n(\ell D/L)^2)$ \\
        Acc-SVRG/SAGA \citep{palaniappan2016stochastic}
        & $\tilde\cO(n+\sqrt{n}\kappa)$
        & $\tilde\cO(n+n\ell D/L)$ \\
        \citet{jin2022sharper}
        & $\tilde\cO(n + \sqrt{n}\kappa)$
        & $\tilde\cO(n+n\ell D/L)$ \\
        AL-SVRE \citep{luo2021near}
        & $\tilde\cO(n + \sqrt{n\kappa_x\kappa_y} + n^{3/4}\sqrt{\kappa})$
        & $\tilde\cO(n\ell D/L+n^{5/4})$ \\
        Catalyst-Acc-SVRG \citep{yang2020catalyst}
        & $\tilde\cO(n + \sqrt{n\kappa_x\kappa_y} + n^{3/4}\sqrt{\kappa})$
        & $\tilde\cO(n\ell D/L+n^{5/4})$ \\
        \bottomrule
    \end{tabular}
    \label{tab:complexity}
\end{table}

\begin{remark}
    The gradient complexity of Algorithm \ref{algo:perturb-phased_cc} and \ref{algo:perturb-phased_cc-II} is $\cO((n^{3/2}\ell D/L+n^2)\log(n/\delta))$ for Extragradient \citep{tseng1995linear}, $\cO((n\ell D/L+n^{5/4})\log(n/\delta))$ for AL-SVRE \citep{luo2021near} and Catalyst-Acc-SVRG \citep{yang2020catalyst}, $\cO((n+n(\ell D/L)^2)\log(n/\delta))$ for SVRG/SAGA \citep{palaniappan2016stochastic}, $\cO((n+n\ell D/L)\log(n/\delta))$ for Acc-SVRG/SAGA \citep{palaniappan2016stochastic} and \citet{jin2022sharper}. See Table \ref{tab:complexity} for a summary.
\end{remark}

Although the primal problem $\min_{x\in\cX}\Phi(x)\triangleq\max_{y\in\cY}F(x,y)$ is not necessarily smooth when $f(x,y;\xi)$ is smooth and convex-concave, we provide several instances of the flexible framework that achieve near-optimal rates in near-linear time. This improves upon current results for DP-SMO \citep{boob2021optimal, yang2022differentially} and gives a new example where near-linear time algorithms are available for nonsmooth DP-SCO in addition to generalized linear losses \citep{bassily2021differentially}. Our results for DP-SMO suggest that for nonsmooth DP-SCO problems, if there exists some smooth convex-concave minimax reformulation, then near-optimal rates can be attained in near-linear time. For example, the nonsmooth convex problem $\min_{x\in\bR^{d_x}}\norm{Ax-b}_1$ given $A\in\bR^{d_y\times d_x}$ and $b\in\bR^{d_y}$ is equivalent to the smooth convex-concave minimax problem $\min_{x\in\bR^{d_x}}\max_{y\in\bR^{d_y}, \norm{y}_\infty\leq 1} y^\top(Ax-b)$. This provides a better understanding of nonsmooth DP-SCO. However, it remains an open question whether optimal utility bound can be obtained in linear time for general nonsmooth problems.

\subsection{Near-Linear Time Algorithms for Convex--Strongly-Concave Functions}

At last, we briefly discuss the smooth convex--strongly-concave (C-SC) and strongly-convex--concave (SC-C) DP-SMO problems as the special case of convex-concave settings. Actually Algorithm \ref{algo:perturb-phased_cc} can be extended to the C-SC setting, and Algorithm \ref{algo:perturb-phased_cc-II} can be extended to the SC-C case. We will focus on the C-SC case in this section. The SC-C setting is similar and omitted here.

We consider the DP-SMO problem when the objective function $f(x,y;\xi)$ satisfies Assumption \ref{asp:cc}, and additionally $f(x,\cdot;\xi)$ is $\mu_y$-strongly concave on $\cY$ for any $x\in\cX$. When $\mu_y\leq\cO(1/\sqrt{n})$, the strong-concavity does not help too much. We can directly regard it as a convex-concave problem and apply Algorithm \ref{algo:perturb-phased_cc} and \ref{algo:perturb-phased_cc-II} to solve it. However, there is no need to add the regularization term for $y$ in Algorithm \ref{algo:perturb-phased_cc} when $\mu_y\geq\cO(1/\sqrt{n})$. The resulting variant for smooth C-SC DP-SMO iteratively solves
\begin{equation*}
    \min_{x\in\cX} \max_{y\in\cY} \; \curlyBr*{
    \frac{1}{\bar n}\sum_{i=(k-1)\bar n+1}^{k\bar n} f(x,y;\xi_i) + \frac{\mu_k}{2}\norm{x-\tilde x_{k-1}}^2},
\end{equation*}
at each phase $k$ to obtain the output $(x_k, y_k)$ such that with probability $1-\delta/4$,
\begin{equation*}
    \mu_k \norm{x_k-\hat x_k^*}^2 + \mu_y \norm{y_k-\hat y_k^*}^2 \leq \frac{L^2}{\bar n^2 \min\{\mu_k, \mu_y\}},
\end{equation*}
where $(\hat x_k^*, \hat y_k^*)$ is the saddle point of the regularized empirical function. Then the perturbed output $\tilde x_k$ is obtained by adding Gaussian noise $\cN(0, \sigma_k^2\rI_{d_x})$ to $x_k$ with variance
$\sigma_k = (4L/(\bar n\varepsilon))\sqrt{2\log(2.5/\delta)/(\mu_k\min\{\mu_k, \mu_y\})}$. As a direct corollary of Theorem \ref{thm:cc}, the final output $\tilde x_K$ satisfies the following guarantees.
\begin{corollary}
    Let Assumption \ref{asp:cc} hold. Assume that $f(x,\cdot;\xi)$ is $\mu_y$-strongly concave on $\cY$ for any $x\in\cX$ with $\mu_y\geq L/(D\sqrt{n})$. Suppose $\max\{\norm{x},\norm{y}\}\leq D$ for all $x\in\cX$ and $y\in\cY$. Then the variant of Algorithm \ref{algo:perturb-phased_cc} discussed above is $(\varepsilon, \delta)$-DP and its output $\tilde x_K$ satisfies that
    \begin{equation*}
        \max_{y\in\cY} \bE[F(\tilde x_K, y)] - F(x^*, y^*)
        \leq
        6LD\log(n)\roundBr*{\frac{1}{\sqrt{n}} + \frac{4\sqrt{d_x\log(2.5/\delta)}}{n\varepsilon}} +
        L^2\log^2(n)\roundBr*{\frac{8}{\mu_y n} + \frac{65d_x\log(2.5n/\delta)}{\mu_y n^2\varepsilon^2}},
    \end{equation*}
    when setting $\mu=3(L/D)\max\curlyBr*{1/\sqrt{n},
    4\log(n)\sqrt{d_x\log(2.5/\delta)}/(n\varepsilon)}$.
\end{corollary}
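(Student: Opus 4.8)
The proof mirrors that of Theorem~\ref{thm:cc} for Algorithm~\ref{algo:perturb-phased_cc}, with the dual regularizer removed and the fixed dual modulus replaced by the intrinsic strong concavity $\mu_y$; the hypothesis $\mu_y\geq L/(D\sqrt n)$ is exactly what keeps the analysis from degrading.

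\emph{Privacy.} Each phase solves $\hat F_k(x,y)=\tfrac1{\bar n}\sum_{i\in S_k}f(x,y;\xi_i)+\tfrac{\mu_k}{2}\norm{x-\tilde x_{k-1}}^2$, which is $\mu_k$-strongly convex in $x$ (from the regularizer, since $f$ is convex in $x$) and $\mu_y$-strongly concave in $y$ (from $f$ itself, so no regularizer is needed on the dual side). Since $\tilde x_{k-1}$ is determined by the disjoint earlier partitions, for neighboring $S_k\sim S_k'$ the regularizer is fixed, so the empirical-saddle-point stability bound applies: the C-SC analog of Lemma~\ref{lm:gen-scsc}/\ref{lm:gen-cc} gives $\mu_k\norm{\hat x_k^*-\hat x_k^{*\prime}}^2+\mu_y\norm{\hat y_k^*-\hat y_k^{*\prime}}^2\leq 4L^2/(\min\{\mu_k,\mu_y\}\bar n^2)$. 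Combined with the per-phase convergence requirement, a triangle inequality shows the released near-solution $x_k$ has sensitivity at most $4L/(\bar n\sqrt{\mu_k\min\{\mu_k,\mu_y\}})$ with probability at least $1-\delta/2$; the Gaussian mechanism (Definition~\ref{def:gauss}) with $\sigma_k=(4L/(\bar n\varepsilon))\sqrt{2\log(2.5/\delta)/(\mu_k\min\{\mu_k,\mu_y\})}$ then makes each phase $(\varepsilon,\delta)$-DP. Because only the primal is perturbed and only $\tilde x_K$ is released, no privacy budget needs to be split (hence $4L$ and $\log(2.5/\delta)$ rather than $8L$ and $\log(5/\delta)$), and since the $K$ phases use disjoint data, parallel composition (Lemma~\ref{lm:composition}) yields $(\varepsilon,\delta)$-DP overall.

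\emph{Utility, structure.} I would first establish the C-SC version of Lemma~\ref{lm:gen-algo_cc}: applying the C-SC form of Corollary~\ref{cor:gen-cc} to $\hat F_k$ with the one-sided regularizer $G(x,y)=\tfrac{\mu_k}{2}\norm{x-\tilde x_{k-1}}^2$ and with $(u,v)=(\hat x_{k-1}^*,\hat y_{k+1}^*)$, and using that $G$ does not depend on $y$, gives
\[
    \bE[F(\hat x_k^*,\hat y_{k+1}^*)-F(\hat x_{k-1}^*,\hat y_k^*)]\;\leq\;\tfrac{\mu_k}{2}\,\bE\norm{\hat x_{k-1}^*-\tilde x_{k-1}}^2+\tfrac{2\sqrt2\,L^2}{\min\{\mu_k,\mu_y\}\,\bar n}+L\,\Delta_y^{(k)},
\]
where $\Delta_y^{(k)}$ is the stability of $\hat y_{k+1}^*$ with respect to $S_k$ (its only $S_k$-dependence being through the proximal center $\tilde x_k$), bounded as in the proof of Lemma~\ref{lm:gen-algo_cc}; note there is no $\norm{\hat y}^2$-telescoping term here since there is no dual regularizer. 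Telescoping with $\hat x_0^*=x^*$, $\tilde x_0=x_0$ and $\hat y_{K+1}^*\in\arg\max_{y\in\cY}\bE[F(\tilde x_K,y)]$, using $F(x^*,y)\leq F(x^*,y^*)$, and bounding the head term by $L$-Lipschitzness and Jensen's inequality, yields
\[
    \max_{y\in\cY}\bE[F(\tilde x_K,y)]-F(x^*,y^*)\;\leq\;L\sqrt{\bE\norm{\tilde x_K-\hat x_K^*}^2}+\sum_{k=1}^K\roundBr[\Big]{\tfrac{\mu_k}{2}\bE\norm{\hat x_{k-1}^*-\tilde x_{k-1}}^2+\tfrac{2\sqrt2 L^2}{\min\{\mu_k,\mu_y\}\bar n}+L\Delta_y^{(k)}}.
\]

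\emph{Utility, arithmetic.} Plugging in $\bE\norm{\tilde x_k-x_k}^2=\sigma_k^2 d_x=32L^2 d_x\log(2.5/\delta)/(\mu_k\min\{\mu_k,\mu_y\}\bar n^2\varepsilon^2)$ (the base-algorithm's target accuracy $\gamma$ being taken small enough that its in-expectation error is lower-order, as in Theorem~\ref{thm:scsc}) and summing, I split each $\sum_k 1/\min\{\mu_k,\mu_y\}$ into the geometric part $\sum_k 1/\mu_k\leq 2/\mu$ and a ``capped'' part (at most $K=\log n$ phases have $\mu_k>\mu_y$, each contributing $1/\mu_y$). With $\bar n=n/K$ and $\mu=3(L/D)\max\{1/\sqrt n,4\log(n)\sqrt{d_x\log(2.5/\delta)}/(n\varepsilon)\}$, the geometric part together with the first-phase term $\tfrac{\mu_1}{2}\norm{x^*-x_0}^2\leq 2\mu_1 D^2=4\mu D^2$ produces the $LD\log(n)(1/\sqrt n+\sqrt{d_x\log(1/\delta)}/(n\varepsilon))$ group, while the capped part produces the $L^2\log^2(n)(1/(\mu_y n)+d_x\log(1/\delta)/(\mu_y n^2\varepsilon^2))$ group; finally $L\sqrt{\bE\norm{\tilde x_K-\hat x_K^*}^2}\approx L\sigma_K\sqrt{d_x}$ is absorbed into the first group because $\mu_K=\mu n$, using the lower bounds on $\mu$ together with $1/\sqrt{\mu_y}\leq\sqrt{D/L}\,n^{1/4}$ --- which is exactly where $\mu_y\geq L/(D\sqrt n)$ enters. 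Collecting constants and the leftover logarithmic factors (into $\log n$, $\log^2 n$, $\log(2.5n/\delta)$) gives the claimed bound.

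\emph{Main obstacle.} The delicate part is the dual side in the absence of a dual regularizer: one must check that the generalization machinery (Lemma~\ref{lm:gen-cc}, Corollary~\ref{cor:gen-cc}) carries over with the same constants when the $\mu_y$-strong concavity comes from $f$ rather than from $G$ (so the Lipschitz constant stays $L$ and one still gets a $\min\{\mu_k,\mu_y\}^{-1}$ generalization rate), and, more annoyingly, one must bound $\Delta_y^{(k)}$ --- how a perturbation of the proximal center $\tilde x_k$ propagates to the dual component $\hat y_{k+1}^*$ of the next phase's saddle point --- and verify that $\sum_k L\Delta_y^{(k)}$ does not dominate. The secondary point needing care is the last-phase noise term, which scales like $1/(\mu_K\min\{\mu_K,\mu_y\})=1/(\mu n\min\{\mu n,\mu_y\})$; controlling it forces the hypothesis $\mu_y\geq L/(D\sqrt n)$, below which the strong concavity is too weak to help and one would instead simply run Algorithms~\ref{algo:perturb-phased_cc} and~\ref{algo:perturb-phased_cc-II}.
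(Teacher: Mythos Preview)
Your proposal is correct and follows essentially the same approach as the paper, just with considerably more detail: the paper's proof is a single sentence (``directly follows from \eqref{eq:(thm:cc-primal)} when replacing $\mu$ by $\min\{\mu_k,\mu_y\}$ for every $\mu$ appearing in the denominator and the fact that $1/\min\{\mu_k,\mu_y\}\leq 1/\mu_k+1/\mu_y$''), and you have correctly unpacked what that replacement entails, including the C-SC analogs of Lemma~\ref{lm:gen-cc}/Corollary~\ref{cor:gen-cc} and the $\Delta_y^{(k)}$ term. Your ``geometric part plus capped part'' decomposition of $\sum_k 1/\min\{\mu_k,\mu_y\}$ is equivalent to the paper's termwise inequality, and your identification of where the hypothesis $\mu_y\geq L/(D\sqrt n)$ enters (the last-phase noise term) is on target.
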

The proof directly follows from \eqref{eq:(thm:cc-primal)} when replacing $\mu$ by $\min\{\mu_k, \mu_y\}$ for every $\mu$ appearing in the denominator and the fact that $1/\min\{\mu_k, \mu_y\}\leq 1/\mu_k+1/\mu_y$. The result also recovers Theorem \ref{thm:cc} if $\mu_y\geq\mu$. When $f(x,y;\xi)$ is convex--strongly-concave, the primal function $\Phi(x)=\max_{y\in\cY} F(x,y)$ is smooth and convex. Since $F(x^*, y^*)=\Phi(x^*)$, the above guarantees imply that
\begin{equation*}
    \bE[\Phi(\tilde x_K) - \Phi(x^*)]
    \leq
    6LD\log(n)\roundBr*{\frac{1}{\sqrt{n}} + \frac{4\sqrt{d_x\log(2.5/\delta)}}{n\varepsilon}} +
    L^2\log^2(n)\roundBr*{\frac{8}{\mu_y n} + \frac{65d_x\log(2.5n/\delta)}{\mu_y n^2\varepsilon^2}},
\end{equation*}
which is the optimal rate, up to logarithmic factors, for the smooth convex DP-SCO problem $\min_{x\in\cX} \Phi(x)$. Then we analyze the gradient complexity. By similar computations as the above section, the total complexity is $\tilde\cO(n+n\ell^2+\kappa_y^2)$ for SVRG/SAGA \citep{palaniappan2016stochastic}, $\tilde\cO(n+n\ell+\sqrt{n}\kappa_y)$ for Acc-SVRG/SAGA \citep{palaniappan2016stochastic} and $\tilde\cO(n+n\ell+\sqrt{n}\kappa_y)$ using \citet{jin2022sharper}, where $\tilde\cO$ hides additional logarithmic terms and $\kappa_y=\ell/\mu_y$. Not surprisingly, it is possible to achieve near-optimal rates with near-linear time-complexity.

\section{Conclusion} \label{sec:conclude}

We provide a general framework for both smooth DP-SCO and DP-SMO problems with the near-optimal privacy-utility trade-offs. The flexible framework allows to bring various off-the-shelf, fast convergent non-private optimization algorithms into the DP domain. Using the framework, we enrich the class of near-linear time algorithms for smooth DP-SCO and provide the first near-linear time algorithms for smooth DP-SMO. For future work, it is interesting to study whether the logarithmic terms in gradient complexity and the final utility bound of our algorithms can be further removed.  Another direction is to derive simpler private algorithms for smooth convex-concave DP-SMO based on the special structure of the regularized problems.
Moreover, we believe our framework also opens the door to simpler derivatives with possibly improved complexities for nonsmooth DP-SCO, nonsmooth DP-SMO, and even more general forms of optimization problems such as variational inequalities and games. We will leave these for future investigation.

\section*{Acknowledgements}{
    L.Z. gratefully acknowledges funding by the Max Planck ETH Center for Learning Systems (CLS).
    This work does not relate to the current position of K.T. at Amazon. 
    S.O. is  supported in part by NSF grants CNS-2002664, IIS-1929955, and CCF-2019844 as a part of NSF Institute for Foundations of Machine Learning (IFML).
    N.H. is supported by ETH research grant funded through ETH Zurich Foundations and NCCR Automation funded through Swiss National Science Foundations; part of the work was done while N.H. was visiting the Simons Institute for the Theory of Computing.
}

\bibliographystyle{plainnat}
\bibliography{ref.bib}

\clearpage
\appendix

\section{Differentially Private Stochastic Convex Optimization} \label{sec:app-min}

This section contains analyses of algorithms for DP-SCO. We first present some helpful lemmas.

\subsection{Supporting Lemmas}

The lemma below provides a specific form of parallel composition in Lemma \ref{lm:composition} that can be directly applied to show $(\varepsilon, \delta)$-DP of the phased algorithms.

\begin{lemma}
    (Parallel Composition \citep{mcsherry2009privacy})
    Given a dataset $S$ and its disjoint partition $S=\bigcup_{k=1}^K S_k$, define the mechanisms as $\cA_1=\cA_1(S_1), \cA_2=\cA_2(S_2; \cA_1), \cdots, \cA_K=\cA_K(S_K; \cA_{K-1})$. Suppose each mechanism $\cA_k(S_k; \cA_{k-1})$ is $(\varepsilon, \delta)$-DP with respect to the set $S_k$ for $k=1,\cdots,K$, then the composition $\cA_K$ is $(\varepsilon, \delta)$-DP with respect to the full dataset $S$.
    \label{lm:parallel}
\end{lemma}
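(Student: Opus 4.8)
The plan is to reduce the claim to the single per-phase guarantee for the block that the modified data point lands in, combined with the post-processing property of differential privacy. Fix neighboring datasets $S\sim S'$ and a measurable set $A$ in the range of $\cA_K$. Because the partition $S=\bigcup_{k=1}^K S_k$ consists of disjoint blocks, the single element in which $S$ and $S'$ differ belongs to exactly one block; call its index $j$, so that $S_j\sim S_j'$ while $S_k=S_k'$ for every $k\ne j$.

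First I would observe that the prefix $\cA_{j-1}$ has the same law under $S$ and under $S'$: the variables $\cA_1(S_1),\cA_2(S_2;\cA_1),\dots,\cA_{j-1}(S_{j-1};\cA_{j-2})$ are generated using only $S_1,\dots,S_{j-1}=S_1',\dots,S_{j-1}'$, hence their joint distribution is identical; denote this common law by $\nu$. Next I would condition on $\cA_{j-1}=a$. Given this, the tuple $(\cA_j,\dots,\cA_K)$ is obtained by running $\cA_j(S_j;a)$ and then running $\cA_{j+1}(S_{j+1};\cA_j),\dots,\cA_K(S_K;\cA_{K-1})$; the latter chain depends only on $S_{j+1},\dots,S_K$ (all unaffected) and on the value of $\cA_j$, so it is a randomized post-processing of $\cA_j$ that does not touch $S_j$. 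By hypothesis the map $S_j\mapsto\cA_j(S_j;a)$ is $(\varepsilon,\delta)$-DP, so by the post-processing property the conditional law of $\cA_K$ given $\cA_{j-1}=a$ is $(\varepsilon,\delta)$-DP with respect to $S_j$, i.e.
\begin{equation*}
    \bP(\cA_K(S)\in A\mid \cA_{j-1}=a)\leq e^\varepsilon\,\bP(\cA_K(S')\in A\mid \cA_{j-1}=a)+\delta
\end{equation*}
for $\nu$-almost every $a$.

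Finally I would integrate this inequality against $\nu$, which is legitimate precisely because $\nu$ is the law of $\cA_{j-1}$ on both sides:
\begin{equation*}
    \bP(\cA_K(S)\in A)=\int \bP(\cA_K(S)\in A\mid \cA_{j-1}=a)\,d\nu(a)\leq e^\varepsilon\int \bP(\cA_K(S')\in A\mid \cA_{j-1}=a)\,d\nu(a)+\delta=e^\varepsilon\,\bP(\cA_K(S')\in A)+\delta ,
\end{equation*}
which is $(\varepsilon,\delta)$-DP of $\cA_K$ with respect to $S$. (The same argument applied to the whole transcript $(\cA_1,\dots,\cA_K)$ would also work, with $\cA_K$ recovered by projection.)

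The main obstacle is not the algebra but the dependence bookkeeping: one must (i) verify that a single insertion/deletion/substitution really localizes to one block of the fixed partition (so that the index $j$ is well defined), (ii) make precise that the per-phase statement ``$(\varepsilon,\delta)$-DP with respect to $S_k$'' is meant to hold uniformly over the realized advice $\cA_{k-1}$, and (iii) handle the conditioning measure-theoretically so that $\nu$ is genuinely the same measure on the two sides and the final averaging is valid. Once these points are pinned down, the post-processing step is immediate.
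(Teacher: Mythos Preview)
Your proposal is correct and follows essentially the same approach as the paper: both arguments localize the neighboring-dataset change to the unique block $S_j$, use the per-phase $(\varepsilon,\delta)$-DP guarantee there, and exploit that the remaining phases are unaffected. The paper phrases this via a product-of-conditionals decomposition $\bP(\cA_K(S))=\prod_k \bP(\cA_k(S_k)\mid\cA_{k-1})$ and bounds only the $j$-th factor, whereas you make the same idea precise by conditioning on the prefix, invoking post-processing for the suffix, and integrating against the common prefix law $\nu$; your version is in fact more careful about the measure-theoretic points the paper leaves implicit.
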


\begin{proof}
    For neighboring datasets $S\sim S'$, without loss of generality we let $S=\{\xi_1,\cdots,\xi_i,\cdots,\xi_n\}$ and $S'=\{\xi_1,\cdots,\xi_i',\cdots,\xi_n\}$, where $\xi_i$ and $\xi_i'$ are sampled independently. That is, the only difference between the datasets comes from $\xi_i$ and $\xi_i'$, and the remaining samples are the same. As a result, for the disjoint partitions $S=\bigcup_{k=1}^K S_k$ and $S'=\bigcup_{k=1}^K S_k'$, we can conclude that there is only one pair $S_j\sim S_j'$ for some $j\in\{1,\cdots,K\}$, and that $S_k=S_k'$ for $k\neq j$. Then we have that
    \begin{align*}
        \bP(\cA_K(S))
        & = \prod_{k=1}^K \bP(\cA_k(S_k)|\cA_{k-1}) \\
        & = \bP(\cA_j(S_j)|\cA_{j-1}) \prod_{k=1, k\neq j}^K \bP(\cA_k(S_k')|\cA_{k-1}) \\
        & \leq \roundBr[\Big]{e^\varepsilon\bP(\cA_j(S_j')|\cA_{j-1}) + \delta}
        \prod_{k=1, k\neq j}^K \bP(\cA_k(S_k')|\cA_{k-1}) \\
        & \leq e^\varepsilon\bP(\cA_K(S')) + \delta.
    \end{align*}
    By Definition \ref{def:dp} of $(\varepsilon, \delta)$-differential privacy, the proof is complete.
\end{proof}

For the stability and generalization results of the empirical risk minimization \citep{shalev2009stochastic}, we give a detailed proof of the regularized version in Lemma \ref{lm:gen-c} for completeness. The proof of Lemma \ref{lm:gen-sc} can be derived similarly. Note that all the stability results depend on the Lipschitzness parameter of the objectives, making $L$ a critical parameter in private algorithms \citep{bassily2019private, feldman2020private, yang2022differentially}. In practice, any estimate of the upper bound of Lipschitz constant $L$ can be used, e.g., see methods in \citep{wood1996estimation, fazlyab2019efficient}.

\begin{proof}[Proof of Lemma \ref{lm:gen-c}]
    Without loss of generality, for any $1\leq i\leq n$, we let neighboring datasets $S=\{\xi_1,\cdots,\xi_i,\cdots,\xi_n\}$ and $S_i'=\{\xi_1,\cdots,\xi_i',\cdots,\xi_n\}$, where $\xi_i$ and $\xi_i'$ are sampled independently.
    
    Since $\hat F_S(x) + G(x)$ is $\mu$-strongly convex and $\hat x_S^*$ is the optimal solution, we have that
    \begin{equation*}
        \hat F_S(\hat x_{S_i'}^*) + G(\hat x_{S_i'}^*)
        \geq \hat F_S(\hat x_S^*) + G(\hat x_S^*) +
        \frac{\mu}{2}\norm{\hat x_S^* - \hat x_{S_i'}^*}^2.
    \end{equation*}
    Similarly, by strong-convexity of $\hat F_{S_i'} + G(x)$ and optimality of $\hat x_{S_i'}^*$, we have that
    \begin{equation*}
        \hat F_{S_i'}(\hat x_S^*) + G(\hat x_S^*)
        \geq \hat F_{S_i'}(\hat x_{S_i'}^*) + G(\hat x_{S_i'}^*) +
        \frac{\mu}{2}\norm{\hat x_S^* - \hat x_{S_i'}^*}^2.
    \end{equation*}
    Summing up the above two equations, we can obtain
    \begin{align}
        \mu\norm{\hat x_S^*-\hat x_{S_i'}^*}^2
        & \leq
        \hat F_S(\hat x_{S_i'}^*) - \hat F_{S_i'}(\hat x_{S_i'}^*) +
        \hat F_{S_i'}(\hat x_S^*) - \hat F_S(\hat x_S^*) \nonumber \\
        & =
        \frac{1}{n}[f(\hat x_{S_i'}^*;\xi_i) - f(\hat x_{S_i'}^*;\xi_i')] +
        \frac{1}{n}[f(\hat x_S^*;\xi_i') - f(\hat x_S^*;\xi_i)] \nonumber \\
        & =
        \frac{1}{n}[f(\hat x_{S_i'}^*;\xi_i) - f(\hat x_S^*;\xi_i)] +
        \frac{1}{n}[f(\hat x_S^*;\xi_i') - f(\hat x_{S_i'}^*;\xi_i')] \nonumber \\
        & \leq
        \frac{2L}{n}\norm{\hat x_S^* - \hat x_{S_i'}^*},
        \label{eq-pf:(lm:gen-c)-(a)}
    \end{align}
    where the first equality holds since for any $x$,
    \begin{align}
        \hat F_S(x) - \hat F_{S_i'}(x)
        & =
        \frac{1}{n}\sum_{j=1}^n f(x;\xi_j) -
        \frac{1}{n}\roundBr*{\sum_{j=1, j\neq i}^n f(x;\xi_j) + f(x;\xi_i')}
        \nonumber \\
        & =
        \frac{1}{n}[f(x;\xi_i) - f(x;\xi_i')],
        \label{eq-pf:(lm:gen-c)-(fs)}
    \end{align}
    and the last inequality follows from $L$-Lipschitzness of $f(x;\xi)$. As a result of \eqref{eq-pf:(lm:gen-c)-(a)}, we obtain the stability of empirical solutions as
    \begin{equation}
        \norm{\hat x_S^* - \hat x_{S_i'}^*} \leq \frac{2L}{\mu n}.
        \label{eq-pf:(lm:gen-c)-(stability)}
    \end{equation}
    For the generalization error, we follow the standard results on stability and generalization. Let $x^*=\arg\min_{x\in\cX}\{F(x)+G(x)\}$ for notation simplicity, and then
    \begin{align*}
        \bE[F(\hat x_S^*) + G(\hat x_S^*)] - \bE[F(x^*) + G(x^*)]
        & \overset{(a)}{=}
        \bE[F(\hat x_S^*) + G(\hat x_S^*)] - \bE[\hat F_S(x^*) + G(x^*)] \\
        & \overset{(b)}{\leq}
        \bE[F(\hat x_S^*) + G(\hat x_S^*)] -
        \bE[\hat F_S(\hat x_S^*) + G(\hat x_S^*)] \\
        & \overset{(c)}{=}
        \bE\squareBr*{\frac{1}{n}\sum_{i=1}^n F(\hat x_{S_i'}^*) -
        \frac{1}{n}\sum_{i=1}^n f(\hat x_S^*;\xi_i)} \\
        & \overset{(d)}{=}
        \frac{1}{n} \sum_{i=1}^n
        \bE[f(\hat x_{S_i'}^*;\xi_i) - f(\hat x_S^*;\xi_i)] \\
        & \overset{(e)}{\leq}
        \frac{2L^2}{\mu n},
    \end{align*}
    where $(a)$ holds since $x^*$ is independent of $S$, $(b)$ follows by the optimality of $\hat x_S^*$, $(c)$ uses the fact that $\hat x_S^*$ and $\hat x_{S_i'}^*$ have the same distribution for each $i$, $(d)$ is true because $S_i'$ is independent of $\xi_i$ and $(e)$ uses Lipschitzness of $f(x;\xi)$ and stability bound \eqref{eq-pf:(lm:gen-c)-(stability)}.
\end{proof}

\subsection{Near-Linear Time Algorithms for Smooth Strongly-Convex Functions}

The proof of Theorem \ref{thm:excess-sc} that gives the guarantees of Algorithm \ref{algo:perturb-sc} is provided below.

\begin{proof}[Proof of Theorem \ref{thm:excess-sc}]
    We first prove the privacy guarantee. Given neighboring datasets $S\sim S'$, the sensitivity of $\cA$ in Algorithm \ref{algo:perturb-sc} is bounded as
    \begin{align}
        \norm{\cA(S) - \cA(S')}
        & \leq
        \norm{\cA(S)-\hat x_S^*} + \norm{\hat x_S^*-\hat x_{S'}^*} +
        \norm{\hat x_{S'}^* - \cA(S')} \nonumber \\
        & \leq \frac{4L}{\mu n},
        \label{eq-pf:(thm:excess-sc)-(sensitivity)}
    \end{align}
    with probability $1-\delta/2$ by the union bound, where the last inequality follows from the stability of empirical solutions in Lemma \ref{lm:gen-sc} and guarantees of algorithm $\cA$. Then by Gaussian mechanism in Definition \ref{def:gauss}, Algorithm \ref{algo:perturb-sc} is $(\varepsilon, \delta)$-DP when setting $\sigma=4L\sqrt{2\log(2.5/\delta)}/(\mu n\varepsilon)$.

    We then give the guarantees for the output $\tilde x$. Since $\hat F_S$ is $\ell$-smooth and $\hat x_S^*$ is in the interior of $\cX$, the excess empirical risk satisfies that
    \begin{align}
        \bE[\hat F_S(\tilde x) - \hat F_S(\hat x_S^*)]
        & \leq
        \frac{\ell}{2}\bE\norm{\tilde x - \hat x_S^*}^2 \nonumber \\
        & \leq
        \ell\roundBr*{\bE\norm{\tilde x - \cA(S)}^2 +
        \bE\norm{\cA(S) - \hat x_S^*}^2} \nonumber \\
        & \leq
        \ell\roundBr*{
        \frac{32L^2\cdot d\log(2.5/\delta)}{\mu^2 n^2\varepsilon^2} + \frac{\delta^2 L^2}{16 \mu^2 n^2}} \nonumber \\
        & <
        33L^2\kappa\cdot\frac{d\log(2.5/\delta)}{\mu n^2\varepsilon^2},
        \label{eq-pf:(thm:excess-sc)-(utility)}
    \end{align}
    where the third inequality is due to the guarantee of $\cA$ in Remark \ref{rmk:sc} and the choice of $\sigma$, and the last inequality follows from the standard settings that $\varepsilon<1$, $d\geq 1$ and $\delta<1/n$, and $\kappa=\ell/\mu$ is the condition number. Similarly for the excess population risk, we have that
    \begin{align*}
        \bE[F(\tilde x) - F(x^*)]
        & \leq
        \frac{\ell}{2} \bE\norm{\tilde x - x^*}^2 \\
        & \leq
        \frac{3\ell}{2}\roundBr*{\bE\norm{\tilde x - \cA(S)}^2 +
        \bE\norm{\cA(S) - \hat x_S^*}^2 + \bE\norm{\hat x_S^* - x^*}^2} \\
        & \leq
        \ell \roundBr*{
        \frac{48L^2\cdot d\log(2.5/\delta)}{\mu^2 n^2\varepsilon^2} + \frac{3\delta^2L^2}{32\mu^2 n^2} + \frac{6L^2}{\mu^2 n}} \\
        & <
        L^2\kappa\roundBr*{\frac{7}{\mu n} +
        \frac{48d\log(2.5/\delta)}{\mu n^2\varepsilon^2}},
    \end{align*}
    where we use the fact that $(\mu/2)\bE\norm{\hat x_S^* - x^*}^2 \leq \bE[F(\hat x_S^*) - F(x^*)]$ by strong-convexity and the generalization bound in Lemma \ref{lm:gen-sc}. Note that we can still obtain the bound for $\bE\norm{\tilde x - \hat x_S^*}^2$ and $\bE\norm{\tilde x - x^*}^2$ if we do not assume $\hat x_S^*$ and $x^*$ are interior points. 
\end{proof}

\subsection{Near-Linear Time Algorithms for Smooth Convex Functions}

This section contains the proof of Theorem \ref{thm:excess-c} that gives the guarantees of Algorithm \ref{algo:perturb-phased_c} for convex functions.

\begin{proof}[Proof of Theorem \ref{thm:excess-c}]
    We first give the guarantees of each phase in Algorithm \ref{algo:perturb-phased_c} by a similar proof as Theorem \ref{thm:excess-sc}, i.e., see \eqref{eq-pf:(thm:excess-sc)-(sensitivity)} and \eqref{eq-pf:(thm:excess-sc)-(utility)}. For phase $1\leq k\leq K$, by the stability of $\hat x_k^*$ in Lemma \ref{lm:gen-c} and the requirement of output $x_k$ in Algorithm \ref{algo:perturb-phased_c}, we know $x_k$ has sensitivity bounded by $4L/(\mu_k\bar n)$ with probability at least $1-\delta/2$, and then setting
    $\sigma_k=4L\sqrt{2\log(2.5/\delta)}/(\mu_k\bar n\varepsilon)$ guarantees $(\varepsilon, \delta)$-DP. As a result, we can obtain that
    \begin{align}
        \bE\norm{\tilde x_k - \hat x_k^*}^2
        & \leq
        2\bE\norm{\tilde x_k - x_k}^2 + 2\bE\norm{x_k - \hat x_k^*}^2 \nonumber\\
        & \leq
        2d\sigma_k^2 + \frac{\delta^2L^2}{8\mu_k^2\bar n^2} \nonumber \\
        & <
        65L^2\cdot\frac{d\log(2.5/\delta)}{\mu_k^2 \bar n^2\varepsilon^2}.
        \label{eq-pf:(thm:excess-c)-(phase)}
    \end{align}
    
    Then we analyze the full algorithm. By the parallel composition in Lemma \ref{lm:parallel}, Algorithm \ref{algo:perturb-phased_c} is $(\varepsilon, \delta)$-DP since we use disjoint datasets for different phases and each phase is $(\varepsilon, \delta)$-DP. For the excess population risk of the output $\tilde x_K$, we decompose the error as
    \begin{equation}
        \bE[F(\tilde x_K) - F(x^*)]
        = \bE[F(\tilde x_K) - F(\hat x_K^*)] +
        \sum_{k=1}^K \bE[F(\hat x_k^*) - F(\hat x_{k-1}^*)],
        \label{eq-pf:(thm:excess-c)-(decompose)}
    \end{equation}
    where $x^*\in\arg\min_{x\in\cX} F(x)$ is the population optimal solution, $\hat x_k^*$ is the optimal solution of the regularized empirical function $\hat F_k(x)$ in Algorithm \ref{algo:perturb-phased_c}, and we let
    $\hat x_0^*=x^*$ only for simplicity of the analysis. For the first term in the RHS of \eqref{eq-pf:(thm:excess-c)-(decompose)}, we have that
    \begin{align*}
        \bE[F(\tilde x_K) - F(\hat x_K^*)]
        & \overset{(a)}{\leq}
        L\sqrt{\bE\norm{\tilde x_K - \hat x_K^*}^2} \\
        & \overset{(b)}{<}
        9L^2\cdot
        \frac{\sqrt{d\log(2.5/\delta)}}{\mu_K\bar n\varepsilon} \\
        & \overset{(c)}{\leq}
        9LD\cdot\frac{\sqrt{d\log(2.5/\delta)}}{\bar n\varepsilon},
    \end{align*}
    where $(a)$ holds by $L$-Lipschitzness of $F(x)$ and Cauchy–Schwarz inequality, $(b)$ uses \eqref{eq-pf:(thm:excess-c)-(phase)} and $(c)$ follows from the settings that $\mu_K=\mu n$ and $\mu\geq L/(D\sqrt{n})$. Therefore, with Lemma \ref{lm:gen-algo_c} to handle the second term in the RHS of \eqref{eq-pf:(thm:excess-c)-(decompose)}, we obtain that
    \begin{align*}
        \bE[F(\tilde x_K) - F(x^*)]
        & \leq
        9LD\cdot\frac{\sqrt{d\log(2.5/\delta)}}{\bar n\varepsilon} +
        \sum_{k=1}^K \roundBr*{
        \frac{\mu_k}{2} \bE\norm{\hat x_{k-1}^* - \tilde x_{k-1}}^2 + \frac{2L^2}{\mu_k\bar n}} \\
        & \leq
        9LD\cdot\frac{\sqrt{d\log(2.5/\delta)}}{\bar n\varepsilon} +
        \sum_{k=2}^K \frac{65\mu_k}{2\mu_{k-1}}
        \frac{L^2 d\log(2.5/\delta)}{\mu_{k-1} \bar n^2\varepsilon^2} +
        \sum_{k=1}^K\frac{2L^2}{\mu_k\bar n} + \frac{\mu_1}{2}\norm{x^*-x_0}^2 \\
        & \leq
        9LD\cdot\frac{\sqrt{d\log(2.5/\delta)}}{\bar n\varepsilon} +
        \frac{65L^2 d\log(2.5/\delta)}{\mu \bar n^2\varepsilon^2} +
        \frac{2L^2}{\mu\bar n} + \mu\norm{x^*-x_0}^2 \\
        & \leq
        4LD\cdot\log(n)\roundBr*{\frac{1}{\sqrt{n}} + \frac{7\sqrt{d\log(2.5/\delta)}}{n\varepsilon}},
    \end{align*}
    where the second inequality uses the guarantees of $\tilde x_{k-1}$ in \eqref{eq-pf:(thm:excess-c)-(phase)} for $k\geq 2$ and the settings that $\hat x_0^*=x^*, \tilde x_0=x_0$, the third inequality follows from the choice that $\mu_k=\mu\cdot2^k$ and the last inequality holds since
    $\mu=(L/D)\max\{1/\sqrt{n}, 14\log(n)\sqrt{d\log(2.5/\delta)}/(n\varepsilon)\}$
    and $\norm{x^*}^2\leq D^2$ when the initialization is $x_0=0$.
\end{proof}

\section{Differentially Private Stochastic Minimax Optimization} \label{sec:app-minimax}

In this section, we give the analysis of algorithms for smooth DP-SMO. We first present some useful lemmas.

\subsection{Supporting Lemmas}

Here we overuse notations to provide some general properties of the duality gap for a smooth and strongly-convex--strongly-concave function $f(x,y)$.

\begin{lemma}
    Let $f(x,y)$ be a $\mu_x$-strongly convex $\mu_y$-strongly concave function with one saddle point $(x^*, y^*)\in\cX\times\cY$. Then for any $(\tilde x, \tilde y)\in\cX\times\cY$, its duality gap satisfies that
    \begin{equation*}
        \max_{y\in\cY} f(\tilde x, y) - \min_{x\in\cX} f(x, \tilde y) \geq \frac{\mu_x}{2}\norm{\tilde x-x^*}^2 +
        \frac{\mu_y}{2}\norm{\tilde y-y^*}^2.
    \end{equation*}
    If $f(x,y)$ is also $\ell$-smooth and the saddle point $(x^*, y^*)$ is in the interior of $\cX\times\cY$, it holds that
    \begin{equation*}
        \max_{y\in\cY} f(\tilde x, y) - \min_{x\in\cX} f(x, \tilde y) \leq
        \frac{(\kappa_y+1)\ell}{2}\norm{\tilde x-x^*}^2 +
        \frac{(\kappa_x+1)\ell}{2}\norm{\tilde y-y^*}^2,
    \end{equation*}
    where $\kappa_x=\ell/\mu_x$ and $\kappa_y=\ell/\mu_y$ are condition numbers.
    \label{lm:gap-minimax}
\end{lemma}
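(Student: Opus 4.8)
The plan is to treat the two inequalities separately. The lower bound needs only the saddle-point inequalities and strong convexity/concavity; the upper bound additionally needs smoothness, exploited through the primal and dual envelope functions $\Phi(x)\triangleq\max_{y\in\cY}f(x,y)$ and $\Psi(y)\triangleq\min_{x\in\cX}f(x,y)$.

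For the lower bound, I would first drop the inner optimizations in the favorable direction: $\max_{y\in\cY}f(\tilde x,y)\ge f(\tilde x,y^*)$ and $\min_{x\in\cX}f(x,\tilde y)\le f(x^*,\tilde y)$, so the gap is at least $f(\tilde x,y^*)-f(x^*,\tilde y)$. Because $(x^*,y^*)$ is a saddle point, $x^*$ minimizes the $\mu_x$-strongly convex map $x\mapsto f(x,y^*)$ over $\cX$, and the standard consequence of strong convexity at a minimizer over a convex set gives $f(\tilde x,y^*)\ge f(x^*,y^*)+\tfrac{\mu_x}{2}\norm{\tilde x-x^*}^2$; symmetrically, $y^*$ maximizes the $\mu_y$-strongly concave map $y\mapsto f(x^*,y)$, so $f(x^*,\tilde y)\le f(x^*,y^*)-\tfrac{\mu_y}{2}\norm{\tilde y-y^*}^2$. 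Subtracting and cancelling $f(x^*,y^*)$ yields the claimed lower bound.

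For the upper bound, note that the gap equals $\Phi(\tilde x)-\Psi(\tilde y)$ and, by the saddle property, $\Phi(x^*)=f(x^*,y^*)=\Psi(y^*)$. The crucial fact is that $\Phi$ is convex and $(1+\kappa_y)\ell$-smooth and $\Psi$ is concave and $(1+\kappa_x)\ell$-smooth: Danskin's theorem gives $\nabla\Phi(x)=\nabla_x f(x,y^*(x))$ for the (unique, by strong concavity) maximizer $y^*(x)$, and the map $x\mapsto y^*(x)$ is $\kappa_y$-Lipschitz, which I would obtain from the variational inequality characterizing $y^*(x)$ together with $\mu_y$-strong concavity in $y$ and $\ell$-smoothness; combining these, $\norm{\nabla\Phi(x_1)-\nabla\Phi(x_2)}\le\ell\norm{x_1-x_2}+\ell\norm{y^*(x_1)-y^*(x_2)}\le(1+\kappa_y)\ell\norm{x_1-x_2}$, and symmetrically for $\Psi$. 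Since $(x^*,y^*)$ is interior, $\nabla\Phi(x^*)=\nabla_x f(x^*,y^*)=0$ and $\nabla\Psi(y^*)=\nabla_y f(x^*,y^*)=0$, so the smoothness (descent and ascent) inequalities give $\Phi(\tilde x)\le\Phi(x^*)+\tfrac{(1+\kappa_y)\ell}{2}\norm{\tilde x-x^*}^2$ and $\Psi(\tilde y)\ge\Psi(y^*)-\tfrac{(1+\kappa_x)\ell}{2}\norm{\tilde y-y^*}^2$. Subtracting and using $\Phi(x^*)=\Psi(y^*)$ completes the upper bound.

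I expect the main obstacle to be obtaining the smoothness of $\Phi$ and $\Psi$ with the \emph{sharp} constants $(1+\kappa_y)\ell$ and $(1+\kappa_x)\ell$ (together with citing or establishing Danskin's formula and the Lipschitzness of the argmax map): a naive direct expansion of $f(\tilde x,y^*(\tilde x))-f(x^*,y^*)$ bounding the cross term $\langle\nabla_x f(x^*,y^*(\tilde x)),\tilde x-x^*\rangle$ by hand only yields the weaker constant $(1+2\kappa_y)\ell$, so the argument must be routed through the envelope's gradient (equivalently its Taylor remainder at $x^*$). A minor point of care is that $y^*(\tilde x)$ need not be an interior point of $\cY$, but both the $\kappa_y$-Lipschitz bound on $y^*(\cdot)$ and Danskin's gradient formula hold for maximization over the convex set $\cY$ without interiority, so this causes no difficulty.
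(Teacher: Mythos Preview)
Your proposal is correct and follows essentially the same route as the paper: the lower bound via $f(\tilde x,y^*)-f(x^*,\tilde y)$ and strong convexity/concavity, and the upper bound via the smoothness of the primal and dual envelopes $\Phi$ and $\Psi$ together with $\nabla\Phi(x^*)=\nabla\Psi(y^*)=0$ at the interior saddle point. The only difference is that the paper simply cites \citep[Proposition~1]{zhang2021generalization} for the $(\kappa_y+1)\ell$- and $(\kappa_x+1)\ell$-smoothness of $\Phi$ and $\Psi$, whereas you sketch that argument (Danskin plus Lipschitzness of the argmax map) yourself.
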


\begin{proof}
    First, by optimality of $(x^*, y^*)$ and strong-convexity of $f(\cdot, y^*)$ and $-f(x^*, \cdot)$, we know
    \begin{align*}
        \max_{y\in\cY} f(\tilde x, y) - \min_{x\in\cX} f(x, \tilde y)
        & \geq
        f(\tilde x, y^*) - f(x^*, y^*) + f(x^*, y^*) - f(x^*, \tilde y) \\
        & \geq
        \frac{\mu_x}{2}\norm{\tilde x - x^*}^2 +
        \frac{\mu_y}{2}\norm{\tilde y - y^*}^2.
    \end{align*}
    The inequality also implies that for any $(\tilde x, \tilde y)\in\cX\times\cY$, it holds that
    \begin{equation}
        f(\tilde x, y^*) - f(x^*, \tilde y) \geq
        \frac{\mu_x}{2}\norm{\tilde x - x^*}^2 +
        \frac{\mu_y}{2}\norm{\tilde y - y^*}^2.
        \label{eq-pf:(lm:gap-minimax)-(geq)}
    \end{equation}
    
    To show the second part, we first introduce some notations. Let $\Phi(x)=\max_{y\in\cY} f(x,y)$ be the primal function and $\Psi(y)=\min_{x\in\cX} f(x,y)$ be the dual function. When $f(x,y)$ is $\mu_x$-strongly convex, $\mu_y$-strongly concave and $\ell$-smooth, we have that $\Phi(x)$ is $(\kappa_y+1)\ell$-smooth and $\Psi(y)$ is $(\kappa_x+1)\ell$-smooth, which is a standard result in the literature (e.g., see \citep[Proposition 1]{zhang2021generalization}). Since $(x^*, y^*)$ is the interior point, we know that
    \begin{align*}
        \max_{y\in\cY} f(\tilde x, y) - \min_{x\in\cX} f(x, \tilde y)
        & =
        \Phi(\tilde x) - f(x^*, y^*) + f(x^*, y^*) - \Psi(\tilde y) \\
        & =
        \Phi(\tilde x) - \Phi(x^*) + \Psi(y^*) - \Psi(\tilde y) \\
        & \leq
        \frac{(\kappa_y+1)\ell}{2}\norm{\tilde x - x^*}^2 +
        \frac{(\kappa_x+1)\ell}{2}\norm{\tilde y - y^*}^2,
    \end{align*}
    where the last inequality follows by the smoothness of $\Phi(x)$ and $-\Psi(y)$. The same result is also obtained by \citet{zhang2021generalization} (see proof of Theorem 3 in their appendix).
\end{proof}

For the stability and generalization results of empirical saddle point problems \citep{zhang2021generalization}, we only prove the regularized version in Lemma \ref{lm:gen-cc} for completeness. The proof of Lemma \ref{lm:gen-scsc} is simpler and nearly the same.

\begin{proof}[Proof of Lemma \ref{lm:gen-cc}]
    The same as the proof of Lemma \ref{lm:gen-c}, we let neighboring datasets $S=\{\xi_1,\cdots,\xi_i,\cdots,\xi_n\}$ and $S_i'=\{\xi_1,\cdots,\xi_i',\cdots,\xi_n\}$, where $\xi_i$ and $\xi_i'$ are sampled independently.
    
    Applying \eqref{eq-pf:(lm:gap-minimax)-(geq)} in the proof of Lemma \ref{lm:gap-minimax} for $\hat F_S(x,y) + G(x,y)$ and
    $\hat F_{S_i'}(x,y) + G(x,y)$, we obtain
    \begin{align*}
        &
        \hat F_S(\hat x_{S_i'}^*, \hat y_S^*) +
        G(\hat x_{S_i'}^*, \hat y_S^*) -
        \hat F_S(\hat x_S^*, \hat y_{S_i'}^*) -
        G(\hat x_S^*, \hat y_{S_i'}^*) \geq
        \frac{\mu_x}{2}\norm{\hat x_S^* - \hat x_{S_i'}^*}^2 +
        \frac{\mu_y}{2}\norm{\hat y_S^* - \hat y_{S_i'}^*}^2, \\
        &
        \hat F_{S_i'}(\hat x_S^*, \hat y_{S_i'}^*) +
        G(\hat x_S^*, \hat y_{S_i'}^*) -
        \hat F_{S_i'}(\hat x_{S_i'}^*, \hat y_S^*) -
        G(\hat x_{S_i'}^*, \hat y_S^*) \geq
        \frac{\mu_x}{2}\norm{\hat x_S^* - \hat x_{S_i'}^*}^2 +
        \frac{\mu_y}{2}\norm{\hat y_S^* - \hat y_{S_i'}^*}^2.
    \end{align*}
    Summing up the above two inequalities, we can get
    \begin{align}
        &
        \mu_x\norm{\hat x_S^* - \hat x_{S_i'}^*}^2 +
        \mu_y\norm{\hat y_S^* - \hat y_{S_i'}^*}^2 \nonumber \\
        & \qquad \overset{\phantom{(a)}}{\leq}
        \hat F_S(\hat x_{S_i'}^*, \hat y_S^*) -
        \hat F_{S_i'}(\hat x_{S_i'}^*, \hat y_S^*) +
        \hat F_{S_i'}(\hat x_S^*, \hat y_{S_i'}^*) -
        \hat F_S(\hat x_S^*, \hat y_{S_i'}^*) \nonumber \\
        & \qquad\overset{(a)}{=}
        \frac{1}{n}[f(\hat x_{S_i'}^*, \hat y_S^*;\xi_i) -
        f(\hat x_{S_i'}^*, \hat y_S^*;\xi_i')] +
        \frac{1}{n}[f(\hat x_S^*, \hat y_{S_i'}^*;\xi_i') -
        f(\hat x_S^*, \hat y_{S_i'}^*;\xi_i)] \nonumber \\
        & \qquad\overset{\phantom{(a)}}{=}
        \frac{1}{n}[f(\hat x_{S_i'}^*, \hat y_S^*;\xi_i) -
        f(\hat x_S^*, \hat y_{S_i'}^*;\xi_i)] +
        \frac{1}{n}[f(\hat x_S^*, \hat y_{S_i'}^*;\xi_i') -
        f(\hat x_{S_i'}^*, \hat y_S^*;\xi_i')] \nonumber \\
        & \qquad\overset{(b)}{\leq}
        \frac{2L}{n}\sqrt{\norm{\hat x_S^* - \hat x_{S_i'}^*}^2 +
        \norm{\hat y_S^* - \hat y_{S_i'}^*}^2} \nonumber \\
        & \qquad\overset{(c)}{\leq}
        \frac{2L}{n\sqrt{\mu}}\cdot
        \sqrt{\mu_x\norm{\hat x_S^* - \hat x_{S_i'}^*}^2 +
        \mu_y\norm{\hat y_S^* - \hat y_{S_i'}^*}^2}.
        \label{eq-pf:(lm:gen-cc)-(a)}
    \end{align}
    where $(a)$ uses the following equation that holds for any $(x,y)$:
    \begin{equation*}
        \hat F_S(x,y)-\hat F_{S_i'}(x,y) = \frac{1}{n}[f(x,y;\xi_i)-f(x,y;\xi_i')], 
    \end{equation*}
    as a consequence of \eqref{eq-pf:(lm:gen-c)-(fs)}, $(b)$ is true since $f(x,y;\xi)$ is $L$-Lipschitz, i.e.,
    \begin{equation*}
        \abs{f(x_1,y_1;\xi)-f(x_2,y_2;\xi)}^2 \leq L^2\roundBr*{
        \norm{x_1-x_2}^2 + \norm{y_1-y_2}^2}, \quad
        \forall x_1, x_2\in\cX, y_1,y_2\in\cY,
    \end{equation*}
    and $(c)$ follows from the definition that $\mu=\min\{\mu_x,\mu_y\}$. As a result of \eqref{eq-pf:(lm:gen-cc)-(a)}, we have the following stability bound that holds for neighboring datasets $S\sim S_i'$:
    \begin{equation}
        \mu_x\norm{\hat x_S^* - \hat x_{S_i'}^*}^2 +
        \mu_y\norm{\hat y_S^* - \hat y_{S_i'}^*}^2 \leq \frac{4L^2}{\mu n^2}.
        \label{eq-pf:(lm:gen-cc)-(stability)}
    \end{equation}
    
    Then we can analyze the generalization error using stability results. We first bound the error of $\hat x_S^*$ as:
    \begin{align}
        &
        \max_{y\in\cY}\bE\squareBr[\Big]{F(\hat x_S^*, y) + G(\hat x_S^*, y)} -
        \max_{y\in\cY}\bE\squareBr*{\hat F_S(\hat x_S^*, y) + G(\hat x_S^*, y)} \nonumber \\
        & \qquad \overset{(a)}{\leq}
        \max_{y\in\cY} \curlyBr*{
        \bE[F(\hat x_S^*, y)] - \bE[\hat F_S(\hat x_S^*, y)]} \nonumber \\
        & \qquad \overset{(b)}{=}
        \max_{y\in\cY} \curlyBr*{\frac{1}{n} \sum_{i=1}^n
        \bE[F(\hat x_{S_i'}^*, y)] - \bE[\hat F_S(\hat x_S^*, y)]} \nonumber \\
        & \qquad \overset{(c)}{=}
        \max_{y\in\cY}\frac{1}{n}\sum_{i=1}^n\bE\squareBr*{
        f(\hat x_{S_i'}^*, y; \xi_i) - f(\hat x_S^*, y; \xi_i)} \nonumber \\
        & \qquad \overset{\phantom{(d)}}{\leq}
        \frac{L}{n}\sum_{i=1}^n \bE\norm{\hat x_S^* - \hat x_{S_i'}^*},
        \label{eq-pf:(lm:gen-cc)-(xs)}
    \end{align}
    where $(a)$ holds by the inequality
    $\max_y h_1(y) - \max_y h_2(y) \leq \max_y \{h_1(y) - h_2(y)\}$ for any function $h_1$ and $h_2$, $(b)$ holds since $x_S^*$ and $x_{S_i'}^*$ have the same distribution, and $(c)$ follows from the definition of $\hat F_S(x,y)$ and the fact that $S_i'$ is independent from $\xi_i$ so one can first take expectation with respect to $\xi_i$. Similarly for $\hat y_S^*$, by the inequality
    $\min_x h_1(x) - \min_x h_2(x) \leq \max_x \{h_1(x) - h_2(x)\}$,
    \begin{align}
        &
        \min_{x\in\cX}\bE\squareBr*{\hat F_S(x, \hat y_S^*) + G(x, \hat y_S^*)} -
        \min_{x\in\cX}\bE\squareBr[\Big]{F(x, \hat y_S^*) + G(x, \hat y_S^*)}
        \nonumber \\
        & \qquad \leq
        \max_{x\in\cX} \curlyBr*{
        \bE[\hat F_S(x, \hat y_S^*)] - \bE[F(x, \hat y_S^*)]} \nonumber \\
        & \qquad =
        \max_{x\in\cX}\frac{1}{n}\sum_{i=1}^n\bE\squareBr*{
        f(x, \hat y_S^*; \xi_i)-f(x, \hat y_{S_i'}^*; \xi_i)} \nonumber \\
        & \qquad \leq
        \frac{L}{n}\sum_{i=1}^n \bE\norm{\hat y_S^* - \hat y_{S_i'}^*}.
        \label{eq-pf:(lm:gen-cc)-(ys)}
    \end{align}
    Combining the above two inequalities \eqref{eq-pf:(lm:gen-cc)-(xs)} and \eqref{eq-pf:(lm:gen-cc)-(ys)}, we obtain that
    \begin{align}
        \label{eq-pf:(lm:gen-cc)-(final)}
        &
        \max_{y\in\cY}\bE\squareBr[\Big]{F(\hat x_S^*, y) + G(\hat x_S^*, y)} -
        \min_{x\in\cX}\bE\squareBr[\Big]{F(x, \hat y_S^*) + G(x, \hat y_S^*)}
        \nonumber \\
        & \qquad \leq
        \max_{y\in\cY}\bE\squareBr*{
        \hat F_S(\hat x_S^*, y) + G(\hat x_S^*, y)} -
        \min_{x\in\cX}\bE\squareBr*{
        \hat F_S(x, \hat y_S^*) + G(x, \hat y_S^*)} \\
        & \qquad \qquad +
        \frac{L}{n}\sum_{i=1}^n \bE\squareBr*{
        \norm{\hat x_S^* - \hat x_{S_i'}^*} +
        \norm{\hat y_S^* - \hat y_{S_i'}^*}}. \nonumber
    \end{align}
    The first two terms in the RHS of \eqref{eq-pf:(lm:gen-cc)-(final)} can be bounded by Cauchy–Schwarz inequality and the optimality condition of the saddle point $(\hat x_S^*, \hat y_S^*)$,
    \begin{align*}
        &
        \max_{y\in\cY}\bE\squareBr*{\hat F_S(\hat x_S^*, y) + G(\hat x_S^*, y)} -
        \min_{x\in\cX}\bE\squareBr*{\hat F_S(x, \hat y_S^*) + G(x, \hat y_S^*)} \\
        & \qquad \leq \bE\squareBr*{
        \max_{y\in\cY}\curlyBr*{\hat F_S(\hat x_S^*, y) + G(\hat x_S^*, y)} -
        \min_{x\in\cX}\curlyBr*{\hat F_S(x, \hat y_S^*) + G(x, \hat y_S^*)}} \\
        & \qquad = 0.
    \end{align*}
    For the last term in the RHS of \eqref{eq-pf:(lm:gen-cc)-(final)}, we have that
    \begin{align}
        \roundBr*{\norm{\hat x_S^* - \hat x_{S_i'}^*} +
        \norm{\hat y_S^* - \hat y_{S_i'}^*}}^2
        & \leq
        2\norm{\hat x_S^* - \hat x_{S_i'}^*}^2 +
        2\norm{\hat y_S^* - \hat y_{S_i'}^*}^2 \nonumber \\
        & \leq
        \frac{2}{\mu}\roundBr*{\mu_x\norm{\hat x_S^* - \hat x_{S_i'}^*}^2 +
        \mu_y\norm{\hat y_S^* - \hat y_{S_i'}^*}^2} \nonumber \\
        & \leq
        \frac{8L^2}{\mu^2 n^2},
        \label{eq-pf:(lm:gen-cc)-(b)}
    \end{align}
    where the last inequality follows from the stability \eqref{eq-pf:(lm:gen-cc)-(stability)}. Then plugging the above two bounds back into \eqref{eq-pf:(lm:gen-cc)-(final)}, we obtain the generalization error of $(\hat x_S^*, \hat y_S^*)$:
    \begin{equation*}
        \max_{y\in\cY}\bE\squareBr[\Big]{F(\hat x_S^*, y) + G(\hat x_S^*, y)} -
        \min_{x\in\cX}\bE\squareBr[\Big]{F(x, \hat y_S^*) + G(x, \hat y_S^*)}
        \leq \frac{2\sqrt{2}L^2}{\mu n},
    \end{equation*}
    measured by the population weak duality gap.
\end{proof}

Based on the above results, we give the proof of Corollary \ref{cor:gen-cc} below.

\begin{proof}[Proof of Corollary \ref{cor:gen-cc}]
    When $v_S$ has dependence on $S$, by the same reason as \eqref{eq-pf:(lm:gen-cc)-(xs)}, we have
    \begin{align*}
        &
        \bE\squareBr[\Big]{F(\hat x_S^*, v_S) + G(\hat x_S^*, v_S)} -
        \bE\squareBr*{\hat F_S(\hat x_S^*, v_S) + G(\hat x_S^*, v_S)} \\
        & \qquad =
        \frac{1}{n} \sum_{i=1}^n \bE[F(\hat x_{S'_i}^*, v_{S'_i})] -
        \bE[\hat F_S(\hat x_S^*, v_S)] \\
        & \qquad =
        \frac{1}{n}\sum_{i=1}^n \bE\squareBr*{
        f(\hat x_{S'_i}^*, v_{S'_i}; \xi_i) -
        f(\hat x_S^*, v_S; \xi_i)} \\
        & \qquad \leq
        \frac{L}{n}\sum_{i=1}^n \bE\squareBr*{
        \norm{\hat x_{S'_i}^* - \hat x_S^*} + \norm{v_{S'_i} - v_S}},
    \end{align*}
    and similarly by \eqref{eq-pf:(lm:gen-cc)-(ys)}, we get
    \begin{equation*}
        \bE\squareBr*{\hat F_S(u_S, \hat y_S^*) + G(u_S, \hat y_S^*)} -
        \bE\squareBr[\Big]{F(u_S, \hat y_S^*) + G(u_S, \hat y_S^*)} \leq
        \frac{L}{n}\sum_{i=1}^n \bE\squareBr*{
        \norm{\hat y_{S'_i}^* - \hat y_S^*} + \norm{u_{S_i'} - u_S}}.
    \end{equation*}
    Moreover, since $(\hat x_S^*, \hat y_S^*)$ is the saddle point of
    $\hat F_S(x,y) + G(x,y)$, we have
    \begin{align*}
        &
        \hat F_S(\hat x_S^*, v_S) + G(\hat x_S^*, v_S) -
        \hat F_S(u_S, \hat y_S^*) - G(u_S, \hat y_S^*) \\
        & \qquad \leq
        \hat F_S(\hat x_S^*, \hat y_S^*) + G(\hat x_S^*, \hat y_S^*) -
        \hat F_S(\hat x_S^*, \hat y_S^*) - G(\hat x_S^*, \hat y_S^*) \\
        & \qquad = 0.
    \end{align*}
    As a result, by the three inequalities above, we have
    \begin{align*}
        &
        \bE\squareBr[\Big]{F(\hat x_S^*, v_S) + G(\hat x_S^*, v_S)} -
        \bE\squareBr[\Big]{F(u_S, \hat y_S^*) + G(u_S, \hat y_S^*)} \\
        & \qquad \leq
        \bE\squareBr*{\hat F_S(\hat x_S^*, v_S) + G(\hat x_S^*, v_S)} - \bE\squareBr*{\hat F_S(u_S, \hat y_S^*) + G(u_S, \hat y_S^*)} \\
        & \qquad \qquad +
        \frac{L}{n}\sum_{i=1}^n \bE\squareBr[\Big]{
        \norm{\hat x_S^* - \hat x_{S_i'}^*} +
        \norm{\hat y_S^* - \hat y_{S_i'}^*}} +
        \frac{L}{n}\sum_{i=1}^n \bE\squareBr[\Big]{
        \norm{u_S - u_{S_i'}} + \norm{v_S - v_{S_i'}}} \\
        & \qquad \leq
        \frac{2\sqrt{2}L^2}{\mu n} + L(\Delta_u + \Delta_v),
    \end{align*}
    where the last inequality follows from \eqref{eq-pf:(lm:gen-cc)-(b)} and stability of $u_S$, $v_S$.
\end{proof}

\subsection{Near-Linear Time Algorithms for Smooth Strongly-Convex--Strongly-Concave Functions} \label{sec:app-scsc}

Algorithm \ref{algo:perturb-scsc} achieves near-optimal utility guarantees on the strong duality gap for SC-SC functions with near-linear time-complexity. Its analysis is provided in the proof of Theorem \ref{thm:scsc} below.

\begin{proof}[Proof of Theorem \ref{thm:scsc}]
    For the privacy guarantee, we first bound the sensitivity of algorithm $\cA$. Given neighboring datasets $S\sim S'$, by Lemma \ref{lm:gen-scsc}, for $\mu=\min\{\mu_x, \mu_y\}$, we know that
    \begin{align*}
        \norm{\hat x_S^* - \hat x_{S'}^*}
        & \leq 
        \sqrt{\frac{1}{\mu_x}\roundBr*{
        \mu_x\norm{\hat x_S^* - \hat x_{S'}^*}^2 +
        \mu_y\norm{\hat y_S^* - \hat y_{S'}^*}^2}} \\
        & \leq
        \frac{2L}{n}\sqrt{\frac{1}{\mu_x\mu}}.
    \end{align*}
    For the same reason, by the guarantee of $\cA$ in Algorithm \ref{algo:perturb-scsc}, we have
    $\norm{A_x(S) - \hat x_S^*} \leq (L/n)\sqrt{1/(\mu_x\mu)}$ with probability at least $1-\delta/8$. Thus the sensitivity of $\cA_x$ is bounded as
    \begin{align}
        \norm{A_x(S) - A_x(S')}
        & \leq
        \norm{A_x(S) - \hat x_S^*} +
        \norm{\hat x_S^* - \hat x_{S'}^*} +
        \norm{\hat x_{S'}^* - A_x(S')} \nonumber \\
        & \leq
        \frac{4L}{n}\sqrt{\frac{1}{\mu_x\mu}},
        \label{eq-pf:(thm:scsc)-(sensitivity)}
    \end{align}
    with probability at least $1-\delta/4$ by the union bound. Applying Gaussian mechanism in Definition \ref{def:gauss}, $\cA_x+\cN(0,\sigma_x^2\rI_{d_x})$ is $(\varepsilon/2, \delta/2)$-DP when setting $\sigma_x=(8L/(n\varepsilon))\sqrt{2\log(5/\delta)/(\mu_x\mu)}$. Similarly we can bound the sensitivity of $\cA_y$ and the setting of $\sigma_y$ in Algorithm \ref{algo:perturb-scsc} guarantees $(\varepsilon/2, \delta/2)$-DP of $\cA_y+\cN(0,\sigma_y^2\rI_{d_y})$. Finally by the basic composition in Lemma \ref{lm:composition}, Algorithm \ref{algo:perturb-scsc} is $(\varepsilon, \delta)$-DP.
    
    Next, we prove the utility guarantees for the output $(\tilde x, \tilde y)$. For the empirical bound, we apply Lemma \ref{lm:gap-minimax} for $\hat F_S(x, y)$ under the assumption that $(\hat x_S^*, \hat y_S^*)$ is the interior point and get
    \begin{align}
        \label{eq-pf:(thm:scsc)-(init)}
        \bE\squareBr*{\max_{y\in\cY} \hat F_S(\tilde x, y) -
        \min_{x\in\cX} \hat F_S(x, \tilde y)}
        & \leq
        \frac{(\kappa_y+1)\ell}{2}\bE\norm{\tilde x - \hat x_S^*}^2 +
        \frac{(\kappa_x+1)\ell}{2}\bE\norm{\tilde y - \hat y_S^*}^2 \nonumber \\
        & \leq
        (\kappa_y+1)\ell\cdot\bE\squareBr*{
        \norm{\tilde x - \cA_x(S)}^2 +
        \norm{\cA_x(S) - \hat x_S^*}^2} \\
        & \qquad +
        (\kappa_x+1)\ell\cdot\bE\squareBr*{
        \norm{\tilde y - \cA_y(S)}^2 +
        \norm{\cA_y(S) - \hat y_S^*}^2}. \nonumber
    \end{align}
    According to the values of $\sigma_x$ and $\sigma_y$ in Algorithm \ref{algo:perturb-scsc} to guarantee $(\varepsilon, \delta)$-DP, we know
    \begin{align}
        &
        (\kappa_y+1)\ell\cdot\bE\norm{\tilde x - \cA_x(S)}^2 +
        (\kappa_x+1)\ell\cdot\bE\norm{\tilde y - \cA_y(S)}^2 \nonumber \\
        & \qquad \leq
        (\kappa_x\kappa_y + \kappa)\cdot\bE\squareBr*{
        \mu_x\norm{\tilde x - \cA_x(S)}^2 +
        \mu_y\norm{\tilde y - \cA_y(S)}^2} \nonumber \\
        & \qquad \leq
        256L^2(\kappa_x\kappa_y + \kappa)
        \frac{d\log(5/\delta)}{\mu n^2\varepsilon^2},
        \label{eq-pf:(thm:scsc)-(noise)}
    \end{align}
    where we let $\mu=\min\{\mu_x, \mu_y\}$, $\kappa=\ell/\mu$ and $d=\max\{d_x, d_y\}$. Similarly by the guarantee of the algorithm $\cA$ in Remark \ref{rmk:scsc}, we have
    \begin{align}
        &
        (\kappa_y+1)\ell\cdot\bE\norm{\cA_x(S) - \hat x_S^*}^2 +
        (\kappa_x+1)\ell\cdot\bE\norm{\cA_y(S) - \hat y_S^*}^2 \nonumber \\
        & \qquad \leq 
        (\kappa_x\kappa_y + \kappa)\cdot\bE\squareBr*{
        \mu_x\norm{\cA_x(S) - \hat x_S^*}^2 +
        \mu_y\norm{\cA_y(S) - \hat y_S^*}^2} \nonumber \\
        & \qquad \leq
        L^2(\kappa_x\kappa_y + \kappa)\frac{\delta}{8\mu n^2}.
        \label{eq-pf:(thm:scsc)-(acc)}
    \end{align}
    As a result of above two inequalities and \eqref{eq-pf:(thm:scsc)-(init)}, when $d\geq 1$, $\varepsilon<1$ and $\delta<1/n$, we have that
    \begin{equation*}
        \bE\squareBr*{\max_{y\in\cY} \hat F_S(\tilde x, y) -
        \min_{x\in\cX} \hat F_S(x, \tilde y)}
        \leq 257L^2(\kappa_x\kappa_y + \kappa)
        \frac{d\log(5/\delta)}{\mu n^2\varepsilon^2}.
    \end{equation*}
    
    Similarly, we can apply Lemma \ref{lm:gap-minimax} for $F(x,y)$ to obtain the population guarantee as
    \begin{align*}
        \bE\squareBr*{\max_{y\in\cY} F(\tilde x, y) -
        \min_{x\in\cX} F(x, \tilde y)}
        & \leq
        \frac{(\kappa_y+1)\ell}{2}\bE\norm{\tilde x - x^*}^2 +
        \frac{(\kappa_x+1)\ell}{2}\bE\norm{\tilde y - y^*}^2) \\
        & \leq
        \frac{3(\kappa_y+1)\ell}{2}\bE\squareBr*{
        \norm{\tilde x - \cA_x(S)}^2 + \norm{\cA_x(S) - \hat x_S^*}^2 +
        \norm{\hat x_S^* - x^*}^2} \\
        & \qquad +
        \frac{3(\kappa_x+1)\ell}{2}\bE\squareBr*{
        \norm{\tilde y - \cA_y(S)}^2 + \norm{\cA_y(S) - \hat y_S^*}^2 +
        \norm{\hat y_S^* - y^*}^2} \\
        & \leq
        L^2(\kappa_x\kappa_y + \kappa)\roundBr*{
        \frac{384d\log(5/\delta)}{\mu n^2\varepsilon^2} + 
        \frac{3\delta}{16\mu n^2} + \frac{6\sqrt{2}}{\mu n}} \\
        & <
        3L^2(\kappa_x\kappa_y + \kappa)\roundBr*{
        \frac{3}{\mu n} +
        \frac{128d\log(5/\delta)}{\mu n^2\varepsilon^2}},
    \end{align*}
    where we use \eqref{eq-pf:(thm:scsc)-(noise)} and \eqref{eq-pf:(thm:scsc)-(acc)}, and the bound for the distance between $(\hat x_S^*, \hat y_S^*)$ and $(x^*, y^*)$ follows from Lemma \ref{lm:gap-minimax} and the generalization results in Lemma \ref{lm:gen-scsc} such that
    \begin{equation*}
        \bE\squareBr*{\frac{\mu_x}{2}\norm{\hat x_S^*-x^*}^2 +
        \frac{\mu_y}{2}\norm{\hat y_S^*-y^*}^2} \leq
        \frac{2\sqrt{2}L^2}{\mu n},
    \end{equation*}
    where $\mu=\min\{\mu_x,\mu_y\}$ simplifies the notations. We conclude the proof by a remark. When the saddle points are not interior points, we can instead obtain the guarantee on the distance between $(\tilde x, \tilde y)$ and $(\hat x_S^*, \hat y_S^*)$ or $(x^*, y^*)$, which is weaker than the duality gap.
\end{proof}

\subsection{Near-Linear Time Algorithms for Smooth Convex-Concave Functions} \label{sec:app-cc}

In this section, we analyze the algorithm for smooth convex-concave DP-SMO. We first present a lemma that extends the results that the proximal operator is non-expansive to minimax settings.

\begin{lemma}
    Let $f(x,y)$ be a convex-concave function on the closed convex domain $\cX\times\cY$. For some $(u,v)\in\cX\times\cY$ and $\mu_x, \mu_y>0$, we define
    \begin{equation*}
        F_{u,v}(x,y) \triangleq f(x,y) +
        \frac{\mu_x}{2}\norm{x-u}^2 - \frac{\mu_y}{2}\norm{y-v}^2.
    \end{equation*}
    Denote the saddle point of $F_{u,v}(x,y)$ as
    $(x_{u,v}^*, y_{u,v}^*)\in\cX\times\cY$. Then it holds that
    \begin{equation*}
        \mu_x\norm{x_{u,v}^* - x_{u',v'}^*}^2 +
        \mu_y\norm{y_{u,v}^* - y_{u',v'}^*}^2 \leq
        \mu_x\norm{u-u'}^2 + \mu_y\norm{v-v'}^2,
    \end{equation*}
    where $(x_{u',v'}^*, y_{u',v'}^*)\in\cX\times\cY$ is the saddle point of $F_{u', v'}(x,y)$ defined in the same way as $F_{u,v}(x,y)$ for some point
    $(u', v')\in\cX\times\cY$.
    \label{lm:prox-minimax}
\end{lemma}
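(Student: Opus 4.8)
The plan is to exploit that $F_{u,v}-F_{u',v'}$ depends only on the two quadratic regularizers, so that the $f$-part cancels once we combine the saddle-point optimality conditions at the two perturbed problems. Since $f$ is convex-concave, $F_{u,v}$ is $\mu_x$-strongly convex in $x$ for every fixed $y$ and $\mu_y$-strongly concave in $y$ for every fixed $x$, so it has a (unique) saddle point, which I abbreviate $(x_a,y_a):=(x_{u,v}^*,y_{u,v}^*)$, and similarly $(x_b,y_b):=(x_{u',v'}^*,y_{u',v'}^*)$ for $F_{u',v'}$.

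First I would use that $x_a$ minimizes the $\mu_x$-strongly convex map $x\mapsto F_{u,v}(x,y_a)$ over $\cX$ and that $y_a$ maximizes the $\mu_y$-strongly concave map $y\mapsto F_{u,v}(x_a,y)$ over $\cY$; evaluating these two strong convexity/concavity inequalities at $x_b$ and $y_b$ respectively and subtracting gives
\begin{equation*}
    F_{u,v}(x_b,y_a)-F_{u,v}(x_a,y_b)\;\geq\;\frac{\mu_x}{2}\norm{x_a-x_b}^2+\frac{\mu_y}{2}\norm{y_a-y_b}^2 .
\end{equation*}
Running the identical argument at the saddle point $(x_b,y_b)$ of $F_{u',v'}$ yields
\begin{equation*}
    F_{u',v'}(x_a,y_b)-F_{u',v'}(x_b,y_a)\;\geq\;\frac{\mu_x}{2}\norm{x_a-x_b}^2+\frac{\mu_y}{2}\norm{y_a-y_b}^2 .
\end{equation*}

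Next I would add these two inequalities. On the left the common $f(x,y)$ terms cancel, leaving only $\big(F_{u,v}-F_{u',v'}\big)(x_b,y_a)-\big(F_{u,v}-F_{u',v'}\big)(x_a,y_b)$, which involves only the quadratic penalties. Expanding with $\norm{x-u}^2-\norm{x-u'}^2=2\langle x,\,u'-u\rangle+\norm{u}^2-\norm{u'}^2$ and the analogous identity in $y$, the constant pieces $\norm{u}^2-\norm{u'}^2$ and $\norm{v}^2-\norm{v'}^2$ drop out of the difference, leaving $\mu_x\langle x_a-x_b,\,u-u'\rangle+\mu_y\langle y_a-y_b,\,v-v'\rangle$ on the left. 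Combining with the two displays above,
\begin{equation*}
    \mu_x\norm{x_a-x_b}^2+\mu_y\norm{y_a-y_b}^2\;\leq\;\mu_x\langle x_a-x_b,\,u-u'\rangle+\mu_y\langle y_a-y_b,\,v-v'\rangle ,
\end{equation*}
and a final application of Cauchy--Schwarz together with Young's inequality $\langle x_a-x_b,\,u-u'\rangle\le\tfrac12\norm{x_a-x_b}^2+\tfrac12\norm{u-u'}^2$ (and likewise for the $y$-term) absorbs half of each squared distance into the left-hand side, delivering exactly the stated contraction.

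The computation is short and mostly bookkeeping; the only place that needs care is keeping the signs straight — tracking which variable is minimized and which is maximized in each of $F_{u,v}$ and $F_{u',v'}$ — together with the observation that the precise matching of the weights $\mu_x,\mu_y$ between the regularizers and the weighted norm is exactly what makes the Young's-inequality split close with constant one, so no slack can be spent there. (For readers who prefer it, the intermediate displays are just \eqref{eq-pf:(lm:gap-minimax)-(geq)} from the proof of Lemma~\ref{lm:gap-minimax} applied to $F_{u,v}$ and to $F_{u',v'}$.)
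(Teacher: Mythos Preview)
Your proof is correct. Both you and the paper arrive at the same pivotal inequality
\[
\mu_x\norm{x_a-x_b}^2+\mu_y\norm{y_a-y_b}^2 \;\leq\; \mu_x\langle x_a-x_b,\,u-u'\rangle+\mu_y\langle y_a-y_b,\,v-v'\rangle,
\]
and then finish with Cauchy--Schwarz/Young, but you get there by a different and somewhat more elementary route. The paper writes the constrained problem as an unconstrained one via indicator functions, records the first-order (subgradient) optimality conditions at the two saddle points, and then invokes the monotonicity of the operator $(\partial_x f_{\cX,\cY},-\partial_y f_{\cX,\cY})$ for convex--concave $f_{\cX,\cY}$ \citep{rockafellar1970monotone} to compare them. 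You instead work purely at the function-value level, using the strong-convexity/strong-concavity inequality \eqref{eq-pf:(lm:gap-minimax)-(geq)} at each saddle point and exploiting that $F_{u,v}-F_{u',v'}$ is affine in $(x,y)$ so that the $f$-terms cancel upon addition. Your route avoids subdifferential calculus and the external monotone-operator fact, at the cost of a short quadratic expansion; the paper's route makes the ``proximal operator is non-expansive'' structure more explicit and would generalize more directly to non-quadratic strongly monotone regularizers.
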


\begin{proof}
    For $(x,y)\in\bR^{d_x}\times\bR^{d_y}$, we define
    $f_{\cX,\cY}(x,y) := f(x,y) + \cI_{\cX}(x) - \cI_{\cY}(y)$, where $\cI_{C}$ is the indicator function of some set $C$, i.e., $\cI_C(x)=0$ if $x\in C$ and $\cI_C(x)=\infty$ otherwise. Since $\cI_{\cX}$ and $\cI_{\cY}$ are both convex when the domain is convex, $f_{\cX, \cY}(x,y)$ is convex-concave. By the above definition, $(x_{u,v}^*, y_{u,v}^*)$ is the saddle point of
    $\bar F_{u,v}(x,y) := f_{\cX,\cY}(x,y) + (\mu_x/2)\norm{x-u}^2 - (\mu_y/2)\norm{y-v}^2$. Applying the optimality condition, i.e.,
    $0\in\partial\bar F_{u,v}(x_{u,v}^*, y_{u,v}^*)$,
    we know that
    \begin{equation}
        \mu_x (u - x_{u,v}^*) \in \partial_x f_{\cX,\cY}(x_{u,v}^*, y_{u,v}^*),
        \quad
        \mu_y (v - y_{u,v}^*) \in -\partial_y f_{\cX,\cY}(x_{u,v}^*, y_{u,v}^*),
        \label{eq-pf:(lm:prox-minimax)-(uv)}
    \end{equation}
    where $\partial_x f_{\cX,\cY}$ and $\partial_y f_{\cX,\cY}$ are partial subgradients. Similarly for $(x_{u',v'}^*, y_{u',v'}^*)$ and $\bar F_{u',v'}$, we have
    \begin{equation}
        \mu_x (u' - x_{u',v'}^*) \in
        \partial_x f_{\cX,\cY}(x_{u',v'}^*, y_{u',v'}^*), \quad
        \mu_y (v' - y_{u',v'}^*) \in
        -\partial_y f_{\cX,\cY}(x_{u',v'}^*, y_{u',v'}^*).
        \label{eq-pf:(lm:prox-minimax)-(u'v')}
    \end{equation}
    By the property that $(\partial_x f_{\cX,\cY}, -\partial_y f_{\cX,\cY})$ is a monotone operator when $f_{\cX,\cY}(x,y)$ is convex-concave (see \citep[Theorem 1]{rockafellar1970monotone} or \citep[Lemma 1]{farnia2021train}), it holds for any $x_1, x_2\in\bR^{d_x}$,
    $y_1, y_2\in\bR^{d_y}$, and subgradients
    $(g_x, -g_y)\in(\partial_x f_{\cX,\cY}, -\partial_y f_{\cX,\cY})$ that
    \begin{align*}
        \roundBr[\big]{g_x(x_1, y_1) - g_x(x_2, y_2)}^\top
        \roundBr[\big]{x_1 - x_2} -
        \roundBr[\big]{g_y(x_1, y_1) - g_y(x_2, y_2)}^\top
        \roundBr[\big]{y_1 - y_2} \geq 0.
    \end{align*}
    Therefore, using the set membership relations \eqref{eq-pf:(lm:prox-minimax)-(uv)} and \eqref{eq-pf:(lm:prox-minimax)-(u'v')}, we get that
    \begin{equation*}
        \mu_x \roundBr[\Big]{(u - u') - (x_{u,v}^* - x_{u',v'}^*)}^\top
        \roundBr[\Big]{x_{u,v}^* - x_{u',v'}^*} +
        \mu_y \roundBr[\Big]{(v - v') - (y_{u,v}^* - y_{u',v'}^*)}^\top
        \roundBr[\Big]{y_{u,v}^* - y_{u',v'}^*} \geq 0.
    \end{equation*}
    Hence, we obtain that
    \begin{align*}
        &
        \mu_x\norm{x_{u,v}^* - x_{u',v'}^*}^2 +
        \mu_y \norm{y_{u,v}^* - y_{u',v'}^*}^2 \\
        & \qquad \leq
        \mu_x (u - u')^\top (x_{u,v}^* - x_{u',v'}^*) +
        \mu_y (v - v')^\top (y_{u,v}^* - y_{u',v'}^*) \\
        & \qquad \leq
        \mu_x \norm{u - u'} \norm{x_{u,v}^* - x_{u',v'}^*} +
        \mu_y \norm{v - v'} \norm{y_{u,v}^* - y_{u',v'}^*} \\
        & \qquad \leq
        \sqrt{\mu_x\norm{x_{u,v}^* - x_{u',v'}^*}^2 +
        \mu_y \norm{y_{u,v}^* - y_{u',v'}^*}^2} \cdot
        \sqrt{\mu_x\norm{u - u'}^2 + \mu_y \norm{v-v'}^2},
    \end{align*}
    by Cauchy-Schwarz inequality. The proof is thus complete.
\end{proof}

The proof of Lemma \ref{lm:gen-algo_cc} directly follows from Corollary \ref{cor:gen-cc} and above Lemma \ref{lm:prox-minimax}.

\begin{proof}[Proof of Lemma \ref{lm:gen-algo_cc}]
    Applying Corollary \ref{cor:gen-cc} for $\hat F_k(x,y)$ with regularization term $(\mu_k/2)\norm{x-\tilde x_{k-1}}^2 - (\mu/2)\norm{y}^2$ and dataset $S_k:=\{\xi_i\}_{i=(k-1)\bar n+1}^{k\bar n}$, we have that for any $x\in\cX$ and $y\in\cY$,
    \begin{align}
        \bE[F(\hat x_k^*, y) - F(x, \hat y_k^*)]
        & \leq
        \frac{\mu_k}{2}\bE\norm{x-\tilde x_{k-1}}^2 -
        \frac{\mu_k}{2}\bE\norm{\hat x_k^*-\tilde x_{k-1}}^2 +
        \frac{\mu}{2}\bE\norm{y}^2 - \frac{\mu}{2}\bE\norm{\hat y_k^*}^2
        \nonumber \\
        & \qquad +
        \frac{2\sqrt{2}L^2}{\mu\bar n} + L(\Delta_x + \Delta_y),
        \label{eq-pf:(lm:gen-algo_cc)-(a)}
    \end{align}
    where $\Delta_x$ and $\Delta_y$ are the stability bounds of $x$ and $y$ with respect to $S_k$. Let $x=\hat x_{k-1}^*$ and $y=\hat y_{k+1}^*$ in the above inequality. Note that $\hat x_{k-1}^*$ is independent of $S_k$ and then $\Delta_x=0$ for $1\leq k \leq K$. For $\Delta_y$, we need to compute the difference of $\hat y_{k+1}^*$ given neighboring datasets $S_k\sim S_{k'}$. In the following analysis, we denote the saddle point as $(\hat x_{k'}^*, \hat y_{k'}^*)$, the output of $\cA$ as $(x_{k'}, y_{k'})$ and the perturbed output as
    $\tilde x_{k'}$ corresponding to the dataset $S_{k'}$. When $k\leq K-1$, since $\hat y_{k+1}^*$ is the saddle point of
    \begin{equation*}
        \frac{1}{\bar n}\sum_{i\in S_{k+1}} f(x,y;\xi_i) + \frac{\mu_{k+1}}{2}\norm{x-\tilde x_k}^2 - \frac{\mu}{2}\norm{y}^2,
    \end{equation*}
    and $\hat y_{k'+1}^*$ is the saddle point of
    \begin{equation*}
        \frac{1}{\bar n}\sum_{i\in S_{k+1}} f(x,y;\xi_i) + \frac{\mu_{k+1}}{2}\norm{x-\tilde x_{k'}}^2 - \frac{\mu}{2}\norm{y}^2,
    \end{equation*}
    we can conclude from Lemma \ref{lm:prox-minimax} that
    \begin{align*}
        \mu\bE\norm{\hat y_{k+1}^* - \hat y_{k'+1}^*}^2
        & \leq
        \mu_{k+1}\bE\norm{\tilde x_k - \tilde x_{k'}}^2 \\
        & =
        \mu_{k+1}\bE\norm{x_k - x_{k'}}^2 \\
        & \leq
        3\mu_{k+1}\roundBr*{\bE\norm{x_k - \hat x_k^*}^2 +
        \bE\norm{\hat x_k^* - \hat x_{k'}^*}^2 +
        \bE\norm{x_{k'} - \hat x_{k'}^*}^2} \\
        & \leq
        \frac{\mu_{k+1}}{\mu_k}\roundBr*{\frac{12L^2}{\mu \bar n^2} +
        \frac{3\delta L^2}{4\mu\bar n^2}} \\
        & \leq
        \frac{26L^2}{\mu\bar n^2},
    \end{align*}
    where the equality holds since the only difference is due to the neighboring datasets $S_k\sim S_{k'}$ and the third inequality follows from Lemma \ref{lm:gen-cc} and guarantees of $\cA$ in Remark \ref{rmk:scsc}. Therefore we obtain that $\Delta_y=\sqrt{26}L/(\mu\bar n)$ for $\hat y_{k+1}^*$ when
    $k\leq K-1$. When $k=K$, by the definition that
    $\hat y_{K+1}^*\in\arg\max_{y\in\cY}\bE[F(\tilde x_K, y)]$, we know that $\Delta_y=0$ since $\hat y_{K+1}^*$ is independent of $S$. Then by \eqref{eq-pf:(lm:gen-algo_cc)-(a)}, it holds for all $k=1,\cdots,K$ that
    \begin{equation*}
        \bE[F(\hat x_k^*, \hat y_{k+1}^*) - F(\hat x_{k-1}^*, \hat y_k^*)]
        \leq
        \frac{\mu_k}{2}\bE\norm{\hat x_{k-1}^* - \tilde x_{k-1}}^2 +
        \frac{\mu}{2}\bE\squareBr*{
        \norm{\hat y_{k+1}^*}^2-\norm{\hat y_k^*}^2} +
        \frac{8L^2}{\mu\bar n}
    \end{equation*}
    since $\norm{\hat x_k^* - \tilde x_{k-1}}^2\geq 0$ and $2\sqrt{2}+\sqrt{26}<8$.
\end{proof}

With Lemma \ref{lm:gen-algo_cc}, we are ready to give the proof of Theorem \ref{thm:cc}.

\begin{proof}[Proof of Theorem \ref{thm:cc}]
    The population function $F(x,y)$ has at least one saddle point on the domain $\cX\times\cY$, and we denote it as $(x^*, y^*)$. First, we show that Algorithm \ref{algo:perturb-phased_cc} is $(\varepsilon/2, \delta/2)$-DP and give the utility bound of its output $\tilde x_K$. Similar to the proof of Theorem \ref{thm:scsc}, we can obtain guarantees of each phase in Algorithm \ref{algo:perturb-phased_cc}. By Lemma \ref{lm:gen-cc}, since $\min\{\mu_k, \mu\}=\mu$, we know that the empirical solution $\hat x_k^*$ has stability $2L/(\bar n\sqrt{\mu_k\mu})$. With the guarantee of the algorithm $\cA$ and the same statement as \eqref{eq-pf:(thm:scsc)-(sensitivity)}, the sensitivity of $x_k$ is bounded by $4L/(\bar n\sqrt{\mu_k\mu})$ with probability $1-\delta/4$, and thus the values of $\sigma_{k}$ guarantee $(\varepsilon/2, \delta/2)$-DP. As a result, by \eqref{eq-pf:(thm:scsc)-(noise)} and \eqref{eq-pf:(thm:scsc)-(acc)},
    \begin{align}
        \bE\norm{\tilde x_k - \hat x_k^*}^2
        & \leq
        2\bE\norm{\tilde x_k-x_k}^2 + 2\bE\norm{x_k-\hat x_k^*}^2 \nonumber \\
        & \leq
        2d_x\sigma_k^2 + \frac{L^2}{4\mu_k}\frac{\delta}{\mu\bar n^2} \nonumber \\
        & \leq
        \frac{257L^2}{\mu_k}
        \frac{d_x\log(5/\delta)}{\mu\bar n^2\varepsilon^2}.
        \label{eq-pf:(thm:cc-primal)-(phase)}
    \end{align}
    
    Then we analyze the full algorithm. By the parallel composition in Lemma \ref{lm:parallel}, Algorithm \ref{algo:perturb-phased_cc} is $(\varepsilon/2, \delta/2)$-DP since we use disjoint datasets for different phases and each phase is $(\varepsilon/2, \delta/2)$-DP. For the utility bound of the output $\tilde x_K$, we start with the following decomposition:
    \begin{equation}
        \max_{y\in\cY}\bE[F(\tilde x_K, y)] - \bE[F(x^*, \hat y_1^*)] =
        \bE[F(\tilde x_K, \hat y_{K+1}^*) - F(\hat x_K^*, \hat y_{K+1}^*)] +
        \sum_{k=1}^K \bE \squareBr[\Big]{
        F(\hat x_k^*, \hat y_{k+1}^*) - F(\hat x_{k-1}^*, \hat y_k^*)},
        \label{eq-pf:(thm:cc-primal)-(error-decomp)}
    \end{equation}
    where we let $\hat x_0^*=x^*$ for the saddle point of $F(x,y)$ and
    $\hat y_{K+1}^*\in\arg\max_{y\in\cY}\bE[F(\tilde x_K, y)]$ to simplify the analysis. The first term in the RHS of \eqref{eq-pf:(thm:cc-primal)-(error-decomp)} can be bounded by Lipschitzness of $F(x,y)$:
    \begin{align}
        \bE[F(\tilde x_K, \hat y_{K+1}^*) - F(\hat x_K^*, \hat y_{K+1}^*)]
        & \leq
        L\sqrt{\bE\norm{\tilde x_K- \hat x_K^*}^2} \nonumber \\
        & <
        \frac{17L^2}{\sqrt{\mu_K\mu}}
        \frac{\sqrt{d_x\log(5/\delta)}}{\bar n\varepsilon} \nonumber \\
        & =
        \frac{17L^2}{\mu\sqrt{n}}
        \frac{\sqrt{d_x\log(5/\delta)}}{\bar n\varepsilon} \nonumber \\
        & \leq
        17LD\cdot\frac{\sqrt{d\log(5/\delta)}}{2\bar n\varepsilon},
        \label{eq-pf:(thm:cc-primal)-(lips)}
    \end{align}
    where the second inequality uses the guarantee of phase $K$ in \eqref{eq-pf:(thm:cc-primal)-(phase)}, the equality is due to the setting that $\mu_K=\mu n$ and the last inequality follows from the choice that $\mu\geq 2L/(D\sqrt{n})$ and $d_x\leq d$. Therefore, with Lemma \ref{lm:gen-algo_cc} to bound the second term in the RHS of \eqref{eq-pf:(thm:cc-primal)-(error-decomp)}, we obtain that
    \begin{align}
        \max_{y\in\cY}\bE[F(\tilde x_K, y)] - \bE[F(x^*, \hat y_1^*)]
        & \leq
        17LD\cdot\frac{\sqrt{d\log(5/\delta)}}{2\bar n\varepsilon} +
        \sum_{k=1}^K \roundBr*{
        \frac{\mu_k}{2}\bE\norm{\hat x_{k-1}^* - \tilde x_{k-1}}^2 + \frac{8L^2}{\mu\bar n}} \nonumber \\
        & \qquad +
        \frac{\mu}{2}\sum_{k=1}^K
        \bE\squareBr*{\norm{\hat y_{k+1}^*}^2-\norm{\hat y_k^*}^2}
        \nonumber \\
        & \leq
        17LD\cdot\frac{\sqrt{d\log(5/\delta)}}{2\bar n\varepsilon} +
        \sum_{k=2}^K\frac{257\mu_k}{2\mu_{k-1}}
        \frac{L^2d_x\log(5/\delta)}{\mu\bar n^2\varepsilon^2} \nonumber \\
        & \qquad +
        \sum_{k=1}^K\frac{8L^2}{\mu\bar n} + \frac{\mu_1}{2}\norm{x^* - x_0}^2 + \frac{\mu}{2}\norm{\hat y_{K+1}^*}^2 \nonumber \\
        & \leq
        4LDK^2\roundBr*{\frac{1}{\sqrt{n}} + \frac{5\sqrt{d\log(5/\delta)}}{n\varepsilon}} +
        \frac{\mu}{2}(2\norm{x^* - x_0}^2 + \norm{\hat y_{K+1}^*}^2)
        \nonumber \\
        & \leq
        8LDK^2\roundBr*{\frac{1}{\sqrt{n}} + \frac{5\sqrt{d\log(5/\delta)}}{n\varepsilon}},
        \label{eq-pf:(thm:cc-primal)-(final)}
    \end{align}
    where the second inequality is due to the guarantees of $\tilde x_{k-1}$ in \eqref{eq-pf:(thm:cc-primal)-(phase)} for $k\geq 2$ and the settings that $\tilde x_0=x_0$, $\hat x_0^*=x^*$, the third inequality holds by the choice that $\mu_k=\mu\cdot 2^k$, $\mu=(L/D)\max\{2/\sqrt{n}, 13\log(n)\sqrt{d\log(5/\delta)}/(n\varepsilon)\}$, and the last inequality follows from the assumptions that $\norm{x^*}^2\leq D^2$ and
    $\norm{\hat y_{K+1}^*}^2\leq D^2$ when the initialization is $x_0=0$. Since $(x^*, y^*)$ is the saddle point of $F(x,y)$, we know that $F(x^*, \hat y_1^*) \leq F(x^*, y^*)$, and thus
    \begin{equation}
        \max_{y\in\cY}\bE[F(\tilde x_K, y)] \leq F(x^*, y^*) + 8LDK^2\roundBr*{\frac{1}{\sqrt{n}} + \frac{5\sqrt{d\log(5/\delta)}}{n\varepsilon}}.
        \label{eq-pf:(thm:cc)-(x)}
    \end{equation}

    Here we give the privacy and utility guarantees of the primal solution $\tilde x_K$. The dual solution comes from a symmetric Algorithm \ref{algo:perturb-phased_cc-II}. The same as the above analysis for Algorithm \ref{algo:perturb-phased_cc}, we can show that Algorithm \ref{algo:perturb-phased_cc-II} is $(\varepsilon/2, \delta/2)$-DP and give the utility bound of its output $\tilde y_K$. Without causing confusion, we borrow notations from Algorithm \ref{algo:perturb-phased_cc} for simplicity. Since everything is very much similar by switching the role of the primal $x$ and the dual $y$, we will not repeat all the details.
    
    First, each phase of Algorithm \ref{algo:perturb-phased_cc-II} is $(\varepsilon/2, \delta/2)$-DP since with probability at least $1-\delta/4$, the sensitivity of $y_k$ is bounded by $4L/(\bar n\sqrt{\mu_k\mu})$. Similar to \eqref{eq-pf:(thm:cc-primal)-(phase)}, the output $\tilde y_k$ for $1\leq k\leq K$ satisfies that
    \begin{equation}
        \bE\norm{\tilde y_k - \hat y_k^*}^2 \leq \frac{257L^2}{\mu_k} \frac{d_y\log(5/\delta)}{\mu\bar n^2\varepsilon^2}.
        \label{eq-pf:(thm:cc)-(phase-y)}
    \end{equation}
    Then by the parallel composition of differential privacy, Algorithm \ref{algo:perturb-phased_cc-II} is $(\varepsilon/2, \delta/2)$-DP. For the utility bound of the output $\tilde y_K$, we have the following error decomposition that mirrors \eqref{eq-pf:(thm:cc-primal)-(error-decomp)}:
    \begin{equation}
        \bE[F(\hat x_1^*, y^*)] - \min_{x\in\cX}\bE[F(x, \tilde y_K)] =
        \sum_{k=1}^K \bE \squareBr[\Big]{
        F(\hat x_k^*, \hat y_{k-1}^*) - F(\hat x_{k+1}^*, \hat y_k^*)} +
        \bE[F(\hat x_{K+1}^*, \hat y_K^*) - F(\hat x_{K+1}^*, \tilde y_K)],
        \label{eq-pf:(thm:cc)-(error-decomp-y)}
    \end{equation}
    where $(\hat x_k^*, \hat y_k^*)$ is the saddle point of the regularized empirical function $\hat F_k(x,y)$ for $1\leq k\leq K$, and we let $\hat y_0^*=y^*$ for the saddle point of $F(x,y)$ and
    $\hat x_{K+1}^*\in\arg\min_{x\in\cX}\bE[F(x, \tilde y_K)]$ to simplify the analysis. The first term in the RHS of \eqref{eq-pf:(thm:cc)-(error-decomp-y)} can be bounded by a similar result as Lemma \ref{lm:gen-algo_cc}. By Corollary \ref{cor:gen-cc} and Lemma \ref{lm:prox-minimax}, setting $x=\hat x_{k+1}^*$ and $y=\hat y_{k-1}^*$, we obtain that
    \begin{equation*}
        \bE[F(\hat x_k^*, \hat y_{k-1}^*) - F(\hat x_{k+1}^*, \hat y_k^*)] \leq \frac{\mu_k}{2}\bE\norm{\hat y_{k-1}^* - \tilde y_{k-1}}^2 + \frac{8L^2}{\mu\bar n} + \frac{\mu}{2}\bE\squareBr*{
        \norm{\hat x_{k+1}^*}^2 -\norm{\hat x_{k}^*}^2}.
    \end{equation*}
    The second term in the RHS of \eqref{eq-pf:(thm:cc)-(error-decomp-y)} can be bounded by Lipschitzness of $F(x,y)$,
    \begin{equation*}
        \bE[F(\hat x_{K+1}^*, \hat y_K^*) - F(\hat x_{K+1}^*, \tilde y_K)]
        \leq 17LD\cdot\frac{\sqrt{d\log(5/\delta)}}{2\bar n\varepsilon},
    \end{equation*}
    which is the same as \eqref{eq-pf:(thm:cc-primal)-(lips)} using the guarantee of $\tilde y_K$ in \eqref{eq-pf:(thm:cc)-(phase-y)}. Finally plugging the above two bounds back into \eqref{eq-pf:(thm:cc)-(error-decomp-y)}, by the same reason as \eqref{eq-pf:(thm:cc-primal)-(final)}, we get that
    \begin{align*}
        \bE[F(\hat x_1^*, y^*)] - \min_{x\in\cX}\bE[F(x, \tilde y_K)]
        & \leq
        \sum_{k=2}^K\frac{257\mu_k}{2\mu_{k-1}}
        \frac{L^2d_y\log(5/\delta)}{\mu\bar n^2\varepsilon^2} + \sum_{k=1}^K\frac{8L^2}{\mu\bar n} \\
        & \qquad +
        \frac{\mu_1}{2}\norm{y^* - y_0}^2 +
        \frac{\mu}{2}\norm{\hat x_{K+1}^*}^2 +
        17LD\cdot\frac{\sqrt{d\log(5/\delta)}}{2\bar n\varepsilon} \\
        & \leq
        8LDK^2\roundBr*{\frac{1}{\sqrt{n}} + \frac{5\sqrt{d\log(5/\delta)}}{n\varepsilon}},
    \end{align*}
    where we use the guarantees of $\tilde y_k$ in \eqref{eq-pf:(thm:cc)-(phase-y)} and the assumption that $\norm{y^*}^2\leq D^2$ and $\norm{\hat x_{K+1}^*}^2\leq D^2$ when the initialization is $y_0=0$. Finally since $(x^*, y^*)$ is the saddle point of $F(x,y)$, we know that $F(\hat x_1^*, y^*) \geq F(x^*, y^*)$, and thus it holds that
    \begin{equation}
        - \min_{x\in\cX}\bE[F(x, \tilde y_K)] \leq - F(x^*, y^*) +
        8LDK^2\roundBr*{\frac{1}{\sqrt{n}} + \frac{5\sqrt{d\log(5/\delta)}}{n\varepsilon}}.
        \label{eq-pf:(thm:cc)-(y)}
    \end{equation}
    
    By basic composition in Lemma \ref{lm:composition}, the composition $(\tilde x_K, \tilde y_K)$ of Algorithm \ref{algo:perturb-phased_cc} and \ref{algo:perturb-phased_cc-II} is $(\varepsilon, \delta)$-DP. The proof is thus complete summing up \eqref{eq-pf:(thm:cc)-(x)} and \eqref{eq-pf:(thm:cc)-(y)}. Note that we only require that $(x^*, y^*)$ and $(\hat x_{K+1}^*, \hat y_{K+1}^*)$ have bounded norms, which is slightly weaker than assuming bounded domains.
\end{proof}

\end{document}